\declaretheorem[numberwithin=section]{thm}
\declaretheorem[sibling=thm]{theorem}
\declaretheorem[sibling=thm]{lemma}
\declaretheorem[sibling=thm]{corollary}
\declaretheorem[numberwithin=section]{assumption}
\declaretheorem[]{challenged assumption}
\declaretheorem[]{definition}
\declaretheorem[]{proposition}
\declaretheorem[style=remark]{example}
\definecolor{pierLink}{RGB}{189, 72, 80}
\definecolor{pierCite}{RGB}{0, 73, 112}
\definecolor{pierComment}{RGB}{100, 143, 255}
\crefname{as}{assumption}{assumptions}
\Crefname{as}{Assumption}{Assumptions}
\newcommand{\robset}{\mathcal{P}_{\text{rob}}}
\newcommand{\betaShat}{\hat{\beta}^{\cS}}
\newcommand{\PSM}{\hat{M}_{\mathrm{seen}}}
\newcommand{\PSMpop}{M_{\mathrm{seen}}}
\newcommand{\PSperpM}{\hat{R}\hat{R}^\top}
\newcommand{\PSperpMpop}{RR^\top}
\newcommand{\Cov}{\mathrm{Cov}}
\newcommand\cB{\mathcal{B}}
\newcommand\cM{\mathcal{M}}
\newcommand\cP{\mathcal{P}}
\newcommand\cN{\mathcal{N}}
\newcommand\cS{\mathcal{S}}
\def\mybigplus{\mathop{\mathchoice{
   \vcenter{\hbox to12bp{\vrule height15bp width0pt \pdfliteral{
   q 1 J .8 w 0 7.5 m 12 7.5 l S 6 1 m 6 14 l S Q
}\hss}}}{
   \vcenter{\hbox to12bp{\kern1bp\vrule height10bp width0pt \pdfliteral{
   q 1 J .65 w 0 5 m 10 5 l S 5 0 m 5 10 l S Q
}\hss}}}{+}{+}
}}
\newcommand{\Loss}{\mathcal{R}}
\newcommand{\R}{\mathbb{R}}
\newcommand\cov{\Sigma}
\newcommand{\Mtest}{M_{\mathrm{test}}}
\newcommand{\Mseen}{M_{\mathrm{seen}}}
\newcommand{\Mseenemp}{\hat{M}_{\mathrm{seen}}}
\newcommand{\Ecal}{\mathcal{E}}
\newcommand{\Ecaltrain}{\mathcal{E}_{\mathrm{train}}}
\newcommand{\Invset}{\Theta_{\mathrm{eq}}}
\newcommand\independent{\protect\mathpalette{\protect\independenT}{\perp}}
\def\independenT#1#2{\mathrel{\rlap{$#1#2$}\mkern2mu{#1#2}}}
\newcommand{\scalar}[2]{ \langle #1 , \; #2 \rangle}
\definecolor{tolblue}{HTML}{4477AA}
\newcommand{\fy}[1]{{\color{blue} Fanny: {#1}}}
\newcommand{\Lossrob}{\Loss_{\mathrm{rob}}}
\newcommand{\betarob}{\beta^{rob}}
\newcommand{\betarobarg}[1]{\beta^{rob}_{#1}}
\newcommand{\cBrobfull}{\cB^{rob}_{\Invset}}
\newcommand{\betabar}{\overline{\beta}}
\newcommand{\gammaprime}{\gamma'}
\newcommand{\wtilde}{\tilde{w}}
\newcommand{\betastar}{\beta^{\star}}
\newcommand{\noisecovxx}{\cov_{\eta}}
\newcommand{\noisecovxxstar}{\cov^\star_{\eta}}
\newcommand{\noisecovxxS}{\cov^{\cS}_{\eta}}
\newcommand{\noisecovxy}{\cov_{\eta,\xi}}
\newcommand{\noisecovxystar}{\cov^\star_{\eta,\xi}}
\newcommand{\noisecovxyS}{\cov^{\cS}_{\eta,\xi}}
\newcommand{\noisecovyy}{(\sigma_\xi)^2}
\newcommand{\noisecovyystar}{{(\sigma_\xi^\star)}^2}
\newcommand{\noisecovyyS}{({\sigma_\xi^\cS})^2}
\newcommand{\norm}[1]{\|#1\|}
\newcommand{\EE}{\mathbb{E}\,}
\newcommand{\prob}{\mathbb{P}}
\newcommand{\probtrain}{\cP_{\Ecaltrain}}
\newcommand{\ptrain}{\prob^{X,Y}_{\mathrm{train}}}
\newcommand{\probtrainarg}[1]{\cP_{#1, \Ecaltrain}}
\newcommand{\PeX}{\prob_e^{X}}
\newcommand{\PeXY}{\prob_e^{X,Y}}
\newcommand{\PrefXY}{\prob_{\mathrm{ref}}^{X,Y}}
\newcommand{\PeXYarg}[1]{\prob^{X,Y}_{#1, e}}
\newcommand{\PoXYarg}[1]{\prob^{X,Y}_{#1, 0}}
\newcommand{\PeA}{\prob^{A}_{e}}
\newcommand{\PeXYA}{\prob_e^{X,Y,A}}
\newcommand{\PXYgivenA}{\prob^{X,Y | A}}
\newcommand{\PeYgivenX}{\prob^{Y | X}_e}
\newcommand{\probtest}{{\prob_{\mathrm{test}}}}
\newcommand{\probtestXY}{\prob_{\mathrm{test}}^{X,Y}}
\newcommand{\probtestXYarg}[1]{\prob_{#1, \mathrm{test}}^{X,Y}}
\newcommand{\probtestA}{\prob^A_{\mathrm{test}}}
\newcommand{\thetastar}{\theta^{\star}}
\newcommand{\thetacS}{\theta^{\cS}}
\newcommand{\Id}{\text{Id}}
\newcommand{\Rtilde}{\tilde{R}}
\newcommand{\betaadv}{\betastar_{\mathrm{adv}}}
\newcommand{\thetaadv}{\theta_{\mathrm{adv}}}
\newcommand{\betarobpi}{\beta^{\mathrm{rob}}_{\Invset}}
\newcommand{\betarobpihat}{\hat{\beta}^{\mathrm{rob}}_{\Invset}}
\newcommand{\Lossrobpi}{\mathfrak{R}_{\mathrm{rob}}}
\newcommand{\Manchor}{M_{\mathrm{anchor}}}
\newcommand{\Mdrig}{M_{\mathrm{DRIG}}}
\newcommand{\Mnew}{M_{\mathrm{new}}}
\newcommand{\Cker}{C_{\mathrm{ker}}}
\newcommand{\Ckerhat}{\hat{C}_{\mathrm{ker}}
}
\newcommand{\range}{\mathrm{range}}
\newcommand{\Stot}{S_\mathrm{tot}}
\newcommand{\betaa}{\beta_{\text{anchor}}}
\newcommand{\Xe}{X_e}
\newcommand{\Xref}{X_{\mathrm{ref}}}
\newcommand{\Ye}{Y_e}
\newcommand{\Ae}{A_e}
\newcommand{\mue}{\mu_e}
\newcommand{\muref}{\mu_{\mathrm{ref}}}
\newcommand{\Sigmae}{\Sigma_e}
\newcommand{\Sigmaref}{\Sigma_{\mathrm{ref}}}
\newcommand{\Sigmastar}{\Sigma^{\star}}
\newcommand{\rnk}{\text{rank }}
\newcommand{\idset}{observationally equivalent set\,}
\newcommand{\betaOLS}{\beta_{\mathrm{OLS}}}
\newcommand{\betaOLShat}{\hat{\beta}_{\mathrm{OLS}}}
\newcommand{\betaahat}{\hat{\beta}_{\mathrm{anchor}}}
\newcommand{\betaDRIG}{\beta_{\mathrm{DRIG}}}
\newcommand{\Xtest}{X^{\text{test}}}
\newcommand{\Atest}{A_{\text{test}}}
\newcommand{\projM}{\Pi_{\cM}}
\newcommand{\mutest}{\mu_{\text{test}}}
\newcommand{\Sigmatest}{\Sigma_{\text{test}}}
\newcommand{\cSperp}{\cS^\perp}
\newcommand{\cShat}{\hat{\cS}}
\newcommand{\cSperphat}{\hat{\cS}^\perp}
\newcommand{\Shat}{\hat{S}}
\newcommand{\Rhat}{\hat{R}}
\newcommand{\RRhat}{\Rhat \Rhat^\top}
\newcommand{\PicShat}{\Pi_{\cShat}}
\newcommand{\PicSperphat}{\Pi_{\cSperphat}}
\newcommand{\betastarS}{\beta^{\cS}}
\newcommand{\betaS}{\beta^{\cS}}
\newcommand{\thetastarS}{\theta^{\cS}}
\newcommand{\betastarperp}{\beta^{\cSperp}}
\newcommand{\covAe}{\Sigma_e}
\newcommand{\gammath}{\gammaprime_{\mathrm{th}}}
\newcommand{\Robsetarg}[1]{\cP_{#1}(\Mtest)}
\newcommand{\supalpha}{\sup_{\substack{\alpha \in \cSperp, \\ \| \alpha \|_2 \leq \Cker} }}
\newcommand{\cmark}{\ding{51}}%
\newcommand{\xmark}{\ding{55}}%
\newcommand{\minimaxPIloss}{\mathfrak{M}(\Invset, \Mtest)}
\newcommand{\minimaxPIlossarg}[1]{\mathfrak{M}(\Invset, #1)}
\newcommand{\idRRs}{worst-case robust risk }
\newcommand{\idRR}{worst-case robust risk}
\newcommand{\IdRR}{Worst-case robust risk }
\newcommand{\idpred}{worst-case robust predictor}
\newcommand{\trace}{\mathrm{tr}\,}
\newcommand{\Iobs}{\mathcal{O}}
\newcommand{\Iint}{\mathcal{I}}
\newcommand{\Dtrain}{\mathcal{D}^{\text{train}}}
\newcommand{\cD}{\mathcal{D}}
\newcommand{\LL}{\mathcal{L}}
\newcommand{\LLref}{\mathcal{L}_{\mathrm{ref}}}
\newcommand{\LLinv}{\mathcal{L}_{\mathrm{inv}}}
\newcommand{\LLid}{\mathcal{L}_{\mathrm{id}}}
\newcommand{\Petaxi}{\mathbb{P}^{\eta,\xi}}
\definecolor{asparagus}{rgb}{0.53, 0.66, 0.42}
\definecolor{dollarbill}{rgb}{0.52, 0.73, 0.4}
\newcommand{\noteby}[3]{{\colorbox{#2}{\bfseries\sffamily\scriptsize\textcolor{white}{#1}}}{\textcolor{#2}{\sf\small\textit{#3}}}}
\newcommand{\julia}[1]{\noteby{Julia}{blue!80!green}{ #1}}
\renewcommand{\fy}[1]{\noteby{Fanny}{dollarbill}{ #1}}
\DeclareMathOperator*{\argmin}{arg\,min}
\renewcommand{\cmark}{\ding{51}}%
\renewcommand{\xmark}{\ding{55}}%
\renewcommand{\R}{\mathcal{R}}
\title{Achievable distributional robustness when the robust risk is only partially identified}
\author[1]{Julia Kostin}
\author[2]{Nicola Gnecco\thanks{This work was conducted while NG was at the Gatsby Computational Neuroscience Unit, University College London.}}
\author[1]{Fanny Yang}
\affil[1]{\small Department of Computer Science, ETH Zurich}
\affil[2]{\small Department of Mathematics, Imperial College London}
\begin{document}

\maketitle

\begin{abstract}

  In safety-critical applications, machine learning models should generalize well under worst-case distribution shifts, that is, have a small robust risk. Invariance-based algorithms 
  can provably take advantage of structural assumptions on the shifts when the training distributions are heterogeneous enough to identify the robust risk. However, in practice, such identifiability conditions are rarely
  satisfied -- a scenario so far underexplored in the theoretical literature. In this paper, we aim to fill the gap and propose to study the more general setting when the robust risk is only \emph{partially identifiable}. In particular, 
  we introduce the \emph{worst-case robust risk} as a new measure of robustness that is always well-defined regardless of identifiability.
  Its minimum corresponds to
  an algorithm-independent (population) minimax quantity that measures the \emph{best achievable robustness} under partial identifiability.
  While these concepts can be defined more broadly, 
  in this paper 
  we introduce and derive them explicitly for a linear model
  for concreteness of the presentation. 
  First, we show that existing robustness methods are provably suboptimal in the partially identifiable case.
  We then evaluate these methods and the minimizer of the (empirical) worst-case robust risk on real-world gene expression data and find a similar trend:
  the test error of existing robustness methods grows increasingly suboptimal as the 
  fraction of data from unseen environments increases, whereas accounting for partial identifiability allows for better generalization.
  \end{abstract}

  \section{Introduction}\label{sec:intro}

The success of machine learning methods typically relies on the assumption that the training and test data follow the same distribution. However, this assumption is often violated in practice. For instance, this can happen if the test data are collected at a different point in time or space, or using a different measuring device.
Without further assumptions on the test distribution, 
generalization under distribution shift is impossible.
However, practitioners often have partial information about the 
set of possible "shifts" 
that may occur during test time, inducing a set of \emph{feasible test distributions} that the model should generalize to. We refer to the resulting set as the \emph{robustness set}. 
When a probabilistic model for these possible test distributions is available or estimable, one may aim 
for good performance on a "typical" held-out distribution using a probabilistic framework. 
When no extra information is given, one possibility is to find a model $\beta$ that has a small risk $\Loss(\beta;\prob)$ on the \emph{hardest} feasible test distribution. More formally, we aim to achieve a small
robust risk defined by
\begin{equation}\label{eq:conventional-robust-risk}
   \Lossrob(\beta) \coloneqq \sup_{\prob \in \robset(\thetastar)} \Loss (\beta; \prob),
\end{equation}
where $\robset(\thetastar)$ corresponds to the robustness set which depends on 
some true parameter $\thetastar$.
In fact, this worst-case robustness aligns with security and safety-critical applications, where a small robust risk is necessary to confidently guard against possible malicious attacks.

\par

To find a robust prediction model that minimizes \eqref{eq:conventional-robust-risk}, existing lines of work in distributional robustness assume a known robustness set, i.e., full knowledge of the robust risk objective. They then focus on how to minimize the resulting objective. For instance, in distributionally robust optimization \citep{bental2013robust, duchi2021learning}, or relatedly, adversarial robustness  \citep{goodfellow2014explaining, madry2018towards}, the robustness set is chosen to be some neighborhood (w.r.t. to a distributional distance notion) of the training distribution $\prob$. When multiple training distributions are available, related works aim to achieve robustness against the set of all convex combinations of the training distributions \citep{mansour2008, meinshausen2014, sagawa2019distributionally}.  
In causality-oriented robustness  (see, e.g. \citep{buhlmann2020invariance,meinshausen2018causality,shen2023causalityoriented}) on the other hand, the robustness set is not explicitly given, but implicitly defined. Specifically, it is assumed that certain structural parameters (e.g. specific parts of the structural causal model) 
remain invariant across distributions, whereas other parameters shift. 
The robustness set may or may be not be fully known during training time, depending on the relationship of the varying parameters during training and shift time. 


If the robust risk objective is known, methods can be derived to estimate its minimizer. However, in many scenarios such procedures suffer from ineffectiveness.
For adversarial robustness for example, it is known that when the perturbations during training and test time differ, the robustness resulting from adversarial and standard training is comparable 
(see, e.g. \cite{Tramer19,kang19}). Similarly, invariance-based methods such as \citep{peters2016causal,rojas2018invariant,arjovsky2020invariant,krueger2021out} often exhibit no advantage over vanilla empirical risk minimization \citep{ahuja2020empirical,ahuja2020invariant}.
In both cases, one of the main failure reasons is that the robust objective, which the final model is evaluated on, is not known during training.  For instance, invariance-based methods often fail on new environments if the true invariant predictor is not identifiable (e.g. \citep{kamath2021does,rosenfeld2020risks}). As a possible solution, in some recent works \citep{rothenhausler2021anchor,shen2023causalityoriented}, the set of feasible test distributions is chosen in a specific way which renders the robust risk objective computable despite the non-identifiability of the invariant parameters. 
In total, the theoretical analysis in prior work remains rather of ``binary'' nature: it either analyzes the fully identifiable case in which invariance-based methods are successful, or simply discards non-identifiable scenarios as failure cases without further quantification of the limits of robustness.
In this paper, we aim to include the partially identifiable setting\footnote{Here, we mean partial identifiability of the robust risk, which is reminiscent of outputting uncertainty sets for a quantity of interest in the field of partial identification \cite{tamer2010partial, frake2023perfect}.} in our analysis and more specifically discuss the following question:

\begin{center}
\emph{
What is the optimal worst-case performance any model can have for given structural relationships between test and training distributions, and how do existing methods comparatively perform?}
\end{center}

When the robust risk is not identifiable from the collection of training distributions, 
we obtain a whole \emph{set} of possible objectives -- all compatible with the training distributions -- that includes 
the true robust risk. 
In this case, we are interested in the best achievable robustness for \emph{any algorithm}
that we capture in a quantity
called the \emph{\idRR}:
\begin{align}
\label{eq:identifiable-robust-risk}
    \Lossrobpi(\beta) := \sup_{\substack{\text{possible}\\ \text{true model $\thetastar$ }}}  \sup_{\prob \in \robset(\thetastar)} \Loss (\beta; \prob).
\end{align}
Note that $\Lossrobpi(\beta)$ is well-defined 
even when the standard robust risk is not identifiable -- 
it takes the supremum over the robust risks induced by 
all model parameters $\thetastar$
that are consistent with the given set of training distributions. 
Furthermore, the minimal value of the worst-case robust risk corresponds to the optimal worst-case performance in the partially identifiable setting.
Spiritually, this \emph{minimax population quantity} is reminiscent of the 
algorithm-independent limits in classical statistical learning theory \cite{yu1997assouad}.\footnote{In particular, extending \eqref{eq:identifiable-robust-risk} to its finite-sample counterpart would introduce a more natural extension of the classical minimax risk statistical learning theory.  In this work, we focus on identifiability aspects instead of statistical rates.}
Even though our partial identifiability framework 
can be evaluated for arbitrary 
modeling assumptions on the distribution shift (such as covariate/label shift, DRO, etc.), 
we present it
in \Cref{sec:setting} for a concrete linear setting for clarity of the exposition.
Specifically, the setting is motivated by structural causal models (SCMs) with unobserved confounding (cf. \cref{sec:setting}), similar to the setting of instrumental variables (IV) and anchor regression \citep{rothenhausler2021anchor,saengkyongam2022exploiting}. Concurrent to our work, \cite{bellot2022partial} proposed a similar framework derived explicitly from structural causal models, with quantities that are closely related to our worst-case robust risk and its minimizer. 

In \Cref{sec:main-results}, we derive the \idRR~\eqref{eq:identifiable-robust-risk} and its minimum for the linear setting, and show theoretically and empirically that the ranking and optimality of different robustness methods change drastically in identifiable vs. partially identifiable settings.  Further, although the worst-case robust predictor derived in the paper is only provably optimal for the linear setting, experiments on real-world data in \Cref{sec:experiments} suggest that our estimator may significantly improve upon other invariance-based methods in more realistic scenarios. 
Our experimental results provide evidence that evaluation and benchmarking on partially identifiable settings are important for determining the effectiveness of robustness methods.

  \section{Setting}
  \label{sec:setting}
In this section, we state the concrete distributional setting on which we introduce our partial identifiability framework. In particular, we consider a data generating process, motivated by  structural causal models (SCMs), that allows for hidden confounding, i.e., spurious correlations between the covariates $X$ and the target $Y$. We describe the structure of the distribution shifts occurring in the training and test environments, which is reminiscent of  interventions in causal models.
Finally, we introduce our framework for distributional robustness that allows for partial identifiability and define the \emph{\idRR} – for any given model, it corresponds to the maximum robust risk among all possible robust risks compatible with the training distributions. 
\subsection{Data distribution and a model of additive  environmental shifts}\label{sec:training-data}

\textbf{Data generating process (DGP).} We first describe the data-generating mechanism that underlies the distributions of all environments $e \in \Ecal$ that may occur during train or test time.
For each environment $e \in \Ecal$, we observe the random vector  $(\Xe, \Ye) \sim \PeXY$ consisting of input covariates $\Xe \in \R^d$ and  a target variable $\Ye \in \R$ which satisfy the following data generating process: 
\begin{align}
\label{eqn:SCM}
\begin{split}
    \Xe &= \Ae + \eta, \\ 
    \Ye &= {\betastar}^\top \Xe + \xi,
\end{split}
\end{align}
where $\Ae \in \R^d$, $(\eta,\xi) \in \R^{d+1}$ are random vectors such that $\Ae \sim \PeA$, $(\eta,\xi) \sim \Petaxi$ with finite first and second moments, and $\Ae \independent (\eta, \xi)$ for all $e \in \Ecal$. 

Note how in this setting, the parameter $\betastar$ and the distribution $\Petaxi$ of the noise vector $(\eta,\xi)$ remain invariant across environments. Without loss of generality, we assume that the noise $(\eta,\xi)$ has mean zero. Our linear setting
is, in general, more challenging than the standard linear regression setting where $\eta \independent \xi$: due to possible dependencies between $\eta$ and $\xi$ (induced by, e.g., \emph{hidden confounding/spurious features}), classical estimators, such as the ordinary least squares, are biased away from the true parameter~$\betastar$. 
Denote by $\Sigmastar \coloneqq \Cov((\eta,\xi))$ the joint covariance of the noise vector $(\eta,\xi)$, which can be written in block form as  $\Sigmastar = \begin{pmatrix}\noisecovxxstar& \noisecovxystar\\{\noisecovxystar}^\top&\noisecovyystar\end{pmatrix}$ and which we assume to be full-rank. In the following, we denote the concatenation of these two invariant parameters by $\thetastar := (\betastar, \Sigmastar) \in \Theta \subset  \R^{d}\times \R^{(d+1)\times(d+1)}$, where $\Theta$ is an appropriate parameter space. 

\textbf{Structure of the distribution shifts.}
Note that in the DGP, the distribution shifts across $\PeXY$ are induced solely by changes in the distribution of the variable $\Ae$, whose mean and covariance matrix we denote by $\mu_e \coloneqq \EE[\Ae]$ and $\covAe \coloneqq\Cov [\Ae]$, respectively. 
In general, we allow for degenerate and deterministic shifts, i.e. the covariance $\covAe$ can be singular or zero.  We remark that although the additive shift structure in \cref{eqn:SCM} allows the joint distribution $\PeXYA= \PeA \times \PXYgivenA$ to change solely via $\PeA$, our distribution shift setting is more general than covariate shift: due to the noise variables $\eta$ and $\xi$ being potentially dependent, both the marginal $\PeX$ and the conditional distribution $\PeYgivenX$ can change across environments. 

\textbf{Training and test-time environments.} 
Throughout the paper, we assume that we are given the collection of training distributions $\probtrainarg{\thetastar} = \{ \PeXYarg{\thetastar} \}_{e \in \Ecaltrain}$, where $\Ecaltrain$ denotes the index set of training environments. We omit $\thetastar$ when it is clear from the context. Further, for ease of exposition, we  assume that $\Ecaltrain$ contains a reference (unshifted) environment $e = 0$ with $A_0 = 0$ a.s. In \cref{sec:apx-extension}, we discuss how our results apply if this condition is not met.
During test time, we expect to observe a new, previously unseen distribution $\probtestXY$ which is induced by the DGP \eqref{eqn:SCM} and a shift random variable $\Atest \sim \probtestA$, with corresponding finite mean $\mutest$  and covariance $\Sigmatest$.

Even though we do not have access to $\probtestXY$ during training,
the practitioner might have some information about the possible
shift distributions $\probtestA$ that may occur during test time. 
As an example, we may 
only have information about the maximum possible magnitude and approximate direction of the test-time mean shift $\EE[\Atest]$. 
In this work, we assume that we are given 
an upper bound on the second moment of the shift variable, represented by a positive semidefinite (PSD) matrix $\Mtest\succeq 0$ such that 
\begin{align}\label{eqn:testAbound}
    \EE[ \Atest {\Atest}^\top] = \Sigmatest + \mutest {\mutest}^\top \preceq \Mtest.
\end{align}
In practice, there may be different degrees of knowledge of the feasible set of shifts -- when no knowledge is available, one can always choose the most "conservative" bound $\Mtest$ with the range equal to $\R^{d}$ and large eigenvalues. The more information is available, the smaller the feasible set of test distributions becomes. For instance, when 
the test distribution 
$\prob_{\text{test}}^X$
of $X$ is available during training (as in the \emph{domain adaptation} setting \cite{shimodaira2000improving}), one can directly compute the optimal shift upper bound via $\Mtest =  \EE[\Xtest {\Xtest}^\top] $.
In existing literature, $\Mtest$ is often proportional to the pooled first or second moment of the training shifts, for instance $\Mtest = \gamma \sum_{e \in \Ecaltrain} w_e \mu_e \mu_e^\top$ in discrete anchor regression \cite{rothenhausler2021anchor} or $\Mtest = \gamma \sum_{e \in \Ecaltrain} w_e (\mu_e \mu_e^\top + \Sigma_e)$ in causality-oriented robustness with invariant gradients \cite{shen2023causalityoriented}.
Here, $w_e$ are the weights representing the probability of a datapoint being sampled from the environment $e$. As will become apparent in the next sections, our population-level results are not impacted by the weights $w_e$, which we thus omit in the following.

We now provide an example based on structural causal models (SCM) that falls under the aforementioned distrubtion shift setting. 
\begin{example}
    Consider the structural causal model \eqref{eqn:SCM-new} and its induced graph in \cref{fig:ex-scm}. In this model, interventions on the variable $Z$ correspond to soft interventions on the covariates $X$. Additionally, the exogenous noise vector $(\varepsilon_X, \varepsilon_Y, \varepsilon_H)$ and the intervention variable $Z$ are mutually independent. This model is the basis of multiple causality-oriented robustness works, e.g. \citep{rothenhausler2021anchor,shen2023causalityoriented}.
    Let $\betastar \coloneqq B_{YX}^\top$ and $\xi \coloneqq B_{YH}H + \varepsilon_Y$. Then, from~\eqref{eqn:SCM-new}, we obtain $Y = B_{YX} X + (B_{YH} H + \varepsilon_Y)=X^\top\betastar + \xi$.
    Suppose that $\mathbf{I} - \mathbf{B}$ is invertible and let $\mathbf{C}\coloneqq (\mathbf{I} - \mathbf{B})^{-1}$with entries $C_{XX}, C_{XY}$, etc. Define $A \coloneqq C_{XX} Z$ and $\eta \coloneqq C_{XX} \varepsilon_X + C_{XY} \varepsilon_Y + C_{XH} \varepsilon_H$. Then, we can write $X = A + \eta$.
    Since shifts in the distribution of $Z$ induce shifts in the distribution of $A$, a collection of interventions $\{Z_e\}_{e \in \Ecaltrain}$ translates into a collection of additive shifts $\{A_e\}_{e\in\Ecaltrain}$ and gives rise to training distributions varying with the environment $e$. In summary, our DGP \eqref{eqn:SCM} includes the classical setting of causality-oriented robustness as depicted in \cref{fig:ex-scm}. 
    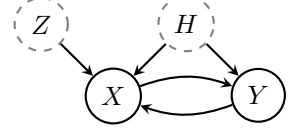
\begin{figure}[h!]\label{fig:fig1setting}
\begin{subfigure}{.55\textwidth}
\begin{align}
\label{eqn:SCM-new}
\begin{split}
    \begin{pmatrix}
        X\\
        Y\\
        H
    \end{pmatrix}
    =
    \underbrace{\begin{pmatrix}
        B_{XX} & B_{XY} & B_{XH}\\
        B_{YX} & 0 & B_{YH}\\
        0 & 0 & 0
    \end{pmatrix}
    }_{
    \mathbf{B}
    }
    \begin{pmatrix}
        X\\
        Y\\
        H
    \end{pmatrix}
    +
    \begin{pmatrix}
        Z + \varepsilon_X\\
        \varepsilon_Y\\
        \varepsilon_H    
    \end{pmatrix}
\end{split}
\end{align}
\end{subfigure}
\begin{subfigure}{.45\textwidth}
  \centering
  \begin{tikzpicture}
  \tikzset{
    observed/.style={circle, draw=black, fill=white, thick, minimum size=7mm},
    latent/.style={circle, draw=gray, fill=white, dashed, thick, minimum size=7mm},
    arrow/.style={->, thick, >=stealth},
    bidirectional/.style={<->, thick, black}
  }

  \node[observed] (X)                        {$X$};
  \node[latent, above left=.6cm of X] (A) {$Z$};
  \node[latent, above right=.6cm of X] (H) {$H$};
  \node[observed, below right=.6cm of H] (Y) {$Y$};  

  \draw[arrow] (A) -- (X);
  \draw[arrow] (X) to[bend left=20] (Y);
  \draw[arrow] (Y) to[bend left=20] (X);
  \draw[arrow] (H) -- (X);
  \draw[arrow] (H) -- (Y);
\end{tikzpicture}
\end{subfigure}%
\caption{\small{(Left) SCM with hidden confounding and (right) induced graph. The model allows for an arbitrary causal structure of the observed variables $(X,Y)$, as long as $\mathbf{I} - \mathbf{B}$ is invertible, e.g. when the underlying graph is acyclic. The shifts across different distributions are captured via shift interventions on $X$. However, the model does not allow for interventions on the target variable $Y$ or hidden confounders $H$.}}
\label{fig:ex-scm}
\end{figure}
\end{example}

\subsection{The robust risk}\label{sec:formulation-distributional-robustness}
Our goal is to find a model $\beta \in \R$ using the training distributions
that has a small risk
over the robustness set.
In this paper, we consider as risk function the expected square loss  $\Loss(\beta; \prob) \coloneqq \EE_{\prob} [(Y - \beta^\top X)^2]$.
In our setting, given a test shift upper bound $\Mtest$ defined in \Cref{eqn:testAbound}, the robustness set corresponds to
\begin{align}\label{eqn:robustness-set}
    \Robsetarg{\thetastar} := \{ \probtestXYarg{\thetastar}: \:  \EE [ \Atest {\Atest}^\top ] \preceq \Mtest \},
\end{align}
yielding the corresponding robust risk that reads

\begin{align}\label{eqn:robust-risk}
    \Lossrob(\beta;\thetastar, \Mtest) \coloneqq \sup_{\prob \in \Robsetarg{\thetastar}} \Loss( \beta; \prob). 
\end{align}

We call the minimizer of the robust risk 
 $\betarob_{\thetastar} \coloneqq \argmin_{\beta \in \R^d} \Lossrob(\beta; \thetastar,\Mtest) $ the \emph{robust predictor}.
%
 For the squared loss and linear model, the robust risk and the robust predictor can be computed in closed form and \emph{solely} depend on $\Mtest$ and the invariant parameters $\thetastar = (\betastar,\Sigmastar)$, 
and not on other properties of the distributions:
\begin{align}\label{eqn:formula-robust-predictor}
\begin{split}
        \Lossrob(\beta, \thetastar, \Mtest) &=  (\betastar - \beta)^\top (\noisecovxxstar + \Mtest)(\betastar - \beta) + 2(\betastar-\beta)^\top\noisecovxystar + {\noisecovyystar}. 
        \\
        \betarobarg{\thetastar}  &= \argmin_{\beta \in \R^d} \Lossrob(\beta, \thetastar, \Mtest) = \betastar + (\Mtest + \noisecovxxstar)^{-1} \noisecovxystar.
\end{split}
\end{align}
\textbf{Induced equivalence of data generating processes.} This observation motivates us to define an equivalence relation between two data-generating processes that holds whenever they induce the same robust risk for any model $\beta$ and shift upper bound $\Mtest$. Specifically, observe that $\mathrm{DGP}_1$ and $\mathrm{DGP}_2$ induce the same robust risks for all $\Mtest$ and $\beta$ iff $\betastar_1 = \betastar_2$ and $\Petaxi_1 \cong \Petaxi_2$, where $\cong$ denotes the equivalence of distributions based on equality of their first and second moments. 
Thus, in the following, we treat our data-generating process as uniquely defined by $\thetastar =(\betastar, \Sigmastar)$ up to this equivalence relation.

In practice, the model parameters $\thetastar$ typically cannot be identified from the training distributions,
and the robust risk $\Lossrob$ can only be computed for specific combinations of training and test shifts, studied, e.g., in \citep{rothenhausler2021anchor,shen2023causalityoriented}. In the next section, we describe concepts that allow us to 
reason about robustness even when the robust risk is only partially identifiable.

\subsection{Partially identifiable robustness framework}\label{sec:prop-invariant-set}
We start by formally introducing the general notion of partial identifiability for the robust risk.
The following notion of \emph{observational equivalence} of parameters is reminiscent of the
corresponding notion in the econometrics literature \cite{Dufour2010Econometrics}:
\begin{definition}[Observational equivalence]
 Two model parameter vectors $\theta_1 = (\beta_1, \Sigma_1)$ and $\theta_2 = (\beta_2, \Sigma_2)$ are \textbf{observationally equivalent} with respect to a set of shift distributions $\{ \PeA: e \in \Ecaltrain \}$\footnote{The distributions $\PeA$ are to be understood up to the equivalence relation $\cong$. In general, the distributions $\PeA$ are unknown, since the shift variables $\
 \Ae$ are unobserved. However, in our setting, $\PeA$ can be identified up to the second moment because of the reference environment.}  if they induce the same set $\probtrainarg{\theta}$ of training distributions over the observed variables $(X_e,Y_e)$ as 
described in \Cref{sec:training-data},
 i.e.
 \begin{align*}
    \PeXYarg{\theta_1} \cong \PeXYarg{\theta_2} \: \text{for all } e \in \Ecaltrain.
 \end{align*}
    By observing $\probtrainarg{\thetastar}$, we can identify the model parameters $\thetastar$ up to the \textbf{\idset} defined as 
    \begin{align*}
       \Invset := \{ \theta = (\beta, \Sigma) \in \Theta: \probtrainarg{\theta} \cong \probtrainarg{\thetastar} \}. 
    \end{align*}
\end{definition}

In general, the \idset is not a singleton, 
that is, $\thetastar$ is not identifiable from the collection of training environments $\probtrainarg{\thetastar}$.
However, prior work has exclusively considered test shifts $\Mtest$ that still allow identifiability of the robust risk nonetheless, depicted in \cref{fig:sub1-identified} and discussed again in \cref{sec:comp-with-finite-robustness-methods}. 
In this work, we argue for analyzing the more general partially identifiable setting, where set-identifiability of the invariant parameter $\thetastar$
only allows us to compute a superset
of the robustness set
\begin{align*}
    \Robsetarg{\Invset} := \bigcup_{\theta \in \Invset} \Robsetarg{\theta} \supset\Robsetarg{\thetastar} 
\end{align*}
and correspondingly, a set of robust risks $\{\Lossrob(\beta; \theta,\Mtest): \, \theta \in \Invset \}$.
In this case, we would still like to achieve the 
``best-possible'' robustness, that is 
the test shift robustness
for the ``hardest-possible'' 
parameters that could have induced the observed training distributions.
\begin{definition}[\IdRR and the minimax quantity]\label{def:PI-robust-loss}
    For the data model in \Cref{eqn:SCM}, the \idRRs is defined as
    \begin{align}
    \label{eqn:PI-robust-loss} 
        \Lossrobpi(\beta; \Invset, \Mtest) :=   \sup_{\theta \in \Invset} \Lossrob  (\beta;\theta, \Mtest). 
    \end{align}
The optimal robustness on test shifts bounded by $\Mtest$ given training data $\probtrainarg{\thetastar}$ is described by the minimax quantity 
\begin{align}
\label{eqn:minimax-quantity}
    \minimaxPIloss  = \inf_{\beta \in \R^d}\Lossrobpi( \beta; \Invset, \Mtest ).
\end{align}
When a minimizer of \Cref{eqn:minimax-quantity} exists, we call it the \idpred\, defined by
    \begin{equation}
        \betarobpi = \argmin_\beta \Lossrobpi(\beta; \Invset, \Mtest)
    \end{equation}
\end{definition}
\begin{figure}
\centering
\begin{subfigure}{.5\textwidth}
  \centering
  \includegraphics[width=\textwidth]{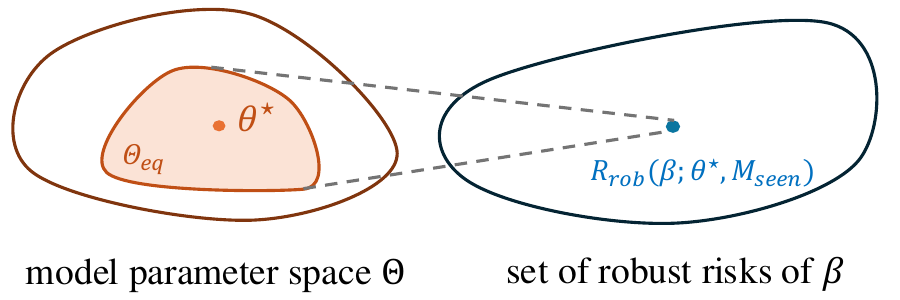}
  \caption{Identifiable robust risk}
  \label{fig:sub1-identified}
\end{subfigure}%
\begin{subfigure}{.5\textwidth}
  \centering
  \includegraphics[width=\textwidth]{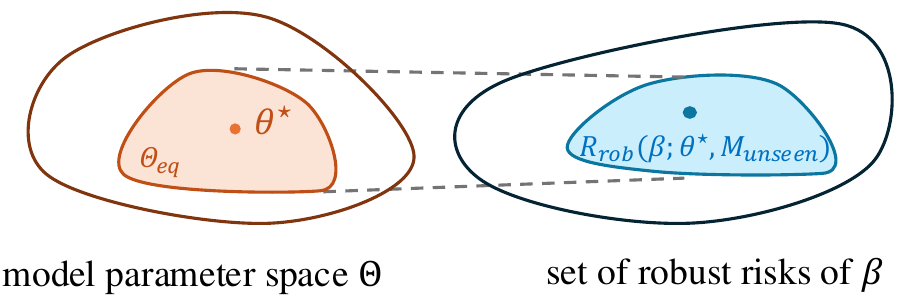}
  \caption{Partially identifiable robust risk}
  \label{fig:sub2-nonidentified}
\end{subfigure}
\caption{\small{Relationship between identifiability of the model parameters and identifiability of the robust risk. (a) The classical scenario where the test shift upper bound $\Mtest = M_{\text{seen}}$ is contained in the span of training shifts so that 
the robust risk is point-identified. (b) The more general scenario of this paper, where $\Mtest=M_{\text{unseen}}$ contains new shift directions and where only a set can be identified in which the true robust risk lies. }}
\label{fig:test}
\end{figure}

In the next sections, we 
explicitly compute these quantities for the linear setting of \cref{sec:training-data}.
This will allow us to compare
the best achievable robustness in the partially identified case with the guarantees of prior methods in this setting.
  \section{Theoretical results for 
  the linear setting}
  \label{sec:main-results}
  

We now compute the \idRRs \eqref{eqn:PI-robust-loss} and derive a lower bound for  the minimax quantity \eqref{eqn:minimax-quantity} in the linear additive shift setting of \cref{sec:training-data}.
We then compare the \idRRs of some existing robustness methods and ordinary least squares (OLS) with the minimizer of the \idRRs both theoretically and empirically.

\subsection{Minimax robustness results for the linear setting}\label{sec:main-results-minimax}
\label{sec:identifiability-linear-SCM}

The degree to which the model parameters $\thetastar$  in the linear additive shift setting \eqref{eqn:SCM} can be identified depends on the number of environments and the total rank of the moments of the additive shifts. For structural causal models, this is well-studied, for instance, in the instrumental variable (IV) regression literature \citep{ amemiya1985advanced, bowden1990instrumental}. 
As we show in \Cref{prop:invariant-set}, the true parameter $\betastar$ can \emph{only} be identified along the directions of the training-time mean and variance shifts $\mue$ and $\Sigmae$. 
Therefore, if not enough shift directions are observed, $\betastar$ is merely \emph{set-identifiable}, leading to 
 set-identifiability of the robust prediction model \eqref{eqn:formula-robust-predictor}. 
More formally, we denote by $\cS$ the subspace consisting of all \emph{additive shift directions seen during training}:
\begin{align}\label{eqn:def-S}
    \cS := \range \left[ \sum_{e \in \Ecaltrain} \left( \Sigma_e + \mu_e \mu_e^\top\right) \right],
\end{align} 
and by $\cSperp$ its orthogonal complement.
The definition of the space $\cS$ induces an orthogonal decomposition of the true parameter $\betastar = \betastarS + \betastarperp$. The \emph{identifiable part} $\betastarS$ then uniquely defines a set of \emph{identified model parameters} that reads 
\begin{align*}
    \thetastarS := (\betaS, \noisecovxxS, \noisecovxyS, \noisecovyyS) = (\betastarS, \noisecovxxstar, \noisecovxystar + \noisecovxxstar \betastarS, \noisecovyystar + 2 \langle \noisecovxystar, \betastarS\rangle + \langle \betastarS, \noisecovxxstar \betastarS\rangle)
\end{align*}
that can be computed from the training distributions. For the following results, we assume a similar decomposition of the test shift upper bound $\Mtest$ which is essentially a decomposition into "seen" and "unseen" directions.
\begin{assumption}[Structure of $\Mtest$]\label{as:Mtest-structure}
   We assume that $\Mtest = \gamma \Mseen + \gammaprime R R^\top$, where $\gamma, \gammaprime \geq 0$, $\Mseen$ is a PSD matrix satisfying $\range(\Mseen) \subset \cS$ and $R$ is a semi-orthogonal matrix satisfying $\range(R) \subset \cSperp$.
\end{assumption}
In the next proposition, we show that the model parameters and robust predictor can be identified up to a neighborhood around $\thetastarS$. 

 

\begin{proposition}[Identifiability of model parameters and robust predictor]
\label{prop:invariant-set} Suppose that the set of training and test distributions is generated according to \cref{sec:training-data} and Assumption~\ref{as:Mtest-structure} holds.
Then,
\begin{enumerate}[(a)]
 \item 
 the model parameters 
 generating the training distribution \eqref{eqn:SCM} can be identified up to the following \idset: 
\begin{align}\label{eqn:def-invariant-set}
  \Invset =  \Theta \cap \{ \betastarS + \alpha, \noisecovxxstar, \noisecovxyS - \noisecovxxstar \alpha, \noisecovyyS - 2 \alpha^\top \noisecovxyS + \alpha^\top \noisecovxx \alpha \colon \alpha \in \cSperp \}  \ni \thetastar;
\end{align}
\item 
the robust predictor $\betarob_\theta$ as defined in \cref{eqn:formula-robust-predictor} is identified up to the set
    \begin{align}\label{eqn:def-rob-pred-identif}
    \cBrobfull \cap \{ \betastarS + (\Mtest + \noisecovxxstar)^{-1} \noisecovxyS +  (\Mtest + \noisecovxxstar)^{-1} \alpha\colon \, \alpha \in \range(R)  \}   \ni \betarob_{\theta}, 
    \end{align}
\end{enumerate}
where $\cBrobfull = \{ \betarob_{\theta}: \theta \in \Invset \}$. 
\end{proposition}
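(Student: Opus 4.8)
The plan is to prove both parts by reducing observational equivalence to a system of moment-matching equations, exploiting the fact established in the ``induced equivalence'' paragraph of \Cref{sec:formulation-distributional-robustness} that two data-generating processes are equivalent precisely when they agree on the first and second moments of $(\Xe, \Ye)$ for every training environment. Since the shift distributions $\{\PeA\}_{e \in \Ecaltrain}$ --- and hence the moments $\mue, \Sigmae$ --- are held fixed across candidate parameters, I would first compute, for an arbitrary $\theta = (\beta, \Sigma)$ with noise blocks $\noisecovxx, \noisecovxy$, the closed forms
\[
\EE[\Ye] = \beta^\top \mue, \quad \Cov(\Xe) = \Sigmae + \noisecovxx, \quad \Cov(\Xe, \Ye) = (\Sigmae + \noisecovxx)\beta + \noisecovxy,
\]
together with the analogous scalar expression for $\Var(\Ye)$. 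Equating these with the corresponding quantities for $\thetastar$ for all $e \in \Ecaltrain$ turns membership in $\Invset$ into a family of linear and quadratic constraints on $(\beta,\Sigma)$.

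For part (a), I would extract these constraints one moment at a time. Matching $\Cov(\Xe)$ for any single environment immediately gives $\noisecovxx = \noisecovxxstar$, so the $\eta$-covariance is point-identified. For the cross-covariance, evaluating the constraint at the reference environment $e=0$ (where $\Sigmae = 0$, $\mue = 0$) yields $\noisecovxy = \noisecovxystar + \noisecovxxstar(\betastar - \beta)$, pinning $\noisecovxy$ to $\beta$; subtracting this from the constraint at a general $e$ leaves $\Sigmae(\beta - \betastar) = 0$ for every $e$. Combined with the mean constraint $(\beta - \betastar)^\top \mue = 0$, this forces $\beta - \betastar$ to be orthogonal to every $\mue$ and to $\range(\Sigmae)$. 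Because each $\Sigmae + \mue\mue^\top$ is positive semidefinite, $\cS = \range[\sum_{e}(\Sigmae + \mue\mue^\top)] = \sum_{e}(\range(\Sigmae) + \spn\{\mue\})$, so this orthogonality is exactly $\beta - \betastar \in \cSperp$. Reparametrizing $\beta = \betastarS + \alpha$ with $\alpha \in \cSperp$ and substituting back into the expressions for $\noisecovxy$ and for $\Var(\Ye)$ then produces the affine-in-$\alpha$ cross-covariance and the quadratic-in-$\alpha$ variance term appearing in \eqref{eqn:def-invariant-set}; the environment-dependent pieces of $\Var(\Ye)$ cancel using $\Sigmae(\beta-\betastar)=0$. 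Finally, intersecting with $\Theta$ enforces that the resulting $\Sigma$ stays a valid full-rank PSD covariance.

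For part (b), I would substitute the $\Invset$-parametrization into the closed-form robust predictor \eqref{eqn:formula-robust-predictor}. Since every $\theta \in \Invset$ shares $\noisecovxx = \noisecovxxstar$, the matrix $(\Mtest + \noisecovxxstar)^{-1}$ is common to all of them, and a short rearrangement using the identity $\alpha - (\Mtest + \noisecovxxstar)^{-1}\noisecovxxstar\alpha = (\Mtest+\noisecovxxstar)^{-1}\Mtest\,\alpha$ gives
\[
\betarob_{\theta} = \betastarS + (\Mtest + \noisecovxxstar)^{-1}\noisecovxyS + (\Mtest + \noisecovxxstar)^{-1}\Mtest\,\alpha .
\]
Here Assumption~\ref{as:Mtest-structure} does the work: since $\range(\Mseen) \subset \cS$ and $\Mseen$ is symmetric PSD, $\cSperp \subset \Null(\Mseen)$, so $\Mtest\,\alpha = \gammaprime RR^\top\alpha$ for $\alpha \in \cSperp$. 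As $\alpha$ ranges over $\cSperp \supseteq \range(R)$, the vector $\gammaprime RR^\top\alpha$ ranges over $\range(R)$, which is exactly the affine set in \eqref{eqn:def-rob-pred-identif}.

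I expect the main obstacle to lie in the linear-algebra bookkeeping of part (a): cleanly separating the environment-independent part of the constraints (which identifies the noise blocks as explicit functions of $\beta$) from the environment-dependent part (which forces $\beta - \betastar \in \cSperp$) relies on handling the reference environment correctly and on the PSD identity $\range(\sum_e \cdot) = \sum_e \range(\cdot)$. One must also verify that matching $\Var(\Ye)$ imposes no hidden constraint on $\alpha$ beyond the quadratic form already recorded --- this is where the cancellation from $\Sigmae(\beta - \betastar) = 0$ is essential. Everything past that reduction is substitution, and part (b) becomes essentially a one-line consequence of Assumption~\ref{as:Mtest-structure} once part (a) is in place.
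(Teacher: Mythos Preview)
Your proposal is correct and follows essentially the same moment-matching strategy as the paper: identify $\noisecovxxstar$ from the reference environment, use the cross-moments at $e=0$ and general $e$ to force $\beta-\betastar\in\cSperp$, then back-substitute to obtain the remaining noise blocks; for part (b), plug the parametrization into the closed-form robust predictor and simplify via Assumption~\ref{as:Mtest-structure}. Your treatment is, if anything, slightly more careful than the paper's (you explicitly verify that the $\Var(\Ye)$ constraint adds nothing new and spell out the range identity $\range(\sum_e\cdot)=\sum_e\range(\cdot)$), and your route to (b) via the identity $\alpha-(\Mtest+\noisecovxxstar)^{-1}\noisecovxxstar\alpha=(\Mtest+\noisecovxxstar)^{-1}\Mtest\,\alpha$ is a touch cleaner than the paper's ``insert and rearrange'' in the loss, but the substance is the same.
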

The proof of \cref{prop:invariant-set} is provided in \cref{sec:apx-proof-invariant-set}. 
\cref{prop:invariant-set} covers two well-known settings: If we observe a rich enough set $\probtrain$ of training environments such that $\cS = \R^d$, the model parameters are uniquely identified, corresponding to the setting of full-rank instruments \cite{amemiya1985advanced}. From a dual perspective, for a given set of training environments, \emph{the robust predictor is  identifiable whenever the test shifts are in the same direction as the training shifts}, i.e. $\range (\Mtest) \subset \cS$ and $R=0$ -- this holds even  when the invariant parameters are not identifiable and $\cS \neq \R^d$.
This is the setting considered e.g. in anchor regression \cite{rothenhausler2021anchor} and discussed again in~\Cref{sec:comp-with-finite-robustness-methods} and \cref{sec:apx-anchor-connections}.
So far, we have described how the identifiability of the robust prediction model depends on the structure of both the training environments (via the space $\cS$) and the test environments (via $\Mtest$). 
We now aim to compute the smallest achievable robust loss for the general partially identifiable setting, which allows for $R \neq 0$.
In particular, we provide a lower bound on the \emph{best-possible achievable distributional robustness} formalized by the minimax quantity \eqref{eqn:minimax-quantity}.
First observe that without further assumptions on the parameter space $\Theta$, the \idset is unbounded, and the \idRRs \eqref{eqn:PI-robust-loss} can be infinite.
The following boundedness assumption allows us to provide a fine-grained analysis of robustness in a partially identified setting.
\begin{assumption}[Boundedness of the causal parameter]\label{as:bounded-betastar}
    There exists a constant $C > 0$ such that any parameter $\beta$ in the DGP  \eqref{eqn:SCM} is norm-bounded by $C$, i.e.  $\norm{\beta}_2 \leq C$. 
\end{assumption}
Furthermore, we denote by $\Cker \coloneqq \sqrt{C^2 - \| \betastarS\|^2}$ the maximum norm of the non-identified part of the true parameter $\betastar$. Finally, recall that the reference distribution $\PoXYarg{\thetastar}$ is observed and hence identifiable. 

\begin{theorem}\label{thm:pi-loss-lower-bound}
    Assume that the training and test data follow the data-generating process as in \cref{sec:training-data} 
    and $\Mtest$ satisfies Assumption~\ref{as:Mtest-structure} for some $\Mseen,R$ with $\range (\Mseen) \subset \cS$, $\range (R) \subset \cSperp$. 
    Further, let Assumption~\ref{as:bounded-betastar} hold with parameter $C$.
    The \idRRs \eqref{eqn:PI-robust-loss}
    is then given by 
    \begin{equation}\label{eqn:ID-robust-risk-formula}
        \Lossrobpi(\beta;\Invset,\Mtest) = \gammaprime  (\Cker + \| R^\top \beta \|_2)^2 + \gamma (\betastarS - \beta)^\top \Mseen (\betastarS - \beta) + \Loss(\beta;\PoXYarg{\thetastar}).
    \end{equation}
The minimax quantity in \cref{eqn:minimax-quantity} is lower bounded as follows:
\begin{align} 
    &\minimaxPIloss\begin{cases}
            = \gammaprime \Cker^2 + \min_{R^\top \beta = 0} \Lossrob(\beta;\thetacS, \gamma \Mseen), & \text{if} \: \gammaprime \geq \gammath;\\
            \geq \gammaprime  \Cker^2 + \min_{\beta \in \R^d} \Lossrob(\beta;\thetacS,\gamma \Mseen) , & \text{else},  
        \end{cases}\label{eqn:thm-minimax} 
\end{align}
where $\gammath = \frac{(\kappa(\noisecovxxstar) + 1) \norm{R R^\top \noisecovxyS}}{\Cker}$\footnote{$\kappa$ denotes the condition number of the covariance matrix $\noisecovxxstar$.}. 
Moreover, for small unseen shifts 
\begin{align}\label{eqn:minimax-limit}
    \lim_{\gammaprime \to 0} \frac{\minimaxPIloss}{\gammaprime} = (\Cker + \| R R^\top {\noisecovxxstar}^{-1} \noisecovxyS\| )^2.
\end{align}

\end{theorem}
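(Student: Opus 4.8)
The plan is to first evaluate the inner supremum defining $\Lossrobpi$ in closed form, then recognize the resulting minimax as a single convex program in $\beta$, and finally analyze that program separately for large $\gammaprime$ (equality) and for $\gammaprime\to 0$ (the limiting rate).

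\textbf{Step 1: the worst-case robust risk.} By \cref{prop:invariant-set}, each $\theta\in\Invset$ is indexed by a single $\alpha\in\cSperp$ via $\beta_\theta=\betastarS+\alpha$, $\Sigma_\eta^\theta=\noisecovxxstar$, $\Sigma_{\eta\xi}^\theta=\noisecovxyS-\noisecovxxstar\alpha$, and $(\sigma_\xi^\theta)^2=\noisecovyyS-2\alpha^\top\noisecovxyS+\alpha^\top\noisecovxxstar\alpha$. I would substitute these into the closed form \eqref{eqn:formula-robust-predictor} for $\Lossrob(\beta;\theta,\Mtest)$ with $\Mtest=\gamma\Mseen+\gammaprime RR^\top$. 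The key algebraic observation is that, using $R^\top\betastarS=0$ and $\Mseen\alpha=0$ (both from $\range(\Mseen)\subset\cS$, $\range(R)\subset\cSperp$ and $\alpha\in\cSperp$), every term carrying $\alpha$ cancels \emph{except} the penalty $\gammaprime\norm{R^\top\alpha-R^\top\beta}^2$; the remaining $\alpha$-free part collapses exactly to $\gamma(\betastarS-\beta)^\top\Mseen(\betastarS-\beta)+\Loss(\beta;\PoXYarg{\thetastar})$. It then remains to take $\sup_{\alpha\in\cSperp,\ \norm{\alpha}\le\Cker}\gammaprime\norm{R^\top\alpha-R^\top\beta}^2$, where the constraint $\norm{\betastarS+\alpha}\le C$ reduces to $\norm{\alpha}\le\Cker$ since $\betastarS\perp\alpha$. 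The maximizer puts all the budget into $\range(R)$ pointing opposite to $R^\top\beta$, giving $\gammaprime(\Cker+\norm{R^\top\beta}_2)^2$ and hence \eqref{eqn:ID-robust-risk-formula}.

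\textbf{Step 2: reduction to a convex program.} I would verify directly from \eqref{eqn:formula-robust-predictor} that $\gamma(\betastarS-\beta)^\top\Mseen(\betastarS-\beta)+\Loss(\beta;\PoXYarg{\thetastar})=\Lossrob(\beta;\thetacS,\gamma\Mseen)$, so that $\minimaxPIloss=\inf_{\beta}\bigl[\gammaprime(\Cker+\norm{R^\top\beta}_2)^2+\Lossrob(\beta;\thetacS,\gamma\Mseen)\bigr]=:\inf_\beta F(\beta)$, which is convex (the penalty is convex and $\Lossrob(\cdot;\thetacS,\gamma\Mseen)$ is strictly convex because $\noisecovxxstar\succ 0$). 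Two bounds hold for every $\gammaprime$: restricting to $\{R^\top\beta=0\}$ gives $\minimaxPIloss\le\gammaprime\Cker^2+\min_{R^\top\beta=0}\Lossrob(\beta;\thetacS,\gamma\Mseen)$, and the termwise inequality $(\Cker+t)^2\ge\Cker^2$ gives the ``else'' lower bound $\minimaxPIloss\ge\gammaprime\Cker^2+\min_{\beta}\Lossrob(\beta;\thetacS,\gamma\Mseen)$.

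\textbf{Step 3: equality for $\gammaprime\ge\gammath$ (the main obstacle).} Here it suffices to show the constrained minimizer $\beta_0^\star:=\argmin_{R^\top\beta=0}\Lossrob(\beta;\thetacS,\gamma\Mseen)$ is in fact a global minimizer of $F$, which by convexity is equivalent to $0\in\partial F(\beta_0^\star)$. Since $R^\top\beta_0^\star=0$, the subdifferential of the penalty at $\beta_0^\star$ is $\{2\gammaprime\Cker Rv:\norm{v}\le 1\}$, while constrained optimality forces $\nabla\Lossrob(\beta_0^\star;\thetacS,\gamma\Mseen)\in\range(R)$. Hence $0\in\partial F(\beta_0^\star)$ iff $\norm{R^\top\nabla\Lossrob(\beta_0^\star;\thetacS,\gamma\Mseen)}\le 2\gammaprime\Cker$, i.e. iff $\gammaprime\ge\norm{R^\top(\noisecovxxstar(\beta_0^\star-\betastarS)-\noisecovxyS)}/\Cker$ (using $R^\top\Mseen=0$). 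The delicate part is to upper bound this gradient norm by $(\kappa(\noisecovxxstar)+1)\norm{RR^\top\noisecovxyS}$, thereby identifying the sufficient threshold $\gammath$: one splits off $\norm{R^\top\noisecovxyS}=\norm{RR^\top\noisecovxyS}$ and must control how the seen-direction fit $\beta_0^\star-\betastarS$ leaks into the unseen directions $\range(R)$ through the off-$\cS$ blocks of $\noisecovxxstar$; this coupling is precisely what the condition number $\kappa(\noisecovxxstar)$ measures, and quantifying it carefully (via the Schur-complement structure of $\noisecovxxstar+\gamma\Mseen$ restricted to $\range(R)$) is the technical crux of the whole theorem.

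\textbf{Step 4: the small-shift limit.} For $\gammaprime\to 0$, $F$ is a smooth perturbation of $\Lossrob(\cdot;\thetacS,\gamma\Mseen)$, whose unconstrained minimizer is $\betarobarg{\thetacS}=\betastarS+(\gamma\Mseen+\noisecovxxstar)^{-1}\noisecovxyS$. A Danskin/envelope argument then gives the marginal rate of growth of the minimax value as $(\Cker+\norm{R^\top\betarobarg{\thetacS}}_2)^2$, and since $R^\top\betastarS=0$ this is $(\Cker+\norm{RR^\top(\gamma\Mseen+\noisecovxxstar)^{-1}\noisecovxyS})^2$, which recovers the stated limit \eqref{eqn:minimax-limit}. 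I expect Step 3 to be the hardest: Steps 1, 2 and 4 are essentially bookkeeping plus a standard envelope argument, whereas pinning down $\gammath$ requires a genuinely careful norm bound on the unseen-direction gradient at the constrained optimum.
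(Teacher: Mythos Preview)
Your proposal is correct and follows essentially the same route as the paper: parametrize $\Invset$ by $\alpha\in\cSperp$, take the supremum explicitly, then study the resulting convex program in $\beta$ via the subdifferential at the constrained minimizer $\beta_0^\star$. Two of your steps are in fact slightly cleaner than the paper's own argument --- in Step~1 you observe directly that the $\alpha$-dependent part completes the square to $\gammaprime\|R^\top\alpha - R^\top\beta\|^2$ (the paper instead writes out a Lagrangian), and in Step~4 your envelope/Danskin argument replaces the paper's more laborious linearize-and-bound construction --- but the structure and key ideas are identical, and on the hardest point (bounding $\|R^\top\noisecovxxstar(\beta_0^\star-\betastarS)\|$ by $\kappa(\noisecovxxstar)\|RR^\top\noisecovxyS\|$ to pin down $\gammath$) the paper is exactly as terse as you are.
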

We prove \cref{thm:pi-loss-lower-bound} in \cref{sec:apx-proof-of-main-prop}. First, in the case of no new test shifts where $\gammaprime = 0$ (as it appears in prior work \citep{rothenhausler2021anchor,shen2023causalityoriented}) we can plug in the robust risk \Cref{eqn:robust-risk} into \Cref{eqn:thm-minimax} to observe the following: as the strength $\gamma$ of the shift grows, the optimal robust risk saturates.
On the other hand, 
if $\gammaprime \neq 0$, i.e., the test shift contains new directions w.r.t. to the training data, the best achievable robustness $\minimaxPIloss$ grows linearly with $\gammaprime$.
Further note that for $\gammaprime \geq \gammath$, we have a  tight expression for the minimax quantity
and the worst-case robust predictor $\betarobpi$ can be explicitly computed 
(cf. \Cref{sec:apx-proof-of-main-prop}) and is \emph{orthogonal} to the space $\range (R)$ of new test shift directions. 
In other words, for large shifts in new directions, the optimal robust model would "abstain" from prediction in those directions. For smaller shifts $\gammaprime$, $\betarobpi$ gradually utilizes more information in the non-identified directions, thus interpolating between maximum predictive power (OLS) and robustness w.r.t. new directions (abstaining). 
The model $\betarobpi$ is a population quantity that is identifiable from the collection of training \emph{distributions}. When only finite samples are available, we discuss in \cref{sec:apx-empirical-estimation}
how we can compute the worst-case robust estimator by minimizing
an empirical loss function
that can be computed from multi-environment data. 

\subsection{Theoretical analysis of existing finite robustness methods}\label{sec:comp-with-finite-robustness-methods}
We now evaluate existing finite robustness methods in our partial identifiability framework and 
characterize their (sub)optimality in different scenarios.
A spiritually similar systematic comparison of domain adaptation methods is presented in \cite{chen2021domain}, however, in our setting, the robust risk is not identifiable from data. 
Concretely, we compare discrete anchor regression \cite{rothenhausler2021anchor} and pooled OLS estimators \footnote{In \cref{sec:apx-anchor-connections}  we show that analogous results hold for continuous anchor regression and the method of distributionally robust invariant gradients (DRIG) \cite{shen2023causalityoriented}.} with the minimax quantity in \Cref{thm:pi-loss-lower-bound}. We consider the same scenario as in discrete anchor regression, which is a the specific case of the setting in \Cref{eqn:SCM}, where for each environment $e$, $\Ae$ is just a mean shift with variance $0$.
In addition, discrete anchor regression assumes that the environment variable $E \in \Ecaltrain$ follows a probability distribution with $\prob[E = e] = w_e$. 
The discrete anchor setting then corresponds to setting a test shift upper bound $\Mtest = \gamma\Manchor$ for some $\gamma>0$ (cf. \Cref{eqn:testAbound}) with  $\Manchor = \sum_{e \in \Ecaltrain} w_e \mue \mue^\top$. 
The (oracle) discrete anchor regression estimator minimizes the robust risk and reads 
\begin{align}
\label{eq:betaanchor}
    \betaa = \argmin_{\beta \in \R^d} \Lossrob(\beta; \thetastar, \gamma \Manchor).
\end{align}
The pooled ordinary least squares (OLS) estimator $\betaOLS$ corresponds to $\betaa$ with $\gamma = 1$.  We observe that the test shifts bounded by $\gamma \Manchor$ are fully contained in the space of identified directions $\cS$, since $\cS = \range (\cup_{e \in \Ecaltrain} \mue \mue^\top) = \range (\Manchor)$.  Thus, according to \cref{prop:invariant-set}, the robust risk and robust predictor $\betaa$ are identifiable for all $\gamma > 0$. In the next corollary, we compute worst-case robust risk of both $\betaa$ and $\betaOLS$
with respect to the more general shifts bounded by $\Mtest := \gamma \Manchor + \gammaprime R R^\top$, thus possibly including unseen shifts consisting of additional unseen shifts $\range (R) \subset \cSperp$.
\begin{corollary}[Worst-case robust risk of the anchor regression estimator]\label{cor:estimators}
Assume that the test shift upper bound is given by $\Mtest := \gamma \Manchor + \gammaprime R R^\top$. Let 
$\prob^{X,Y}_{\mathrm{train}} = \sum_{e\in\Ecaltrain} w_e \PeXY$ be the pooled training distribution. 
Then the general worst-case robust risk is given by 
\begin{align*}
    \Lossrobpi(\beta; \Invset, \Mtest) = \gammaprime (\Cker + \| R^\top \beta \|_2)^2 + (\gamma-1)(\betastarS-\beta)^\top \Manchor (\beta - \betastarS) + \Loss(\beta,\ptrain).
\end{align*}
Furthermore, for the anchor and OLS predictor, respectively, it holds that there exist functions  $c_1(\gamma),c_2(\gamma),c_3(\gamma)$ independent of $\gamma'$ such that 
\begin{equation*}
    \begin{aligned}
        \Lossrobpi(\betaa; \Invset, \Mtest) &= (\Cker + \| R R^\top (\noisecovxxstar + \gamma \Manchor)^{-1} \noisecovxyS \| )^2\gammaprime + c_1(\gamma); \\ 
        \Lossrobpi(\betaOLS; \Invset,\Mtest) &= (\Cker + \| R R^\top (\noisecovxxstar + \Manchor)^{-1} \noisecovxyS \| )^2\gammaprime + c_2(\gamma).
    \end{aligned}
\end{equation*}
In contrast, the best achievable robustness reads
\begin{equation*}
\begin{aligned}
    \minimaxPIlossarg{\Mtest} &= \Cker^2 \gammaprime + c_3(\gamma), \: \text{ if } \gammaprime \geq \gammath; \\  \lim_{\gammaprime \to 0} \minimaxPIlossarg{\Mtest} / \gammaprime &= (\Cker + \| R R^\top (\noisecovxxstar + \gamma \Manchor)^{-1} \noisecovxyS\| )^2.
    \end{aligned}
\end{equation*}
\end{corollary}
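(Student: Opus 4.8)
The plan is to obtain the corollary as a specialization of \Cref{thm:pi-loss-lower-bound} to the discrete-anchor second-moment structure, in which every training shift is a deterministic mean shift ($\Sigma_e = 0$), so that $\cS = \range(\sum_{e}\mu_e\mu_e^\top) = \range(\Manchor)$ and Assumption~\ref{as:Mtest-structure} holds with $\Mseen = \Manchor$. The single algebraic fact driving every step is that $\Manchor$ annihilates $\cSperp$: since each $\mu_e \in \cS$, for any $\alpha \in \cSperp$ we have $\Manchor \alpha = \sum_e w_e \mu_e (\mu_e^\top \alpha) = 0$, and likewise $R^\top \betastarS = 0$ because $\betastarS \in \cS$ while $\range(R) \subset \cSperp$. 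First I would derive the general worst-case robust risk formula from \eqref{eqn:ID-robust-risk-formula} with $\Mseen = \Manchor$, rewriting the reference-environment loss as a pooled loss. Since each environment is a pure mean shift, $\Loss(\beta; \PeXYarg{\thetastar}) = \Loss(\beta; \PoXYarg{\thetastar}) + ((\betastar - \beta)^\top \mu_e)^2$, so summing against $w_e$ and using $\Manchor \betastarperp = 0$ gives $\Loss(\beta, \ptrain) = \Loss(\beta; \PoXYarg{\thetastar}) + (\betastarS - \beta)^\top \Manchor (\betastarS - \beta)$. Substituting this identity into \eqref{eqn:ID-robust-risk-formula} converts the $\gamma\Manchor$ term into a $(\gamma-1)\Manchor$ term measured against the pooled loss, yielding the claimed expression.

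Next I would evaluate this formula at the two estimators. A preliminary step is to verify that $\betaa$ and $\betaOLS$ are genuinely identifiable, i.e. independent of the unidentified direction $\alpha \in \cSperp$ parametrizing $\Invset$ in \Cref{prop:invariant-set}. Plugging the reparametrized noise moments into \eqref{eqn:formula-robust-predictor} for the test bound $\gamma\Manchor$, the $\alpha$-dependent contribution is proportional to $(\gamma\Manchor + \noisecovxxstar)^{-1}\gamma\Manchor\alpha$, which vanishes by $\Manchor\alpha = 0$; hence $\betaa = \betastarS + (\gamma\Manchor + \noisecovxxstar)^{-1}\noisecovxyS$ and $\betaOLS = \betastarS + (\Manchor + \noisecovxxstar)^{-1}\noisecovxyS$ are well-defined identified quantities. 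Since the closed form \eqref{eqn:ID-robust-risk-formula} already incorporates the supremum over $\theta \in \Invset$, I may simply evaluate it at $\beta = \betaa$: the only $\gammaprime$-dependent term is $\gammaprime(\Cker + \|R^\top\betaa\|_2)^2$, and using $R^\top\betastarS = 0$ together with the semi-orthogonality identity $\|R^\top v\|_2 = \|RR^\top v\|_2$ (valid since $R^\top R = I$) this becomes $\gammaprime(\Cker + \|RR^\top(\noisecovxxstar + \gamma\Manchor)^{-1}\noisecovxyS\|)^2$, while the remaining $\gammaprime$-independent terms define $c_1(\gamma)$. The identical computation at $\betaOLS$ produces $c_2(\gamma)$ and the coefficient with $(\noisecovxxstar + \Manchor)^{-1}$.

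For the minimax statements I would again invoke \Cref{thm:pi-loss-lower-bound} with $\Mseen = \Manchor$. In the regime $\gammaprime \geq \gammath$ the theorem gives the exact value $\gammaprime\Cker^2 + \min_{R^\top\beta = 0}\Lossrob(\beta; \thetacS, \gamma\Manchor)$, so setting $c_3(\gamma) := \min_{R^\top\beta = 0}\Lossrob(\beta; \thetacS, \gamma\Manchor)$ proves the first minimax line. For the small-shift limit I would identify the minimizer of $\Lossrobpi(\,\cdot\,; \Invset, \Mtest)$ at $\gammaprime = 0$: the $\gammaprime$-term drops out, leaving the strictly convex objective $\Lossrob(\beta; \thetacS, \gamma\Manchor)$, whose unconstrained minimizer is exactly $\betaa$ by \eqref{eqn:formula-robust-predictor}. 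Because $\gammaprime$ enters $\Lossrobpi$ linearly with coefficient $(\Cker + \|R^\top\beta\|_2)^2$, the envelope theorem shows the leading-order growth rate of the minimax value in $\gammaprime$ equals this coefficient at $\betaa$, i.e. $(\Cker + \|RR^\top(\noisecovxxstar + \gamma\Manchor)^{-1}\noisecovxyS\|)^2$, the claimed limit.

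The main obstacle is the identifiability argument for the estimators combined with the small-$\gammaprime$ limit. One must rule out any hidden dependence of $\betaa, \betaOLS$ on the non-identified part $\alpha$, which is precisely where $\Manchor\alpha = 0$ for $\alpha \in \cSperp$ is indispensable; it is also what makes the $\gammaprime$-coefficient collapse onto the identified moment $\noisecovxyS$ rather than the full $\noisecovxystar$. The limit furthermore requires justifying that the value function is right-differentiable at $\gammaprime = 0$ with derivative given by the envelope formula: this needs continuity of the minimizer $\beta^\star(\gammaprime) \to \betaa$ (guaranteed by strict convexity as $\noisecovxxstar \succ 0$), smoothness of $\|R^\top\beta\|_2$ near $\betaa$ (where $R^\top\betaa \neq 0$ generically), and reading the ``$/\gammaprime$'' notation as the leading-order rate, since the $\gammaprime$-independent baseline $\min_\beta \Lossrob(\beta; \thetacS, \gamma\Manchor)$ does not vanish and the statement concerns the slope rather than a literal ratio.
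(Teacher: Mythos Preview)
Your proposal is correct and follows essentially the same route as the paper: specialize \Cref{thm:pi-loss-lower-bound} with $\Mseen=\Manchor$, rewrite the reference-environment loss as the pooled loss via the identity $\Loss(\beta,\ptrain)=\Loss(\beta;\PoXYarg{\thetastar})+(\betastarS-\beta)^\top\Manchor(\betastarS-\beta)$, plug in the closed forms $\betaa=\betastarS+(\noisecovxxstar+\gamma\Manchor)^{-1}\noisecovxyS$ and $\betaOLS=\betastarS+(\noisecovxxstar+\Manchor)^{-1}\noisecovxyS$, and read off the minimax claims from the theorem. Two small points where you differ from the paper are worth noting: your explicit identifiability check that the $\alpha$-contribution to $\betaa$ is $(\gamma\Manchor+\noisecovxxstar)^{-1}\gamma\Manchor\alpha=0$ is a clean addition the paper leaves implicit, and for the $\gammaprime\to 0$ limit you invoke the envelope theorem at the unique minimizer $\betaa$, whereas the paper instead linearizes the non-smooth term $\|R^\top\beta\|$ by the inner product $\langle R^\top\beta,\,R^\top\betaa\rangle/\|R^\top\betaa\|$ and analyzes the resulting lower-bound minimizer directly---your argument is more compact and your observation that the ``$/\gammaprime$'' must be read as a slope (since the baseline at $\gammaprime=0$ is nonzero) is a correct reading of what both the theorem and corollary intend.
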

Observe that the \idRRs in the extended anchor regression setting is equal to the anchor regression risk with an additional non-identifiability penalty term $\gammaprime (\Cker + \| R^\top \beta \|_2)^2$.
The anchor regression estimator is optimal in the limit of vanishing unseen shifts but, for any $\gamma$, significantly deviates\footnote{Notice that the term $ \| R R^\top (\noisecovxxstar + \gamma \Manchor)^{-1} \noisecovxyS\|$ is only zero (yielding the minimax risk) if $\Manchor$ is full-rank (implying $R = 0$), or if $(\noisecovxxstar + \gamma \Manchor)^{-1} \noisecovxyS \perp R$, implying that there is no meaningful confounding in the unseen test shift directions. Otherwise, it can be strictly bounded from below as $\noisecovxxstar$ is full-rank.} from the best achievable robustness for larger unseen shifts $\gammaprime \geq \gammath$. 
Moreover, in case of completely new shifts ($\gamma = 0$), pooled OLS and the anchor estimator achieve the same rate in $\gammaprime$, showcasing how finite robustness methods can perform similarly to empirical risk minimization if the assumptions on the robustness set are not met. We provide additional performance comparisons for the more general shift in \Cref{sec:apx-anchor-connections} and the proof of the corollary in \Cref{sec:apx-proof-of-corollary}.

\section{Experimental results}\label{sec:exp-results}
In this section, we provide empirical evidence of our theoretical conclusions in \Cref{sec:main-results-minimax,sec:comp-with-finite-robustness-methods}.
In particular, we compare the prediction performance of multiple existing robustness methods to the (estimated) minimax robustness in identifiable and partially identifiable settings.
We observe that both on synthetic and real-world data, in the partially identified setting, empirical risk minimization and invariance-based robustness methods not only have significantly sub-optimal test loss but also perform similarly, thereby aligning with our theoretical results in \Cref{sec:comp-with-finite-robustness-methods}. This stands in contrast to the identifiable setting, where the anchor predictor is optimal up to finite-sample effects. Furthermore, we observe that even though the minimizer of the worst-case robust risk is optimal only for the linear causal setting in \Cref{sec:training-data},  it surprisingly outperforms existing methods in a real-world experiment.
\label{sec:experiments}
\paragraph{Experiments on synthetic Gaussian data}
We simulate Gaussian covariates according to \cref{eqn:SCM} with multiple environments differing by linearly independent randomly selected mean shifts. For a randomly sampled collection of mean shifts, we evaluate a proxy for the worst-case robust risk by picking the most adversarial $(\betastar,\noisecovxxstar)$ for the shifts, and then computing its robust risk \eqref{eqn:robust-risk}.
We describe the full details of the data generation and loss evaluation in \cref{sec:apx-synthetic-exps}.
We consider two shift scenarios: in the 
identifiable case (see \Cref{fig:sub1-identified}), the test environment is only perturbed by bounded shifts in training directions with increasing strength $\gamma$, as considered in prior work \citep{rothenhausler2021anchor,shen2023causalityoriented}. In the 
non-identifiable case (see \Cref{fig:sub2-nonidentified}), the test environment is perturbed by a mixture of training shifts and shifts in previously unobserved directions, where $\gamma$ is fixed and $\gamma'$ varies (cf. Assumption~\ref{as:Mtest-structure}). 
We compute the empirical minimizers $\betaOLShat,\betaahat$ and $\betarobpihat$ of the OLS, anchor regression and worst-case robust losses, respectively, and compare their worst-case robust risk (mean squared error) in Figure~\ref{fig:synthetic-experiments}. In the identifiable setting -- \cref{fig:synthetic-experiments} (left) --  he robust risk is asymptotically constant across $\gamma$ for both robust methods, while the error for the OLS, or vanilla ERM, estimator increases linearly. In contrast, in the second, partially identified, setting -- \cref{fig:synthetic-experiments} (right) -- all estimators exhibit linearly increasing test errors; however the slopes of the anchor and OLS estimator
are much steeper and lead to larger errors than the empirical minimizer of \eqref{eqn:ID-robust-risk-formula} that 
closely matches the analytic theoretical 
lower bound.
\begin{figure}[ht]
\centering
\includegraphics[width=\textwidth]{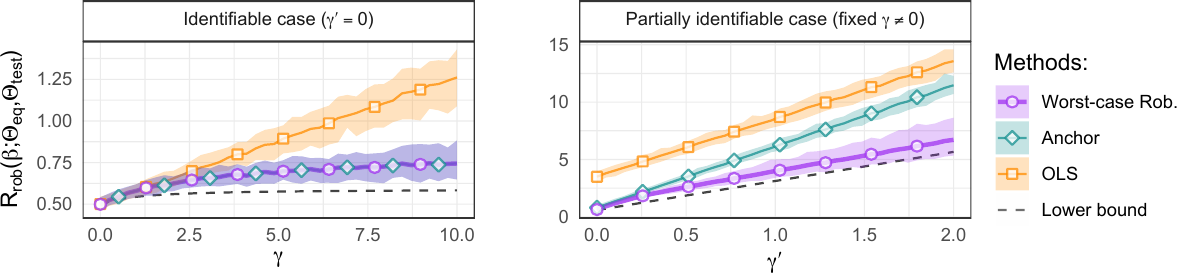}
\caption{\small{Worst-case robust risk 
of the baseline estimators $\betaOLS, \betaa$ (using the "correct" $\gamma$),  
the worst-case robust predictor in (mean-shifted) multi-environment finite-sample experiments and theoretical population lower bound in the classical identified setting with varying shift strength $\gamma$ (left) and the partially identifiable setting with fixed $\gamma$ but varying $\gamma'$ (right).  The details of the experimental setting can be found in \Cref{sec:apx-experiments}}. }
\label{fig:synthetic-experiments}
\end{figure}
\paragraph{Real-world data experiments} 
We evaluate the performance of OOD methods using single-cell gene expression data from \cite{replogle2022mapping}, consisting of $d = 622$ genes across observational and interventional environments.
As in \citep{schultheiss2024assessing}, we focus on 28 genes active in the observational environment.
For each gene $j = 1, \ldots, 28$, we define the target variable $Y := X_j$ and select the three genes most strongly correlated with $Y$ as covariates. This yields 28 prediction problems indexed by $j$,
each consisting of data from an observational environment $\mathcal{O}$ and three interventional environments $\mathcal{I}_{j1}$, $\mathcal{I}_{j2}$, $\mathcal{I}_{j3}$ representing the gene knockout on a single covariate.
For each prediction problem, we consider three training datasets $D_{j1}$, $D_{j2}$, $D_{j3}$, obtained by combining data from $\mathcal{O}$ with a single interventional environment $\mathcal{I}_{j1}$, $\mathcal{I}_{j2}$, $\mathcal{I}_{j3}$, respectively.
For each training dataset $D_{jk}$, $k = 1, 2, 3$, we evaluate the mean-squared error (MSE) at test time using four datasets consisting of varying proportions of unseen shifts (e.g., ``$33\%$ unseen directions'' in \Cref{fig:genes-res-weights} represents a test dataset with $67\%$ observations sampled from $\mathcal{I}_{jk}$ and $33\%$ from $\mathcal{I}_{j\ell}$ with $\ell \neq k$).
Hence, for each prediction problem predicting a gene $j$, we evaluate on
12 configurations (three training and four test datasets).\footnote{An illustration of the training and test setups can be found in \cref{fig:genes}.}
\Cref{fig:genes-res-weights} illustrates the test MSE of the worst-case robust estimator (Worst-case Rob.) alongside anchor regression, invariant causal prediction (ICP) \cite{peters2016causal}, DRIG, and OLS, as a function of perturbation strength 
$s$. \footnote{Details on the tuning parameter for each method are in \Cref{sec:apx-real-world}.}
For a given proportion of unseen shifts, $s$ controls the distance of the test data points from the observational mean, acting as a proxy for shift strengths $\gamma$ and $\gamma'$. \footnote{More details on the shift strength can be found in \Cref{sec:apx-real-world}.}
We observe that 
the performance ranking of the robustness methods significantly
varies with the proportion of new test shift directions. As expected, when no new shift 
directions are present at test time (0\%), 
anchor regression and DRIG are optimal, 
since they protect against shifts observed at training time. However, as soon as some unseen directions are present, their performance
becomes inferior to OLS/ERM and the gap to the worst-case robust predictor (in the linear setting described in \Cref{sec:setting})
grows with the proportion of unseen shifts.
Further, while the MSE of the previous invariant methods increases significantly with 
the strength of the test shift $s$,
the test loss of the worst-case robust predictor remains relatively stable.

\begin{figure}[ht]
    \centering
    \includegraphics[width=1\textwidth]{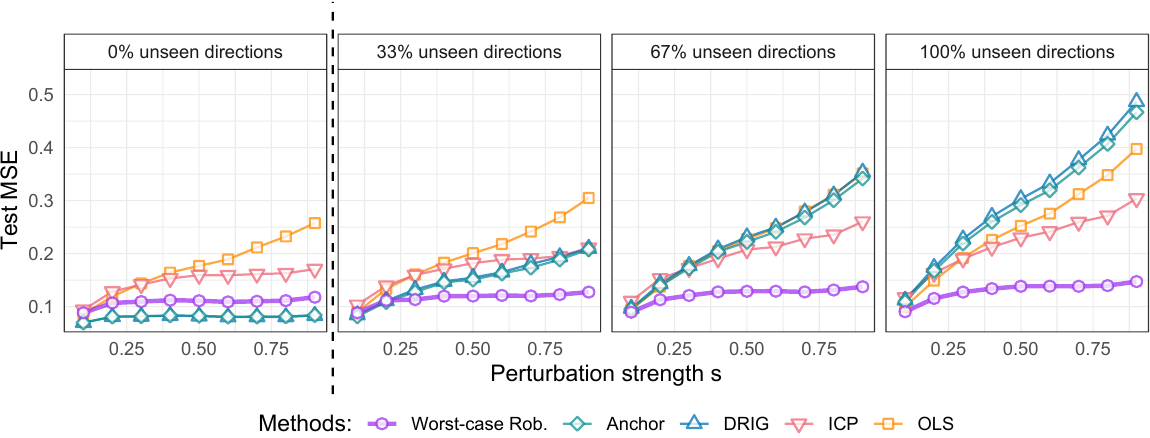}
    \caption{
    \small{ 
    The figures show the performance of the \emph{worst-case robust predictor} (Worst-case Rob.) compared to other methods as a function of perturbation strength $s$.
    Different panels correspond to the proportion of unseen shift directions at test time.
    For each panel and perturbation strength $s$, each point represents an average over the 28 target genes and three experiments (i.e., training environments).}
    }
    \label{fig:genes-res-weights}
\end{figure}
  
  \section{Conclusion and future directions}
  \label{sec:conclusion}
  This paper introduces the
\idRRs – a quantity that is well-defined
even in settings where the usual robust risk is not computable from training distributions,
and in identifiable scenarios \citep{rothenhausler2021anchor,shen2023causalityoriented} reduces to the conventional robust risk.
We instantiate our general framework for linear models with additive distribution shifts and compute tight lower bounds for this setting.
Further, we demonstrate how i) the benefits of invariance-based robustness methods strongly decrease in the partially identifiable setting; and ii) this suboptimality increases with perturbation strength and proportion of previously unobserved test shifts.

The main limitation of our paper is its reliance on a linear setting to explicitly compute the \idRRs and the corresponding minimax quantity. However, we expect that the results and intuition developed in this paper  can be extended to linear shifts in a 
lower-dimensional latent space via a suitable parametric or non-linear map \citep{thams2022evaluating, buchholz2024learning}. Important future directions include extending our results to more general shift models, non-linear functional relationships and the classification setting. 
Further, a potential use of our work is in the field of \emph{active intervention selection} (e.g, \citep{zhang2023active, gamella2020active}). By computing the most adversarial model parameter for a given estimator, e.g., OLS, we can obtain an intervention which minimizes the \idRRs of the estimator on the next unseen shift.

\section{Acknowledgements}
\label{sec:acknowledgements}
JK was supported by the SNF grant number 204439. NG was supported by the SNF grant number 210976. We thank Kasra Jalaldoust and Yixin Wang for helpful discussions and feedback on the manuscript. This work was done in part while JK and FY were visiting the Simons Institute for the Theory of Computing.
  
  \bibliography{biblio}

\begin{thebibliography}{10}

\bibitem{bental2013robust}
Aharon Ben-Tal, Dick den Hertog, Anja~De Waegenaere, Bertrand Melenberg, and
  Gijs Rennen.
\newblock Robust solutions of optimization problems affected by uncertain
  probabilities.
\newblock {\em Management Science}, 59(2):341--357, 2013.

\bibitem{duchi2021learning}
John~C Duchi and Hongseok Namkoong.
\newblock Learning models with uniform performance via distributionally robust
  optimization.
\newblock {\em The Annals of Statistics}, 49(3):1378--1406, 2021.

\bibitem{goodfellow2014explaining}
Ian~J. Goodfellow, Jonathon Shlens, and Christian Szegedy.
\newblock Explaining and harnessing adversarial examples.
\newblock In {\em International Conference on Learning Representations}, 2015.

\bibitem{madry2018towards}
Aleksander Madry, Aleksandar Makelov, Ludwig Schmidt, Dimitris Tsipras, and
  Adrian Vladu.
\newblock Towards deep learning models resistant to adversarial attacks.
\newblock In {\em International Conference on Learning Representations}, 2018.

\bibitem{mansour2008}
Yishay Mansour, Mehryar Mohri, and Afshin Rostamizadeh.
\newblock Domain adaptation with multiple sources.
\newblock In {\em Advances in Neural Information Processing Systems}, 2008.

\bibitem{meinshausen2014}
Nicolai Meinshausen and Peter Bühlmann.
\newblock Maximin effects in inhomogeneous large-scale data.
\newblock {\em The Annals of Statistics}, 43(4):1801--1830, 2015.

\bibitem{sagawa2019distributionally}
Shiori Sagawa*, Pang~Wei Koh*, Tatsunori~B. Hashimoto, and Percy Liang.
\newblock Distributionally robust neural networks.
\newblock In {\em International Conference on Learning Representations}, 2020.

\bibitem{buhlmann2020invariance}
Peter B{\"u}hlmann.
\newblock Invariance, causality and robustness.
\newblock {\em Statistical Science}, 35(3):404--426, 2020.

\bibitem{meinshausen2018causality}
Nicolai Meinshausen.
\newblock Causality from a distributional robustness point of view.
\newblock In {\em 2018 IEEE Data Science Workshop (DSW)}, pages 6--10. IEEE,
  2018.

\bibitem{shen2023causalityoriented}
Xinwei Shen, Peter B{\"u}hlmann, and Armeen Taeb.
\newblock Causality-oriented robustness: exploiting general additive
  interventions.
\newblock {\em arXiv preprint arXiv:2307.10299}, 2023.

\bibitem{Tramer19}
Florian Tramèr and Dan Boneh.
\newblock Adversarial training and robustness for multiple perturbations.
\newblock In {\em Advances in Neural Information Processing Systems}, 2019.

\bibitem{kang19}
Daniel Kang, Yi~Sun, Tom Brown, Dan Hendrycks, and Jacob Steinhardt.
\newblock Transfer of adversarial robustness between perturbation types.
\newblock {\em arXiv preprint arXiv:1905.01034}, 2019.

\bibitem{peters2016causal}
Jonas Peters, Peter B{\"u}hlmann, and Nicolai Meinshausen.
\newblock Causal inference by using invariant prediction: identification and
  confidence intervals.
\newblock {\em Journal of the Royal Statistical Society Series B: Statistical
  Methodology}, 78(5):947--1012, 2016.

\bibitem{rojas2018invariant}
Mateo Rojas-Carulla, Bernhard Sch{\"o}lkopf, Richard Turner, and Jonas Peters.
\newblock Invariant models for causal transfer learning.
\newblock {\em The Journal of Machine Learning Research}, 19(1):1309--1342,
  2018.

\bibitem{arjovsky2020invariant}
Martin Arjovsky, L{\'e}on Bottou, Ishaan Gulrajani, and David Lopez-Paz.
\newblock Invariant risk minimization.
\newblock {\em arXiv preprint arXiv:1907.02893}, 2019.

\bibitem{krueger2021out}
David Krueger, Ethan Caballero, Joern-Henrik Jacobsen, Amy Zhang, Jonathan
  Binas, Dinghuai Zhang, Remi Le~Priol, and Aaron Courville.
\newblock Out-of-distribution generalization via risk extrapolation ({RE}x).
\newblock In {\em International Conference on Machine Learning}, pages
  5815--5826. PMLR, 2021.

\bibitem{ahuja2020empirical}
Kartik Ahuja, Jun Wang, Amit Dhurandhar, Karthikeyan Shanmugam, and Kush~R.
  Varshney.
\newblock Empirical or invariant risk minimization? {A} sample complexity
  perspective.
\newblock In {\em International Conference on Learning Representations}, 2021.

\bibitem{ahuja2020invariant}
Kartik Ahuja, Karthikeyan Shanmugam, Kush Varshney, and Amit Dhurandhar.
\newblock Invariant risk minimization games.
\newblock In {\em International Conference on Machine Learning}, pages
  145--155. PMLR, 2020.

\bibitem{kamath2021does}
Pritish Kamath, Akilesh Tangella, Danica Sutherland, and Nathan Srebro.
\newblock Does invariant risk minimization capture invariance?
\newblock In {\em International Conference on Artificial Intelligence and
  Statistics}, pages 4069--4077. PMLR, 2021.

\bibitem{rosenfeld2020risks}
Elan Rosenfeld, Pradeep~Kumar Ravikumar, and Andrej Risteski.
\newblock The risks of invariant risk minimization.
\newblock In {\em International Conference on Learning Representations}, 2021.

\bibitem{rothenhausler2021anchor}
Dominik Rothenh{\"a}usler, Nicolai Meinshausen, Peter B{\"u}hlmann, and Jonas
  Peters.
\newblock Anchor regression: Heterogeneous data meet causality.
\newblock {\em Journal of the Royal Statistical Society Series B: Statistical
  Methodology}, 83(2):215--246, 2021.

\bibitem{tamer2010partial}
Elie Tamer.
\newblock Partial identification in econometrics.
\newblock {\em Annu. Rev. Econ.}, 2(1):167--195, 2010.

\bibitem{frake2023perfect}
Justin Frake, Anthony Gibbs, Brent Goldfarb, Takuya Hiraiwa, Evan Starr, and
  Shotaro Yamaguchi.
\newblock From perfect to practical: Partial identification methods for causal
  inference in strategic management research.
\newblock {\em Available at SSRN 4228655}, 2023.

\bibitem{yu1997assouad}
Bin Yu.
\newblock Assouad, {F}ano, and le {C}am.
\newblock In {\em Festschrift for Lucien Le Cam: Research papers in probability
  and statistics}, pages 423--435. Springer, 1997.

\bibitem{saengkyongam2022exploiting}
Sorawit Saengkyongam, Leonard Henckel, Niklas Pfister, and Jonas Peters.
\newblock Exploiting independent instruments: Identification and distribution
  generalization.
\newblock In {\em International Conference on Machine Learning}, pages
  18935--18958. PMLR, 2022.

\bibitem{bellot2022partial}
Kasra Jalaldoust, Alexis Bellot, and Elias Bareinboim.
\newblock Partial transportability for domain generalization.
\newblock In {\em Advances in Neural Information Processing Systems}, 2024.

\bibitem{shimodaira2000improving}
Hidetoshi Shimodaira.
\newblock Improving predictive inference under covariate shift by weighting the
  log-likelihood function.
\newblock {\em Journal of statistical planning and inference}, 90(2):227--244,
  2000.

\bibitem{Dufour2010Econometrics}
Jean-Marie Dufour and Cheng Hsiao.
\newblock {\em Identification}, pages 65--77.
\newblock Palgrave Macmillan UK, London, 2010.

\bibitem{amemiya1985advanced}
Takeshi Amemiya.
\newblock {\em Advanced Econometrics}.
\newblock Harvard University Press, 1985.

\bibitem{bowden1990instrumental}
Roger~J Bowden and Darrell~A Turkington.
\newblock {\em Instrumental variables}.
\newblock Number~8. Cambridge University Press, 1990.

\bibitem{chen2021domain}
Yuansi Chen and Peter B{\"u}hlmann.
\newblock Domain adaptation under structural causal models.
\newblock {\em Journal of Machine Learning Research}, 22(261):1--80, 2021.

\bibitem{replogle2022mapping}
Joseph~M Replogle, Reuben~A Saunders, Angela~N Pogson, Jeffrey~A Hussmann,
  Alexander Lenail, Alina Guna, Lauren Mascibroda, Eric~J Wagner, Karen
  Adelman, Gila Lithwick-Yanai, et~al.
\newblock Mapping information-rich genotype-phenotype landscapes with
  genome-scale {P}erturb-seq.
\newblock {\em Cell}, 185(14):2559--2575, 2022.

\bibitem{schultheiss2024assessing}
Christoph Schultheiss and Peter B{\"u}hlmann.
\newblock Assessing the overall and partial causal well-specification of
  nonlinear additive noise models.
\newblock {\em Journal of Machine Learning Research}, 25(159):1--41, 2024.

\bibitem{thams2022evaluating}
Nikolaj Thams, Michael Oberst, and David Sontag.
\newblock Evaluating robustness to dataset shift via parametric robustness
  sets.
\newblock {\em Advances in Neural Information Processing Systems}, 2022.

\bibitem{buchholz2024learning}
Simon Buchholz, Goutham Rajendran, Elan Rosenfeld, Bryon Aragam, Bernhard
  Sch{\"o}lkopf, and Pradeep Ravikumar.
\newblock Learning linear causal representations from interventions under
  general nonlinear mixing.
\newblock {\em Advances in Neural Information Processing Systems}, 2024.

\bibitem{zhang2023active}
Jiaqi Zhang, Louis Cammarata, Chandler Squires, Themistoklis~P Sapsis, and
  Caroline Uhler.
\newblock Active learning for optimal intervention design in causal models.
\newblock {\em Nature Machine Intelligence}, 5(10):1066--1075, 2023.

\bibitem{gamella2020active}
Juan~L Gamella and Christina Heinze-Deml.
\newblock Active invariant causal prediction: Experiment selection through
  stability.
\newblock {\em Advances in Neural Information Processing Systems}, 2020.

\bibitem{sinha2017certifying}
Aman Sinha, Hongseok Namkoong, and John Duchi.
\newblock Certifiable distributional robustness with principled adversarial
  training.
\newblock In {\em International Conference on Learning Representations}, 2018.

\bibitem{mohajerin2018data}
Peyman Mohajerin~Esfahani and Daniel Kuhn.
\newblock Data-driven distributionally robust optimization using the
  {W}asserstein metric: Performance guarantees and tractable reformulations.
\newblock {\em Mathematical Programming}, 171(1):115--166, 2018.

\bibitem{fan2023environment}
Jianqing Fan, Cong Fang, Yihong Gu, and Tong Zhang.
\newblock Environment invariant linear least squares.
\newblock {\em arXiv preprint arXiv:2303.03092}, 2023.

\bibitem{magliacane2018domain}
Sara Magliacane, Thijs Van~Ommen, Tom Claassen, Stephan Bongers, Philip
  Versteeg, and Joris~M Mooij.
\newblock Domain adaptation by using causal inference to predict invariant
  conditional distributions.
\newblock {\em Advances in Neural Information Processing Systems}, 2018.

\bibitem{shi2021gradient}
Yuge Shi, Jeffrey Seely, Philip Torr, Siddharth N, Awni Hannun, Nicolas
  Usunier, and Gabriel Synnaeve.
\newblock Gradient matching for domain generalization.
\newblock In {\em International Conference on Learning Representations}, 2022.

\bibitem{xie2020risk}
Chuanlong Xie, Haotian Ye, Fei Chen, Yue Liu, Rui Sun, and Zhenguo Li.
\newblock Risk variance penalization.
\newblock {\em arXiv preprint arXiv:2006.07544}, 2020.

\bibitem{ahuja2021invariance}
Kartik Ahuja, Ethan Caballero, Dinghuai Zhang, Jean-Christophe Gagnon-Audet,
  Yoshua Bengio, Ioannis Mitliagkas, and Irina Rish.
\newblock Invariance principle meets information bottleneck for
  out-of-distribution generalization.
\newblock {\em Advances in Neural Information Processing Systems}, 2021.

\bibitem{jakobsen2022distributional}
Martin~Emil Jakobsen and Jonas Peters.
\newblock Distributional robustness of {K}-class estimators and the {PULSE}.
\newblock {\em The Econometrics Journal}, 25(2):404--432, 2022.

\bibitem{christiansen2021causal}
Rune Christiansen, Niklas Pfister, Martin~Emil Jakobsen, Nicola Gnecco, and
  Jonas Peters.
\newblock A causal framework for distribution generalization.
\newblock {\em IEEE Transactions on Pattern Analysis and Machine Intelligence},
  44(10):6614--6630, 2021.

\bibitem{kook2022distributional}
Lucas Kook, Beate Sick, and Peter B{\"u}hlmann.
\newblock Distributional anchor regression.
\newblock {\em Statistics and Computing}, 32(3):39, 2022.

\bibitem{frogner2019incorporating}
Charlie Frogner, Sebastian Claici, Edward Chien, and Justin Solomon.
\newblock Incorporating unlabeled data into distributionally robust learning.
\newblock {\em Journal of Machine Learning Research}, 22(56):1--46, 2021.

\bibitem{liu2022distributionally}
Jiashuo Liu, Jiayun Wu, Bo~Li, and Peng Cui.
\newblock Distributionally robust optimization with data geometry.
\newblock {\em Advances in Neural Information Processing Systems}, 2022.

\bibitem{pearl2009causality}
Judea Pearl.
\newblock {\em Causality: Models, Reasoning and Inference}.
\newblock Cambridge University Press, USA, 2nd edition, 2009.

\bibitem{angrist1996identification}
Joshua~D Angrist, Guido~W Imbens, and Donald~B Rubin.
\newblock Identification of causal effects using instrumental variables.
\newblock {\em Journal of the American Statistical Association},
  91(434):444--455, 1996.

\bibitem{hartford2017deep}
Jason Hartford, Greg Lewis, Kevin Leyton-Brown, and Matt Taddy.
\newblock Deep {IV}: A flexible approach for counterfactual prediction.
\newblock In {\em International Conference on Machine Learning}, pages
  1414--1423. PMLR, 2017.

\bibitem{singh2019kernel}
Rahul Singh, Maneesh Sahani, and Arthur Gretton.
\newblock Kernel instrumental variable regression.
\newblock {\em Advances in Neural Information Processing Systems}, 2019.

\bibitem{bennett2019deep}
Andrew Bennett, Nathan Kallus, and Tobias Schnabel.
\newblock Deep generalized method of moments for instrumental variable
  analysis.
\newblock {\em Advances in Neural Information Processing Systems}, 2019.

\bibitem{muandet2020dual}
Krikamol Muandet, Arash Mehrjou, Si~Kai Lee, and Anant Raj.
\newblock Dual instrumental variable regression.
\newblock {\em Advances in Neural Information Processing Systems}, 2020.

\bibitem{gulrajani2021in}
Ishaan Gulrajani and David Lopez-Paz.
\newblock In search of lost domain generalization.
\newblock In {\em International Conference on Learning Representations}, 2021.

\bibitem{peters2017elements}
Jonas Peters, Dominik Janzing, and Bernhard Sch{\"o}lkopf.
\newblock {\em Elements of causal inference: foundations and learning
  algorithms}.
\newblock 2017.

\bibitem{yu2015useful}
Yi~Yu, Tengyao Wang, and Richard~J Samworth.
\newblock A useful variant of the {D}avis--{K}ahan theorem for statisticians.
\newblock {\em Biometrika}, 102(2):315--323, 2015.

\bibitem{van2000asymptotic}
Aad~W Van~der Vaart.
\newblock {\em Asymptotic statistics}, volume~3.
\newblock Cambridge University Press, 2000.

\bibitem{chevalley2022causalbench}
Mathieu Chevalley, Yusuf Roohani, Arash Mehrjou, Jure Leskovec, and Patrick
  Schwab.
\newblock Causalbench: A large-scale benchmark for network inference from
  single-cell perturbation data.
\newblock {\em arXiv preprint arXiv:2210.17283}, 2022.

\bibitem{qi2013repurposing}
Lei~S Qi, Matthew~H Larson, Luke~A Gilbert, Jennifer~A Doudna, Jonathan~S
  Weissman, Adam~P Arkin, and Wendell~A Lim.
\newblock Repurposing {CRISPR} as an {RNA}-guided platform for
  sequence-specific control of gene expression.
\newblock {\em Cell}, 152(5):1173--1183, 2013.

\end{thebibliography}
  \bibliographystyle{unsrt}


\clearpage
\appendix
\hypersetup{
    colorlinks,
   linkcolor={pierCite},
    citecolor={pierCite},
    urlcolor={pierCite}
}
\section*{Appendix}
The following sections provide deferred discussions, proofs and experimental details.

\section{Extended related work}
\label{sec:apx-related_work}

\begin{table}[tbp]
    \centering
    \caption{Comparison of various distributional robustness frameworks and what kind of assumptions their analysis can account for (with an incomplete list of examples for each framework). 
    }\label{tab:rw}
    \vspace{1pt}
\resizebox{.8\columnwidth}{!}{%
\begin{tabular}{@{}cccc@{}}
\toprule
Framework accounts for~ &
  \begin{tabular}[c]{@{}c@{}}~bounded~~\\ shifts\end{tabular} & 
  \begin{tabular}[c]{@{}c@{}}partial identifiability of\\  ~~model parameters ~~\end{tabular} &
  \begin{tabular}[c]{@{}c@{}}partial identifiability of\\  ~~robustness set\end{tabular} \\ 
  \toprule
\begin{tabular}[c]{@{}c@{}}DRO\\ \citep{bental2013robust, duchi2021learning, sinha2017certifying, mohajerin2018data, sagawa2019distributionally} \end{tabular} & \cmark & $-$  & \xmark  \\ \midrule
\begin{tabular}[c]{@{}c@{}}Infinite robustness \\
\citep{peters2016causal, fan2023environment, magliacane2018domain, rojas2018invariant,arjovsky2020invariant, ahuja2020invariant,
shi2021gradient, 
xie2020risk, krueger2021out, ahuja2021invariance}\\

\end{tabular}
& \xmark  & \xmark & \xmark \\ \midrule
\begin{tabular}[c]
{@{}c@{}}Finite robustness \\
\citep{rothenhausler2021anchor, jakobsen2022distributional, christiansen2021causal, kook2022distributional, shen2023causalityoriented}
\end{tabular} &\cmark  & \cmark  & \xmark  \\ \midrule
\begin{tabular}[c]
{@{}c@{}}
Partially id. robustness\\
(this work)~~
\phantom{~}
\end{tabular} & \cmark & \cmark & \cmark \\ \bottomrule
\end{tabular}}
\end{table} 
To put our work into context,
first, we discuss relevant distributional robustness literature organized according to structural assumptions on the desired robustness set (see \cref{tab:rw}). Second, we summarize existing views on partial identifiability in the causality and econometrics literature and how our findings connect to their perspective.

\textbf{\textit{No structural assumptions on the shift.}}
\textbf{DRO:} Distributionally robust optimization (DRO) tackles the problem of domain generalization when the robustness set is a ball around the training distribution w.r.t. some probability distance measure, e.g., Wasserstein distance \cite{sinha2017certifying, mohajerin2018data} or $f$-divergences \citep{bental2013robust, duchi2021learning}.
Considering all test distributions in a discrepancy ball can lead to overly conservative predictions, and therefore, alternatives have been proposed in, e.g., the Group DRO literature \citep{sagawa2019distributionally,frogner2019incorporating, liu2022distributionally}.
However, these methods cannot protect against perturbations larger than the ones seen during training time and do not provide a clear interpretation of the perturbations class \cite{shen2023causalityoriented}.

\textbf{\textit{Structural assumptions on the shift.}}
Robustness from the lens of causality takes a step further, by assuming a structural causal model \cite{pearl2009causality} generating the observed data $(X, Y)$. 
\textbf{Infinite robustness methods:} 
The motivation of causal methods for robustness is that the causal function is worst-case optimal to predict the response under interventions of arbitrary direction and strength on the covariates \citep{meinshausen2018causality, buhlmann2020invariance}. For this reason, causal models achieve what we call \emph{infinite robustness}.
Depending on the assumptions of the SCM, there are different ways to achieve infinite robustness.
When there are no latent confounders, several works \citep{peters2016causal, fan2023environment, magliacane2018domain, rojas2018invariant, ahuja2020invariant, arjovsky2020invariant, shi2021gradient, xie2020risk, krueger2021out, ahuja2021invariance}  aim to identify the causal parents and achieve infinite robustness by exploiting the heterogeneity across training environments.
In the presence of latent confounders, it is possible to achieve infinite robustness by identifying the causal function with, e.g., the instrumental variable method \cite{angrist1996identification, hartford2017deep, singh2019kernel, bennett2019deep, muandet2020dual}.
There are several limitations to  \emph{infinitely robust} methods. First, the identifiability conditions of the causal parents and/or causal function are often challenging to verify in practice.
Second, ERM can outperform these methods when the interventions (read shifts) at test time are not arbitrarily strong or act directly on the response or latent variable \citep{ahuja2020empirical, gulrajani2021in}.
\textbf{Finite robustness methods:} 
In real data, shifts of arbitrary direction and strength in the covariates are unrealistic. Thus, different methods \citep{rothenhausler2021anchor, jakobsen2022distributional, kook2022distributional, shen2023causalityoriented, christiansen2021causal}
trade-off robustness against predictive power to achieve what we call \emph{finite robustness}. 
The main idea of finite robustness methods is to learn a function that is as predictive as possible while protecting against shifts up to some strength in the directions observed during training time.
These methods, however, only provide robustness guarantees that depend on the heterogeneity of the training data and do not offer insights into the limits of \emph{algorithm-independent robustness} under shifts in new directions.

\textit{\textbf{Partial identifiability}}:
The problem of identification is at the center of the causal and econometric literature \citep{peters2017elements, amemiya1985advanced}. It studies the conditions under which the (population) training distribution uniquely determines the causal parameters of the underlying SCM.
Often, the training distribution only offers partial information about the causal parameters and, therefore, determines a set of observational equivalent parameters. This setting is known as \emph{partial} or \emph{set identification} and is used in causality and econometrics to learn intervals within which the true causal parameter lies \citep{tamer2010partial}.
In this work, we borrow the notion of partial identification to study the problem of distributional robustness when the robustness set itself is only partially identified.

\section{Extension to the general additive shift setting}\label{sec:apx-extension}


We discuss how our setting changes when we relax the assumptions on the existence of the reference environment. We consider the data-generating process in \cref{eqn:SCM}, where $\Ecaltrain = [m]$, $m \in \mathbb{N}$. If no environment $e$ exists with $\mue = 0$ and $\Sigmae = 0$, we first pick an arbitrary distribution $\PrefXY$ as the reference environment\footnote{In practice, it is useful to pick a distribution with the smallest covariance, i.e. $\trace \Cov(\Xref) \leq \trace \Cov(\Xe)$ for all $e$.} .
We denote $\noisecovxx' := \noisecovxxstar + \Sigmaref$. 

First, we show we can express the space $\cS$ of training additive shift directions defined in \cref{eqn:def-S} in the general case. We center all distributions by $\muref$  to obtain centered distributions $\tilde{\Pr}$ that $\EE_{X\sim \tilde{P}_e}[X]=0$. With respect to the arbitrary reference environment, we now define
\begin{align*}
    \tilde{\cS} \coloneqq \range \left[\bigcup_{e \in \Ecaltrain} \left(\Sigmae - \Sigmaref + (\mue - \muref)(\mue - \muref)^\top \right)\right]\subset \R^d.
\end{align*}
We now consider test shifts with respect to the environment $\PrefXY$\footnote{In other words, we require that the test distribution is a shifted version of the (arbitrarily) chosen reference distribution.}. We define the test shift upper bound $\Mtest = \gamma \Mseen + \gammaprime R R^\top$, where $\range(\Mseen) \subset \cS$ and $\range(R) \subset \cSperp$. Again, we can decompose the parameter $\betastar$ as $\betastar = \betastarS + \betastarperp$.
The projection $\betastarS$ of the causal parameter onto the relative training shifts induces the following observationally equivalent parameters corresponding to the reference distribution:
\begin{align*}
    \thetastarS := (\betaS, \noisecovxx', \noisecovxyS, \noisecovyyS) = (\betastarS, \noisecovxx', \noisecovxystar + \noisecovxx' \betastarperp, \noisecovyystar + 2 \langle \noisecovxystar, \betastarperp\rangle + \langle \betastarperp, \noisecovxx' \betastarperp\rangle).
\end{align*}
Again, $\thetastarS$ can be identified from the training distributions and is referred to as the \emph{identified model parameters}. The following adapted version of \cref{prop:invariant-set} shows that assuming shifts on $\PrefXY$, the robust prediction model is only identifiable if the test shifts are in the direction of the relative training shifts:
\begin{proposition}[Identifiability of reference distribution parameters and robust prediction model]
\label{prop:invariant-set-general} Suppose that the set of training and test distributions is generated according to \Cref{eqn:SCM,eqn:testAbound}.
Then, $\thetacS$ is observationally equivalent
to $\thetastar$ and computable from training distributions.
Furthermore, it holds that
\begin{enumerate}[(a)]
 \item 
 the model parameters 
 generating the reference distribution can be identified up to the following \idset: 
\begin{align*}
   \Invset = \{ \betastarS + \alpha, \noisecovxx', \noisecovxyS - \noisecovxx' \alpha, \noisecovyyS - 2 \alpha^\top \noisecovxyS + \alpha^\top \noisecovxx' \alpha \colon \alpha \in \cSperp \}  \ni \thetastar 
\end{align*}
\item 
the robust prediction model $\betarob$ as defined in \cref{eqn:formula-robust-predictor} is identified up to the set
    \begin{align*}
      \betastarS + (\gamma \projM + \noisecovxx')^{-1} \noisecovxyS + \{ (\gamma \projM + \noisecovxx')^{-1} \alpha\colon \, \alpha \in \range(R)  \} \ni \betarob 
    \end{align*}
\end{enumerate}
\end{proposition}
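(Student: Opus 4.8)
The plan is to mirror the proof of \cref{prop:invariant-set}, replacing the canonical zero-shift environment by the arbitrary reference $\PrefXY$. The key reparameterization is to absorb the unknown reference shift into the noise: writing the reference covariates as $\Xref = A_{\mathrm{ref}} + \eta$ with $A_{\mathrm{ref}} \independent (\eta,\xi)$, the pair $(\Xref,\Yref)$ still satisfies $\Yref = {\betastar}^\top \Xref + \xi$, but now with effective noise $\eta' := A_{\mathrm{ref}} + \eta$ whose covariance is $\noisecovxx' = \noisecovxxstar + \Sigmaref = \Cov(\Xref)$. Crucially, since $A_{\mathrm{ref}} \independent \xi$, the effective cross-covariance equals $\noisecovxystar$ and the variance equals $\noisecovyystar$, both unchanged. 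This explains why only the \emph{combined} quantity $\noisecovxx'$ is identifiable rather than $\noisecovxxstar$ itself: absent a genuinely unshifted environment, $\Sigmaref$ cannot be separated from the structural noise. Under this reparameterization the problem becomes exactly the reference-environment setting of \cref{prop:invariant-set}, with the seen-shift space $\cS$ replaced by $\tilde\cS$, the noise covariance $\noisecovxxstar$ replaced by $\noisecovxx'$, and all training shifts read relative to $(\muref,\Sigmaref)$.

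For part (a), I would center every distribution by $\muref$ and match the first two moments of $(\Xe,\Ye)$ across all $e \in \Ecaltrain$ between a candidate $\theta=(\beta,\Sigma)$ and the truth. The reference covariance condition pins $\noisecovxx = \Cov(\Xref) = \noisecovxx'$. The relative mean condition $\EE[\Ye]-\EE[\Yref] = \beta^\top(\mue-\muref)$ forces $(\beta-\betastar)^\top(\mue-\muref)=0$, and the difference of cross-covariances across $e$ and the reference forces $(\Sigmae-\Sigmaref)(\beta-\betastar)=0$ for every $e$. Since $\Sigmae-\Sigmaref$ is symmetric, $\ker(\Sigmae-\Sigmaref)=\range(\Sigmae-\Sigmaref)^\perp$, so the possible indefiniteness of the relative covariance is harmless, and the two conditions together give $\beta-\betastar \perp \tilde\cS$, i.e. $\beta=\betastarS+\alpha$ with $\alpha\in\cSperp$. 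Reading $\noisecovxy$ off the reference cross-covariance equation then yields $\noisecovxy = \noisecovxystar + \noisecovxx'(\betastar-\beta) = \noisecovxyS - \noisecovxx'\alpha$, and the reference variance equation yields the stated quadratic-in-$\alpha$ expression for the $\xi$-variance; a short check confirms these choices match the moments in every environment, not just the reference. Finally I would verify the two standalone claims: that $\thetacS$ is the $\alpha=0$ member (hence observationally equivalent to $\thetastar$, recovered at $\alpha=\betastarperp$), and that it is computable from data, by rewriting $\noisecovxyS = \Cov(\Xref,\Yref) - \noisecovxx'\betastarS$, which involves only the identifiable projection $\betastarS$ and observed reference moments.

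For part (b), I would substitute the family from (a) into the closed form \eqref{eqn:formula-robust-predictor}, $\betarob_\theta = \beta + (\Mtest+\noisecovxx)^{-1}\noisecovxy$. Writing $M := \Mtest + \noisecovxx'$ and using $\beta=\betastarS+\alpha$, $\noisecovxy=\noisecovxyS-\noisecovxx'\alpha$, the expression collapses to $\betarob_\theta = \betastarS + M^{-1}\noisecovxyS + M^{-1}\Mtest\,\alpha$ via $M^{-1}(M-\noisecovxx')=M^{-1}\Mtest$. By Assumption~\ref{as:Mtest-structure}, $\range(\Mseen)\subset\cS$ together with symmetry give $\Mseen\alpha=0$ for $\alpha\in\cSperp$, hence $\Mtest\alpha=\gammaprime R R^\top\alpha$; since $R R^\top$ is the orthogonal projection onto $\range(R)\subset\cSperp$, the term $M^{-1}\Mtest\alpha$ sweeps out exactly $\{M^{-1}\alpha : \alpha\in\range(R)\}$ as $\alpha$ ranges over $\cSperp$. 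This reproduces the stated identified set for $\betarob$: a single data-computable offset plus a family indexed by $\range(R)$.

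The main obstacle I anticipate is not any single calculation but getting the virtual/original noise bookkeeping exactly right: one must track that only $\noisecovxx'$ (not $\noisecovxxstar$) is identifiable, that $\noisecovxystar$ and $\noisecovyystar$ survive the reparameterization unchanged, and that the ostensibly non-identifiable expression $\noisecovxyS = \noisecovxystar + \noisecovxx'\betastarperp$ nonetheless reduces to a data-computable quantity. The one genuinely technical point is confirming that moment matching at the reference, combined with the orthogonality $\alpha\in\cSperp$, propagates to all environments (in particular for $\Var(\Ye)$), and carrying out the quadratic-in-$\alpha$ algebra for $\noisecovyyS$; the indefiniteness of $\Sigmae-\Sigmaref$ is the only place where the argument departs from the PSD reasoning behind \cref{prop:invariant-set}, and it is dispatched by the symmetry identity $\ker=\range^\perp$.
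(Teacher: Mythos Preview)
Your proposal is correct and follows essentially the same route as the paper, which simply states that the proof is analogous to that of \cref{prop:invariant-set}. Your reparameterization via the effective noise $\eta' = A_{\mathrm{ref}} + \eta$ is precisely the mechanism that reduces the general case to the reference-environment case, and your moment-matching argument with relative shifts $(\mue-\muref,\,\Sigmae-\Sigmaref)$ together with the symmetry observation $\ker(\Sigmae-\Sigmaref)=\range(\Sigmae-\Sigmaref)^\perp$ is the appropriate generalization of the paper's PSD-based identification step; the algebra for part~(b) via $M^{-1}(M-\noisecovxx')=M^{-1}\Mtest$ likewise matches the paper's derivation in \cref{sec:apx-proof-invariant-set}.
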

The proof is analogous to \cref{sec:apx-proof-invariant-set}. A version of \cref{thm:pi-loss-lower-bound} for perturbations on the reference environment follows accordingly.


\section{Comparison to finite robustness methods continued}\label{sec:apx-anchor-connections}
\subsection{The setting of continuous anchor regression \cite{rothenhausler2021anchor}}\label{subsec:anchor}
In this section, we evaluate the \idRRs of the continuous anchor regression estimator.
    In the continuous anchor regression setting, during training we 
    observe the distribution 
    according to the process $X = M A + \eta$; $Y = {\betastar}^\top X + \xi$, where $A$ is an observed $q$-dimensional anchor variable with mean $0$ and covariance $\Sigma_A$ and $M \in \R^{d \times q}$ is a known matrix. Note that in this setting, we do not have a reference environment, but, since the anchor variable is observed, the distribution of the additive shift $M A$ is known.  The test shifts are assumed to be bounded by $\Mtest = \gamma M \Sigma_A M^\top$. Since $\range(\Mtest )\subset \cS = \range(M)$, no new directions are observed during test time, in other words, $R = 0$. Thus, both the corresponding robust loss and the anchor regression estimator can be determined from training data. It holds that
\begin{align*}
    \betaa = \argmin_{\beta \in \R^d} \Lossrob(\beta; \thetastar,\gamma M \Sigma_A M^\top).
\end{align*}
Again, the pooled OLS estimator corresponds to $\betaa$ with $\gamma = 1$. Similar to the discrete anchor case, in case the test shifts are given by $\Mnew = \gamma M \Sigma_A M^\top + \gammaprime R R^\top$, the worst-case robust risk \eqref{eqn:PI-robust-loss} is given by
\begin{align*}
    \Lossrobpi(\beta; \Invset, \Mnew) = \gammaprime (\Cker + \| R^\top \beta \|_2)^2 + \Lossrob(\beta; \thetastar,\gamma M \Sigma_A M^\top) 
\end{align*}
and for the best worst-case robustness of the anchor estimator it holds 
\begin{align*}
    \Lossrobpi(\betaa, \Invset; \Mtest)&= (\Cker + \| R R^\top (\noisecovxxstar + \gamma M \Sigma_A M^\top )^{-1} \noisecovxyS \| )^2 \gammaprime + \text{const}; \\
    \lim_{\gammaprime \to 0} \Lossrobpi(\betaa, \Invset;\Mnew)/\gammaprime &= \lim_{\gammaprime \to 0} \minimaxPIlossarg{\Mnew}/ {\gammaprime}.
\end{align*}
The above results follow by analogy with \cref{sec:apx-proof-of-corollary}.

\subsection{Distributionally robust invariant gradients (DRIG) \cite{shen2023causalityoriented}}\label{subsec:drig}
DRIG \cite{shen2023causalityoriented} introduce a more general additive shift framework, where a collection of additive shifts $\Ae$ is given with moments $(\mue,\Sigmae)$. For each environment $e$, we observe data $(\Xe, \Ye)$ distributed according to the equations $\Xe= \Ae + \eta; \: \Ye = {\betastar}^\top \Xe + \xi$, where the noise is distributed like in \cref{eqn:SCM}. This DGP arises from the structural causal model assumption as described in \cref{fig:ex-scm}.  DRIG consider more a more general intervention setting, additionally allowing additive shifts of $Y$ and hidden confounders $H$. However, their identifiability results can only be shown for the case of interventions on $X$, and since identifiability of the causal parameter is a crucial part of our analysis, we only consider shifts on the covariates. DRIG assumes existence of a reference environment  $e = 0$ with $\mu_0 = 0$ and for which it is required that the second moment of the reference environment is dominated by the second moment of the training mixture: 
\begin{align*}
    \Sigma_0 \preceq \sum_{e \in [m]} w_e (\Sigma_e + \mu_e \mu_e^\top).
\end{align*}
This assumption allows \cite{shen2023causalityoriented} to derive the DRIG estimator which is robust against test shifts upper bounded by $\Mdrig :=  \gamma \sum_{e \in [m]} w_e (\Sigma_e - \Sigma_0 + \mu_e \mu_e^\top)$. The following lemma allows us to make further statements about $\Mdrig$:
\begin{lemma}\label{lm:span-inclusion}
    Let $A$ and $B$ be positive semidefinite matrices such that $B \preceq A$. Then it holds that $\range(B) \subset \range(A)$. 
\end{lemma}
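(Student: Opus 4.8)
The plan is to exploit the symmetry of positive semidefinite matrices, which lets me translate the range inclusion into an equivalent kernel inclusion. Since both $A$ and $B$ are symmetric, we have the orthogonal decompositions $\range(A) = \Ker(A)^\perp$ and $\range(B) = \Ker(B)^\perp$. Hence the desired conclusion $\range(B) \subset \range(A)$ is equivalent, upon taking orthogonal complements, to the reverse kernel inclusion $\Ker(A) \subset \Ker(B)$. I would reduce to proving this latter statement, which is more convenient to attack directly using the quadratic form associated with the matrix ordering.

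The core argument then proceeds as follows. Fix any $x \in \Ker(A)$, so that $Ax = 0$ and in particular $x^\top A x = 0$. The hypothesis $B \preceq A$ means that $A - B$ is positive semidefinite, so $x^\top(A-B)x \geq 0$, which rearranges to $x^\top B x \leq x^\top A x = 0$. On the other hand, $B$ itself is positive semidefinite, giving $x^\top B x \geq 0$. Combining the two inequalities forces $x^\top B x = 0$. The final step is the key fact that for a positive semidefinite matrix $B$, the condition $x^\top B x = 0$ already implies $Bx = 0$: writing $B = L L^\top$ via an eigen- or Cholesky factorization, one has $x^\top B x = \norm{L^\top x}^2$, so the vanishing of the quadratic form gives $L^\top x = 0$ and hence $Bx = L L^\top x = 0$. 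Thus $x \in \Ker(B)$, establishing $\Ker(A) \subset \Ker(B)$ and completing the proof after taking complements.

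This lemma is a routine fact from linear algebra, so I do not anticipate a genuine obstacle; the only point requiring slight care is the implication $x^\top B x = 0 \Rightarrow Bx = 0$, which relies essentially on positive semidefiniteness (it is false for indefinite matrices) and is most cleanly justified through the square-root or spectral factorization of $B$. Everything else is a direct consequence of the matrix ordering and the symmetry-based identity $\range(M) = \Ker(M)^\perp$.
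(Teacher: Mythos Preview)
Your proof is correct and follows essentially the same route as the paper: reduce $\range(B)\subset\range(A)$ to $\Ker(A)\subset\Ker(B)$ via orthogonal complements, then use $B\preceq A$ together with positive semidefiniteness of $B$ to force $x^\top Bx=0$ for any $x\in\Ker(A)$. You even add a cleaner justification of the step $x^\top Bx=0\Rightarrow Bx=0$ via a square-root factorization, which the paper's proof leaves implicit.
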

\begin{proof}
    It suffices to show that $\ker(A) \subset \ker(B)$. ($\ker(A) \subset \ker (B)$ implies that $\range(A) = \ker (A)^\perp  \subset \ker(B)^\perp = \range(B)  $.) Consider $x \in \ker(A)$, $x \neq 0$. Then it holds that $x^\top (A - B) x = x^\top A x - x^\top B x = 0 - x^\top B x \geq 0$, from which it follows that $x^\top B x = 0$ and thus $x \in \ker(B)$. 
\end{proof}
Because of the assumption $\Sigma_0 \preceq \sum_{e \in [m]} w_e (\Sigma_e + \mue \mue^\top)$, by \cref{lm:span-inclusion} it follows that $\range(\Sigma_0) \subset \range \sum_{e \geq 1} ( \Sigma_e + \mu_e \mu_e^\top )$  and thus 
\begin{align*}
\range(\Mdrig) \subseteq \range\left( \sum_{e \geq 1} w_e (\Sigma_e + \mu_e \mu_e^\top) \right). 
\end{align*}
Hence, the robustness directions achievable by DRIG in the "dominated reference environment" setting are the same as the ones under the assumption $\Sigma_0 = 0$. \\
Again, we observe that the test shifts bounded by $\gamma \Mdrig$ are fully contained in the space of identified directions $\cS$. If the test shifts are instead bounded by $\Mnew := \gamma \Mdrig + \gammaprime R R^\top$,  including some unseen directions $\range(R) \subset \cSperp$, the robust risk in the DRIG setting is only partially identified. The worst-case robust risk \eqref{eqn:PI-robust-loss} is given by 
\begin{align*}
    \Lossrobpi(\beta; \Invset, \Mnew) = \gammaprime (\Cker + \| R^\top \beta \|_2)^2 + \Lossrob(\beta; \thetastar, \gamma \Mdrig),
\end{align*}
and again, the DRIG estimator is optimal for infinitesimal shifts $\gammaprime$ and suboptimal for larger $\gammaprime$:
\begin{equation*}
    \begin{aligned}
        \Lossrobpi(\betaDRIG; \Invset,\Mnew) &= (\Cker + \| R R^\top (\noisecovxxstar + \gamma \Mdrig)^{-1} \noisecovxyS \| )^2 \gammaprime + \text{const}; \\ 
\text{whereas }\frac{\minimaxPIlossarg{\Mnew}}{\gammaprime} &= \Cker^2, \: \text{ if } \gammaprime \geq \gammath; \\  \lim_{\gammaprime \to 0} \frac{\minimaxPIlossarg{\Mnew}}{\gammaprime} &= (\Cker + \| R R^\top (\noisecovxxstar + \gamma \Mdrig)^{-1} \noisecovxyS\| )^2.
    \end{aligned}
\end{equation*}
The above results follow by plugging $\Mnew$ with $M := \Mdrig$ into the proof of \cref{cor:estimators} in \cref{sec:apx-proof-of-corollary}.

\section{Empirical estimation of the worst-case robust predictor}\label{sec:apx-empirical-estimation}

In this section, we discuss how to compute the worst-case robust loss and its minimizer from finite-sample multi-environment training data. We first describe the finite-sample setting and provide a high-level algorithm. We then discuss some parts of the algorithm in more detail. Finally, we show that the empirical worst-case robust loss is consistent under certain assumptions.  
Recall that we assume that $\Mtest = \gamma \PSMpop + \gammaprime R R^\top$, where $\gamma, \gammaprime \geq 0$, $\PSMpop $ is a PSD matrix satisfying $\range (\PSMpop) \subset \cS$ and $R$ is a semi-orthogonal matrix satisfying $\range (R) \subset \cSperp$. 

\subsection{Computing the worst-case robust loss}

\begin{algorithm}[h!]
    \caption{Computation of the worst-case robust loss} \label{alg:id-rob-loss}
    \begin{algorithmic}[1]
    \State \textbf{Input:} Multi-environment data $\cD \coloneqq \cup_{e \in \Ecaltrain} \cD_e$, test shift strengths $\gamma, \gammaprime > 0$, test shift directions $\Mtest \in \R^{d \times d}$, upper bound $C > 0$ on the norm of $\betastar$.
    
    \State \textbf{Step 1:} Estimate the training shift directions $\cShat(\cD)$, its  orthogonal complement $\cSperphat(\cD)$, and the identified linear parameter $\betaShat$.
    \State \textbf{Step 2:} Estimate the identified and non-identified test shift upper bounds $\Mseenemp$, $\Rhat \Rhat^\top$, respectively, from $\Mtest$, $\cShat(\cD)$ and $\cSperphat(\cD)$.
    \State \textbf{Step 3:} Estimate the maximum norm $\Ckerhat$ of the non-identified linear parameter.

    \State \vphantom{$\hat{R}$}\textbf{Step 4:} Compute the worst-case robust loss function 
    \begin{align*}
        \LL_n(\beta; \betaShat, \PSM, \PSperpM) &\gets \underbrace{\LLref(\beta; \cD_0)}_{\text{reference loss}} + \underbrace{\LLinv(\beta; \betaShat, \PSM, \gamma)}_{\text{invariance penalty term}} + \underbrace{\LLid(\beta; \Ckerhat, \RRhat, \gammaprime)}_{\text{non-identifiability penalty term}}.
    \end{align*}
    \State \textbf{Return:}  worst-case robust predictor and the estimated minimax "hardness" of the problem: 
    \begin{align*}
        \betarobpihat &\gets \argmin_{\beta \in \R^d} \LL_n(\beta; \betaShat, \PSM, \PSperpM); \\ 
       \hat{\mathfrak{M}}(\cD,\gamma, \gammaprime, \Mtest) &\gets \min_{\beta \in \R^d} \LL_n(\beta; \betaShat, \PSM, \PSperpM).
    \end{align*}

    \end{algorithmic}
\end{algorithm}

\paragraph{Training data.} We observe data from $m + 1$ training environments indexed by $E \in \Ecaltrain = \{0,..., m\}$, where $E = 0$ represents the reference environment. We impose a discrete probability distribution $\prob^{E}$ on the training environment $E \in \Ecaltrain$, resulting in the joint distribution $(X, Y, E) \sim \prob^{X, Y \mid E} \times \prob^{E}$.  For each environment $E = e$, we observe the samples $\cD_e \coloneqq \{(X_{e,i}, Y_{e,i})\}_{i = 1}^{n_e}$, where $(X_{e, i}, Y_{e, i})$ are independent copies of $(X_e, Y_e) \sim \prob^{X, Y \mid E = e}$. Then, the resulting dataset is $\cD \coloneqq \cup_{e \in \Ecaltrain} \cD_e$ with $n \coloneqq n_0 + \cdots + n_m$. Furthermore, for each environment $E = e$, we define the weights $w_e \coloneqq n_e / n$. 

\paragraph{Computation of the worst-case robust loss.} In \cref{alg:id-rob-loss}, we present a high-level scheme for computing the worst-case robust loss from multi-environment data, which consists of multiple steps. First, nuisance parameters related to the training and test shift directions are estimated, which we describe in more detail below. Afterwards, the three terms of the loss are computed: the (squared) loss $\LLref(\beta; \cD_0)$ on the reference environment is computed as 
\begin{align*}
       \LLref(\beta; \cD_0) = \sum_{i = 1}^{n_0} (Y_{0,i} - \beta^\top X_{0,i})^2. 
    \end{align*}
The invariance penalty term $\LLinv(\beta; \betaShat, \PSM, \gamma)$ (which increasingly aligns any estimator $\beta$ in the direction of the estimated invariant predictor $\betaShat$ as $\gamma \to \infty$) can be computed as following in the linear setting:
    \begin{align*}
       \LLinv(\beta; \betaShat, \PSM, \gamma) = \gamma  (\beta - \betaShat)^\top \PSM (\beta - \betaShat) . 
    \end{align*}
Finally, the non-identifiability penalty term $\LLid(\beta; \Ckerhat, \PSperpM, \gamma)$ can be computed as follows:
    \begin{align*}
       \LLid(\beta; \Ckerhat, \PSperpM, \gammaprime) &= \gammaprime (\Cker + \| \PSperpM \beta \|_2)^2. 
    \end{align*}
The non-identifiability term, with increasing $\gammaprime$, penalizes any predictor $\beta$ towards zero on the subspace $\Rhat$ of non-identified test shift directions. In total, the worst-case robust loss (in the linear setting) equals
\begin{align*}
    \LL_n(\beta; \betaShat, \PSM, \PSperpM) = \sum_{i = 1}^{n_0} (Y_{0,i} - \beta^\top X_{0,i})^2 + \gamma (\beta - \betaShat)^\top \PSM (\beta - \betaShat) + \gammaprime (\Cker + \| \PSperpM \beta \|_2)^2, 
\end{align*}
where we suppress dependence on $C$, $\gamma$ and $\gammaprime$ and only leave the dependence on the nuisance parameters.
\paragraph{Choice/Estimation of nuisance parameters.} We now provide more details on the empirical estimation of the nuisance parameters $\cShat, \cSperphat, \Rhat$, $\PSM$, and $\betaShat$. 
\begin{itemize}
    \item The \textbf{constant} $C$ corresponds to the upper bound on the norm of the true causal parameter $\betastar$. Thus, the practitioner chooses $C$ in advance to ensure that (with high probability) $\| \betastar \|_2 \leq C$. 
    \item The \textbf{training shift directions} $\cShat$ can be computed via 
     \begin{align}\label{eq:S-estimation}
        \cShat(\cD) = \mathrm{range} \left[\sum_{e = 1}^m ( \Cov(X^e)- \Cov(X^0) + \mu_e \mu_e^\top - \mu_0 \mu_0^\top)\right],
    \end{align}
where for $e \in \Ecaltrain$, the matrix $\Cov(X^e)$ is the empirical covariance matrix estimated within the training environment $E = e$, and $\mu_e \in \R^d$ is the empirical mean of the covariates within the training environment $E = e$. Additionally, we compute the orthogonal complement $\cSperphat(\cD)$ of the space $\cShat(\cD)$\footnote{In general, $S(\cD)$ is a proper subspace of $\R^d$ and the RHS of \eqref{eq:S-estimation} corresponds to a sum of low-rank second moments. This can be consistently estimated if, for instance, the rank of each shift is known (e.g. in the mean shift setting), or the covariances have a spiked structure, allowing to cut off small eigenvalues.}. 
\item  Computation of the \textbf{seen and unseen test shift directions.} Multiple options are possible for the practitioner to compute the empirical test shift directions $\Mseenemp$ and $\Rhat \Rhat^\top$. One option is to choose $\Mseenemp = \sum_{e} w_e ( \Cov(X^e)- \Cov(X^0) + \mu_e \mu_e^\top - \mu_0 \mu_0^\top)$, where $w_e$ is the proportions of observations in environment $e \in \Ecaltrain$, akin to anchor regression \cite{rothenhausler2021anchor} and DRIG \cite{shen2023causalityoriented} with an appropriate shift magnitude $\gamma$. Afterwards, $\Rhat \Rhat^\top$ is chosen to be a projection onto an appropriate subspace of $\cSperphat$ (if additional information about test shift directions is available). If no information is given, we can choose $\Rhat \Rhat^\top = \Pi_{\cSperphat}$. Alternatively, if the information about potential test shift directions is given in form of a PSD matrix $\Mtest$, for instance, $\Mtest$ being a projection onto some subspace, we can decompose $\Mtest$ 
into identified and non-identified shift directions (and their corresponding projection matrices). 
Let $\PicShat$ and $\PicSperphat$ denote the projection matrices on $\cShat(\cD)$ and $\cSperphat(\cD)$, respectively. Consider the singular value decompositions $\PicShat \Mtest = U_{\cShat} \Sigma_{\cShat} V_{\cShat}^\top$ and $\PicSperphat \Mtest = U_{\cSperphat} \Sigma_{\cSperphat} V_{\cSperphat}^\top$ Then,  define
\begin{align*}
\Shat &= U_{\cShat}, \quad \Rhat = U_{\cSperphat}.
\end{align*}
The subspaces $\range (\PicShat \Mtest)$ and $\range( \PicSperphat \Mtest)$ are minimal subspaces contained in $\cShat$ and $\cSperphat$, respectively, such that $\range(\Mtest) \subset \range( \PicShat \Mtest) \oplus \range( \PicSperphat \Mtest)$. We can then take as $\PSM$ and $\PSperpM$ their corresponding projection matrices. 
\item The \textbf{identified parameter} $\betaShat$ (approximately) equals the true invariant parameter $\betastar$ on the space of training shift directions $\cShat$. 
As conjectured in the anchor regression literature \citep{rothenhausler2021anchor,shen2023causalityoriented, jakobsen2022distributional} (see, for example, the discussion right after Theorem~3.4 in \citep{jakobsen2022distributional} and Appendix~H.3 therein)
for $\gamma \to \infty$, the estimators $\beta^{\gamma}_{\mathrm{anchor}}$ and $\beta^{\gamma}_{\mathrm{DRIG}}$ converge to the invariant parameter $\betastar$ on $\cS$. Thus, the identified parameter can be estimated as
\begin{align*}
    \betaShat \coloneqq \PicShat \beta^{\infty}_{\mathrm{anchor}} \quad \text{or} \quad \betaShat \coloneqq \PicShat \beta^{\infty}_{\mathrm{DRIG}}
\end{align*}
for the setting of mean or mean+variance shifts, respectively. 
\end{itemize}

\subsection{Consistency of the worst-case robust predictor} 
For any estimator $\beta \in \R^d$ and given the estimated nuisance parameters $\hat\varphi \coloneqq (\PSM, \PSperpM, \betaShat)$, we define the sample worst-case robust risk as
\begin{align}\label{eqn:rob-loss-ell-sample}
\begin{split}
    \LL_n(\beta, \hat{\varphi})  \coloneqq &
    \frac{1}{n_0}
    \sum_{i \in \cD_0}\left(Y_{0,i} - \beta^\top X_{0,i} \right)^2
    + \gamma (\betaShat - \beta)^\top \PSM (\betaShat - \beta) 
    +   \gammaprime \left(\sqrt{C - \norm{\betaShat}_2^2} 
+ \norm{\PSperpM\beta}_2\right)^2.
\end{split}
\end{align}
Correspondingly, we define the estimator of the worst-case robust predictor by
\begin{align}\label{eqn:betarobpi-consistent}
    \betarobpihat \coloneqq \argmin_{\beta \in \mathcal{B}} \LL_n(\beta, \hat{\varphi}),
\end{align}
where $\mathcal{B} \subseteq \R^d$ is some compact set whose interior contains the true parameter $\betarobpi$.

To show the consistency of~\eqref{eqn:betarobpi-consistent}, we first require consistency of the nuisance parameter estimators, which we state as an assumption.
\begin{assumption}\label{ass:consistency-nuisance}
    The estimated nuisance parameters $\hat\varphi \coloneqq (\PSM, \PSperpM, \betaShat)$ are consistent, that is, for $n \to \infty$,
    \begin{align*}
       \norm{\PSM - \PSMpop}_F \stackrel{\prob}{\to}0,
        \quad
        \norm{\PSperpM - \PSperpMpop}_F \stackrel{\prob}{\to}0,
        \quad
        \hat\beta^\cS \stackrel{\prob}{\to} \betastarS \coloneqq \Pi_{\cS} \betastar,
    \end{align*}
    where for any matrix $A \in \R^{m \times q}$, $\norm{A}_F = \sqrt{\trace(A^\top A)}$ denotes the Frobenius norm,  $\PSMpop$ is a PSD matrix with bounded eigenvalues with $\range(\PSMpop) \subset \cS$, and $\PSperpMpop$ is the corresponding population projection matrix onto $\cSperp$. 
\end{assumption}
 Depending on the assumptions of the data-generating process, Assumption~\ref{ass:consistency-nuisance} can be shown to hold. For example, in the anchor regression setting \cite{rothenhausler2021anchor}, the consistency of $\Mseen = \Manchor$, the projection
matrix $\PSperpM$, and $\PicShat$ holds if the dimension of $\cS$ is known (due to the mean shift structure).
The proof relies on the Davis--Kahan theorem (see, for example, \citep{yu2015useful}) and the consistency of the covariance matrix estimator.
Moreover, in the anchor regression setting, it is conjectured that the estimator $\beta^{\infty}_{\mathrm{anchor}}$ converges to its population counterpart (as discussed right after Theorem~3.4 in \citep{jakobsen2022distributional} and Appendix~H.3 therein) which implies that $\betaShat \coloneqq \PicShat \beta^{\infty}_{\mathrm{anchor}} $ consistently estimates $\betastarS = \Pi_{\cS} \betastar$.

Under the assumption of the consistency of the nuisance parameter estimators, we can now show that~\eqref{eqn:betarobpi-consistent} is a consistent estimator of the worst-case robust predictor.

\begin{proposition}\label{prop:consistency-predictor}
  Consider the estimator  $\betarobpihat$ of the worst-case robust predictor defined in~\eqref{eqn:betarobpi-consistent}. Suppose the optimization problem is over a compact set $\cB \subseteq \R^d$ whose interior contains the true minimizer $\betarobpi$. 
  Assume that the covariance matrix $\EE[X_0X_0^\top] \succ 0$ with bounded eigenvalues and $\EE[Y_0^2] < \infty$.
  Then, 
  under Assumption~\ref{ass:consistency-nuisance},
  $\betarobpihat$ is a consistent estimator of~$\betarobpi$.
\end{proposition}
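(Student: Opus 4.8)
The plan is to treat $\betarobpihat$ as an extremum (M-)estimator with plugged-in nuisance parameters and apply the classical argmin-consistency recipe: establish (i) that the population objective has a unique, well-separated minimizer at $\betarobpi$, and (ii) that the empirical objective converges to the population objective uniformly over the compact set $\cB$. To this end, define the nuisance vector $\varphi_0 := (\PSMpop, \PSperpMpop, \betastarS)$ and the population objective
$$\bar\LL(\beta) := \EE_{\PoXYarg{\thetastar}}\!\left[(Y_0 - \beta^\top X_0)^2\right] + \gamma(\betastarS - \beta)^\top \PSMpop(\betastarS - \beta) + \gammaprime\left(\Cker + \|\PSperpMpop \beta\|_2\right)^2 .$$
By \Cref{thm:pi-loss-lower-bound} (using $\|\PSperpMpop\beta\|_2 = \|R^\top\beta\|_2$ since $R$ is semi-orthogonal), $\bar\LL(\beta) = \Lossrobpi(\beta; \Invset, \Mtest)$, so by \Cref{def:PI-robust-loss} its minimizer is exactly $\betarobpi$, which by assumption lies in the interior of $\cB$.

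First I would verify the identification condition. The reference term has Hessian $2\,\EE[X_0 X_0^\top] \succ 0$ and is therefore strictly convex; the invariance penalty is convex (as $\PSMpop \succeq 0$) and the non-identifiability penalty is convex as the square of a nonnegative convex function. Hence $\bar\LL$ is continuous and strictly convex, so $\betarobpi$ is its unique minimizer and is \emph{well-separated}: for every $\varepsilon > 0$ we have $\inf_{\beta \in \cB,\, \|\beta - \betarobpi\|_2 \ge \varepsilon} \bar\LL(\beta) > \bar\LL(\betarobpi)$, by continuity and compactness of $\cB$.

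The core step is the \emph{uniform} convergence $\sup_{\beta \in \cB} |\LL_n(\beta, \hat\varphi) - \bar\LL(\beta)| \stackrel{\prob}{\to} 0$, which I would obtain from the decomposition
$$\LL_n(\beta, \hat\varphi) - \bar\LL(\beta) = \underbrace{\big[\LL_n(\beta, \hat\varphi) - \LL_n(\beta, \varphi_0)\big]}_{\text{nuisance error}} + \underbrace{\big[\LL_n(\beta, \varphi_0) - \bar\LL(\beta)\big]}_{\text{reference ULLN}}.$$
The key simplification is that the data-dependent reference term of $\LL_n$ does not depend on $\varphi$, so the nuisance error equals the difference of the two penalty terms evaluated at $\hat\varphi$ and $\varphi_0$; since these penalties are jointly continuous in $(\beta,\varphi)$ and $\cB$ is compact, Assumption~\ref{ass:consistency-nuisance} ($\hat\varphi \stackrel{\prob}{\to}\varphi_0$) together with the continuous mapping theorem forces this term to $0$ uniformly in $\beta$. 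For the second bracket, the penalties at $\varphi_0$ are deterministic and cancel, leaving the centered empirical reference loss; writing it as a quadratic in $\beta$ with coefficients $\tfrac1{n_0}\sum X_{0,i}X_{0,i}^\top$, $\tfrac1{n_0}\sum Y_{0,i}X_{0,i}$, and $\tfrac1{n_0}\sum Y_{0,i}^2$, the law of large numbers (valid since $\EE[X_0X_0^\top]$ has bounded eigenvalues and $\EE[Y_0^2]<\infty$) gives convergence of these moments, and uniform continuity of the quadratic form over compact $\cB$ upgrades this to a uniform law of large numbers.

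Finally, combining the uniform convergence with the well-separated minimizer, the standard argmin-consistency argument yields $\betarobpihat \stackrel{\prob}{\to} \betarobpi$: on the high-probability event where the objectives are uniformly $\varepsilon/2$-close, any minimizer of $\LL_n$ must land in the $\varepsilon$-ball around $\betarobpi$. The main obstacle is this uniform-convergence step, specifically controlling the plugged-in nuisance estimates uniformly over $\cB$; the favorable additive structure, namely that the reference loss is independent of $\varphi$, is precisely what makes the nuisance and sampling errors separate cleanly. A minor technical point I would flag is that the term $\sqrt{C - \|\betaShat\|_2^2}$ requires a nonnegative argument: since $\|\betastarS\|_2 \le C$ and $\betaShat \stackrel{\prob}{\to}\betastarS$, this holds with probability tending to one and the square root is continuous at the limit, so it does not affect the limiting argument (one may also project $\betaShat$ onto the admissible ball to keep the loss well-defined for every $n$).
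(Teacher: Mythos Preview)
Your proposal is correct and follows essentially the same approach as the paper: establish well-separation of $\betarobpi$ via strict (the paper says strong) convexity of the reference risk, then prove uniform convergence of $\LL_n(\cdot,\hat\varphi)$ to the population objective over $\cB$ by splitting into a nuisance-error term (handled because the reference loss is independent of $\varphi$) and a sampling-error term (a ULLN for the reference loss), and conclude via the standard argmin-consistency argument. The only cosmetic differences are that the paper writes out a four-term telescoping decomposition and proves the ULLN via a Glivenko--Cantelli envelope, whereas you use the equivalent two-term split and obtain the ULLN by convergence of the empirical second moments of $(X_0,Y_0)$ plugged into the quadratic; both routes are valid and yield the same result.
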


\subsection{Proof of \Cref{prop:consistency-predictor}}

    For ease of notation define $\beta_0 \coloneqq \betarobpi$ and $\hat{\beta} \coloneqq \betarobpihat$.
    For any parameter of interest $\beta \in \cB$ and nuisance parameters $\varphi = (P_S, P_R, b)$,
   define the function 
  \begin{align}\label{eq:g-func}
      (x, y) \mapsto g_{\beta,\varphi}(x, y) 
      \coloneqq (y - \beta^\top x)^2 
      + \gamma \norm{P_S^{1/2}(b - \beta)}_2^2
      + \gamma \left(\sqrt{C - \norm{b}_2^2}  + \norm{P_R \beta}_2\right)^2.
  \end{align}
  Using~\eqref{eq:g-func}, the robust identifiable risk and its sample version defined in~\eqref{eqn:rob-loss-ell-sample} can be written, respectively as
  \begin{align*}
      \LL(\beta, \varphi) = \EE[g_{\beta, \varphi}(X_0, Y_0)],
      \quad
      \LL_n(\beta, \varphi) = \frac{1}{n_0}\sum_{i \in \cD_0} g_{\beta, \varphi}(X_{0,i}, Y_{0,i}).
  \end{align*}
    Our goal is to show that $\hat{\beta} \stackrel{\prob}{\to}\beta_0$. First, we show that the minimum of the loss is well-separated.

    \begin{lemma}\label{lm:well-separation}
    Suppose that $\EE[X_0X_0^\top] \succ 0$.
    Then, for all $\delta > 0$, it holds that
\begin{align}\label{eqn:well-sep}
    \inf\left\{\LL(\beta, \varphi_0) \colon \norm{\beta - \beta_0}_2 > \delta \right\} > \LL(\beta_0, \varphi_0).
\end{align}
\end{lemma}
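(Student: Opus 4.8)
The plan is to show that the population objective $\LL(\,\cdot\,,\varphi_0)$ is strongly convex and then read off the well-separation \eqref{eqn:well-sep} from the resulting quadratic-growth bound around its minimizer. First I would record that, at the population nuisance value $\varphi_0 = (\PSMpop, \PSperpMpop, \betastarS)$, the risk decomposes as
\[
\LL(\beta, \varphi_0) = \EE[(Y_0 - \beta^\top X_0)^2] + \gamma\,(\betastarS - \beta)^\top \PSMpop (\betastarS - \beta) + \gammaprime\,(\Cker + \norm{\PSperpMpop \beta}_2)^2 .
\]
Since $\PSperpMpop = RR^\top$ with $R$ semi-orthogonal we have $\norm{RR^\top\beta}_2 = \norm{R^\top\beta}_2$, and the reference term equals $\Loss(\beta;\PoXYarg{\thetastar})$; hence $\LL(\beta,\varphi_0)$ coincides with $\Lossrobpi(\beta;\Invset,\Mtest)$ from \cref{thm:pi-loss-lower-bound}, whose minimizer is by definition $\beta_0 = \betarobpi$.

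Next I would verify strong convexity term by term. The reference term $\EE[(Y_0 - \beta^\top X_0)^2]$ is quadratic in $\beta$ with Hessian $2\,\EE[X_0X_0^\top]$, hence $\mu$-strongly convex with $\mu \coloneqq 2\,\lambda_{\min}(\EE[X_0X_0^\top]) > 0$ by the lemma's hypothesis $\EE[X_0X_0^\top]\succ 0$. The invariance term is the quadratic $\gamma\,(\betastarS - \beta)^\top \PSMpop (\betastarS - \beta)$ with positive semidefinite Hessian $2\gamma\PSMpop$, hence convex. The non-identifiability term is convex as well: $\beta \mapsto \norm{\PSperpMpop\beta}_2$ is a seminorm and therefore convex, adding the constant $\Cker \geq 0$ keeps $\Cker + \norm{\PSperpMpop\beta}_2$ nonnegative and convex, and squaring a nonnegative convex function preserves convexity. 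Summing a $\mu$-strongly convex function and two convex functions, $\LL(\,\cdot\,,\varphi_0)$ is $\mu$-strongly convex.

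Finally I would invoke the standard quadratic-growth consequence of strong convexity anchored at the minimizer. Since $\beta_0$ globally minimizes the convex function $\LL(\,\cdot\,,\varphi_0)$, the vector $0$ is a subgradient at $\beta_0$, and $\mu$-strong convexity then yields, for every $\beta$,
\[
\LL(\beta,\varphi_0) \;\geq\; \LL(\beta_0,\varphi_0) + \tfrac{\mu}{2}\,\norm{\beta - \beta_0}_2^2 .
\]
Restricting to $\norm{\beta - \beta_0}_2 > \delta$ gives
\[
\inf\{\LL(\beta, \varphi_0) \colon \norm{\beta-\beta_0}_2 > \delta\} \;\geq\; \LL(\beta_0,\varphi_0) + \tfrac{\mu}{2}\delta^2 \;>\; \LL(\beta_0,\varphi_0),
\]
which is exactly \eqref{eqn:well-sep}. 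I expect no serious obstacle; the only point demanding care is the non-identifiability term, which is nondifferentiable at $\PSperpMpop\beta = 0$. This is harmless here because the subgradient form of the strong-convexity inequality does not require differentiability, so the quadratic-growth bound — and hence the well-separation — still goes through.
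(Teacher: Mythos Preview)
Your proof is correct and follows essentially the same route as the paper: establish strong convexity of $\LL(\cdot,\varphi_0)$ by showing the reference term is strongly convex (via $\EE[X_0X_0^\top]\succ 0$) and the other two terms are convex, then read off the quadratic-growth bound around the minimizer $\beta_0$. You add a bit more care than the paper does---an explicit strong-convexity constant, a clean argument for why the squared shifted seminorm is convex, and the subgradient remark handling nondifferentiability---but the underlying argument is the same.
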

    
   Fix $\delta > 0$.  From the well-separation of the minimum from Lemma~\ref{lm:well-separation}, there exists $\varepsilon > 0$ such that 
    \begin{align*}
        \left\{\norm{\hat\beta - \beta_0}_2 > \delta\right\} \subseteq
        \left\{\LL(\hat\beta, \varphi_0)- \LL(\beta_0, \varphi_0) > \varepsilon\right\}.
    \end{align*}
    Therefore,
    \begin{align}
        \prob&\left(\norm{\hat\beta - \beta_0}_2 > \delta \right) \leq 
        \prob\left(\LL(\hat\beta, \varphi_0)- \LL(\beta_0, \varphi_0) > \varepsilon\right) \nonumber\\
        &\quad= 
        \prob\left( 
        \LL(\hat\beta, \varphi_0)- \LL_n(\hat\beta, \varphi_0) + \LL_n(\hat\beta, \varphi_0)-
        \LL_n(\hat\beta, \hat\varphi)
        \right. \nonumber\\
        &\quad\quad\quad\quad
        \left. 
        + \LL_n(\hat\beta, \hat\varphi)
        - \LL_n(\beta_0, \hat\varphi)
        + \LL_n(\beta_0, \hat\varphi)
        -
        \LL(\beta_0, \varphi_0) > \varepsilon
        \right) \nonumber\\
        &\quad \leq
        \prob\left(
        \LL(\hat\beta, \varphi_0)- \LL_n(\hat\beta, \varphi_0) > \varepsilon/4
        \right)
        + \prob\left(
        \LL_n(\hat\beta, \varphi_0)-
        \LL_n(\hat\beta, \hat\varphi) > \varepsilon/4
        \right) \label{eq:consistency-1}\\
        &\quad\quad +
        \prob\left(
        \LL_n(\hat\beta, \hat\varphi)
        - \LL_n(\beta_0, \hat\varphi) > \varepsilon / 4
        \right)
        + \prob\left(
        \LL_n(\beta_0, \hat\varphi)
        -
        \LL(\beta_0, \varphi_0)>\varepsilon/4
        \right) \label{eq:consistency-2}.
    \end{align}
    We now want to prove convergence the four terms in ~\eqref{eq:consistency-1} and~\eqref{eq:consistency-2}. For this, we use the following statements proved in \Cref{sec:auxlemmaproofs}.

\begin{lemma}\label{lm:ulln-2}
    Suppose $\cB \subseteq \R^d$ is a compact set.
    Moreover, assume that the covariance matrix $\EE[X_0X_0^\top] \succ 0$ with bounded eigenvalues and $\EE[Y_0^2] < \infty$.
    Then, as $n, n_0 \to \infty$ it holds that 
    \begin{align}\label{eqn:ulln-2}
            \sup_{\beta \in \cB} |\LL_n(\beta, \varphi_0) - \LL(\beta, \varphi_0)| \stackrel{\prob}{\to} 0.
        \end{align}
\end{lemma}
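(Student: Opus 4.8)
The plan is to first observe that, once the nuisance parameter is fixed at its population value $\varphi_0 = (\PSMpop, \PSperpMpop, \betastarS)$, only the squared-loss term in $g_{\beta,\varphi_0}$ from \eqref{eq:g-func} carries any randomness. Writing $g_{\beta,\varphi_0}(x,y) = (y - \beta^\top x)^2 + r(\beta)$, where $r(\beta) \coloneqq \gamma\,\norm{\PSMpop^{1/2}(\betastarS - \beta)}_2^2 + \gamma\,(\sqrt{C - \norm{\betastarS}_2^2} + \norm{\PSperpMpop \beta}_2)^2$ is a deterministic function of $\beta$ alone, the penalty term cancels exactly in the difference:
\begin{align*}
    \LL_n(\beta,\varphi_0) - \LL(\beta,\varphi_0) = \frac{1}{n_0}\sum_{i \in \cD_0}(Y_{0,i} - \beta^\top X_{0,i})^2 - \EE\big[(Y_0 - \beta^\top X_0)^2\big].
\end{align*}
Hence it suffices to prove a uniform law of large numbers for the single function class $\{f_\beta : \beta \in \cB\}$ with $f_\beta(x,y) \coloneqq (y - \beta^\top x)^2$.

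Second, I would establish the two ingredients that drive a classical uniform law of large numbers over a compact index set: an integrable envelope and continuity in the parameter. Since $\cB$ is compact, there exists $B < \infty$ with $\norm{\beta}_2 \le B$ for all $\beta \in \cB$. Expanding the square and applying the triangle and Cauchy--Schwarz inequalities yields the pointwise envelope $|f_\beta(x,y)| \le F(x,y) \coloneqq y^2 + 2B\,|y|\,\norm{x}_2 + B^2\norm{x}_2^2$, which is integrable because $\EE[Y_0^2] < \infty$, $\EE[\norm{X_0}_2^2] = \trace \EE[X_0 X_0^\top] < \infty$ (bounded eigenvalues), and $\EE[|Y_0|\,\norm{X_0}_2] \le (\EE[Y_0^2]\,\EE[\norm{X_0}_2^2])^{1/2} < \infty$ by Cauchy--Schwarz. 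Moreover, for $\beta,\beta' \in \cB$ one has the local Lipschitz bound $|f_\beta(x,y) - f_{\beta'}(x,y)| \le \norm{\beta - \beta'}_2\, L(x,y)$ with $L(x,y) \coloneqq \norm{x}_2\,(2|y| + 2B\norm{x}_2)$, and $\EE[L(X_0,Y_0)] < \infty$ by the same moment bounds.

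Third, with these bounds in hand I would conclude by a standard argument. One clean route is to invoke the classical uniform law of large numbers for a family of functions that is continuous in the parameter, dominated by an integrable envelope, and indexed by a compact set; the hypotheses verified above then imply \eqref{eqn:ulln-2}. If a self-contained argument is preferred, the plan is: (i) apply the weak law of large numbers pointwise at each fixed $\beta$; (ii) use compactness of $\cB$ to extract, for any $\epsilon>0$, a finite $\epsilon$-net $\{\beta_1,\dots,\beta_N\}$; and (iii) bridge between net points using the Lipschitz bound together with $\frac{1}{n_0}\sum_{i\in\cD_0} L(X_{0,i},Y_{0,i}) \stackrel{\prob}{\to} \EE[L(X_0,Y_0)]$, so that the supremum over $\cB$ is controlled by the maximum over the finitely many net points plus a term of order $\epsilon$. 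Letting $n_0 \to \infty$ and then $\epsilon \to 0$ yields the claim. The only genuinely delicate point is verifying integrability of the envelope---specifically the cross term $\EE[|Y_0|\,\norm{X_0}_2]$---which is exactly where the assumptions $\EE[Y_0^2]<\infty$ and the bounded-eigenvalue condition on $\EE[X_0 X_0^\top]$ enter; everything else is routine.
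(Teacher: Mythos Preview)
Your proof is correct and follows essentially the same approach as the paper: both verify the standard sufficient conditions for a uniform law of large numbers (compact index set, continuity in $\beta$, integrable envelope) and appeal to a Glivenko--Cantelli result. Your preliminary observation that the deterministic penalty terms cancel is a mild simplification the paper omits, and the paper bounds the envelope via $(y-\beta^\top x)^2 \le 2y^2 + 2C_1^2\norm{x}_2^2$ rather than keeping the cross term, but these are cosmetic differences.
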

\begin{lemma}\label{lm:lipschitz}
    As $n \to \infty$, it holds that
    \begin{align}\label{eqn:lipschitz}
            \sup_{\beta\in\mathcal{B}}|\LL_n(\beta, \hat\varphi) - \LL_n(\beta, \varphi_0)| \stackrel{\prob}{\to} 0.
        \end{align}
\end{lemma}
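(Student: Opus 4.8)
The plan is to exploit that the only data-dependent part of $g_{\beta,\varphi}$ in \eqref{eq:g-func}, namely the reference term $(Y_{0,i}-\beta^\top X_{0,i})^2$, does not depend on the nuisance parameter $\varphi$. Writing the two nuisance penalties as
\[
  h(\beta,\varphi) \coloneqq \gamma\,(b-\beta)^\top P_S (b-\beta) + \gammaprime\left(\sqrt{C-\norm{b}_2^2}+\norm{P_R\beta}_2\right)^2 ,
\]
the reference term cancels in the difference, giving the pointwise-in-data identity $\LL_n(\beta,\hat\varphi)-\LL_n(\beta,\varphi_0) = h(\beta,\hat\varphi)-h(\beta,\varphi_0)$. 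Hence, unlike Lemma~\ref{lm:ulln-2}, this lemma requires no uniform law of large numbers: the empirical average plays no role, and it suffices to prove the deterministic (given $\hat\varphi$) statement $\sup_{\beta\in\cB}|h(\beta,\hat\varphi)-h(\beta,\varphi_0)|\stackrel{\prob}{\to}0$.

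The core step is a local Lipschitz bound for $h(\beta,\cdot)$ that is uniform over $\beta\in\cB$. For the invariance penalty I would add and subtract $(b_0-\beta)^\top P_S(b_0-\beta)$; using the identity $u^\top P u-v^\top P v=(u-v)^\top P(u+v)$ for the first piece and factoring out $(b_0-\beta)^\top(P_S-P_{S,0})(b_0-\beta)$ for the second, both pieces are controlled by $\norm{b-b_0}_2$ and $\norm{P_S-P_{S,0}}_F$, with constants that are uniform because $\cB$ is compact (so $\norm{b-\beta}_2$ is bounded) and the eigenvalues of $P_S$ are bounded by Assumption~\ref{ass:consistency-nuisance}. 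For the non-identifiability penalty I would write $a^2-c^2=(a-c)(a+c)$ with $a=\sqrt{C-\norm{b}_2^2}+\norm{P_R\beta}_2$ and $c=\sqrt{C-\norm{b_0}_2^2}+\norm{P_{R,0}\beta}_2$; here $a+c$ is uniformly bounded, while $a-c$ is bounded by the sum of $|\sqrt{C-\norm{b}_2^2}-\sqrt{C-\norm{b_0}_2^2}|$ and $\norm{(P_R-P_{R,0})\beta}_2\le\norm{P_R-P_{R,0}}_F\sup_{\beta\in\cB}\norm{\beta}_2$. Collecting terms, on an event where $\hat\varphi$ lies in a fixed neighborhood of $\varphi_0$ this yields
\[
  \sup_{\beta\in\cB}|h(\beta,\hat\varphi)-h(\beta,\varphi_0)| \le L\left(\norm{\PSM-\PSMpop}_F+\norm{\PSperpM-\PSperpMpop}_F+\norm{\betaShat-\betastarS}_2\right)
\]
for a finite constant $L$ depending only on $\gamma,\gammaprime,C$, the diameter of $\cB$, and the eigenvalue bounds.

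I would then conclude by invoking Assumption~\ref{ass:consistency-nuisance}: the three nuisance errors on the right-hand side each tend to zero in probability, while the event that $\hat\varphi$ lies in the chosen neighborhood has probability tending to one; an $\varepsilon$–$\delta$ argument (equivalently, the continuous mapping theorem applied to the map $\varphi\mapsto\sup_{\beta\in\cB}|h(\beta,\varphi)-h(\beta,\varphi_0)|$, which is continuous at $\varphi_0$ and vanishes there) then gives the claim. The one genuinely delicate point is the square-root term: since $b\mapsto\sqrt{C-\norm{b}_2^2}$ is not globally Lipschitz as the radicand approaches zero, I must first restrict to the high-probability event on which $\norm{\betaShat}_2$ stays bounded away from $\sqrt{C}$ — possible because the radicand $C-\norm{\betastarS}_2^2$ is strictly positive at the true nuisance value — so that a uniform Lipschitz constant is available there; the remaining estimates are routine.
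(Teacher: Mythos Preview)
Your proposal is correct and follows essentially the same route as the paper: cancel the reference loss, split the remaining difference into the invariance-penalty piece and the non-identifiability-penalty piece, and control each uniformly over $\cB$ via the nuisance consistency in Assumption~\ref{ass:consistency-nuisance}. The one minor technical difference is that the paper handles the square-root term with the global H\"older bound $|\sqrt{x}-\sqrt{y}|\le|x-y|^{1/2}$, which sidesteps the ``delicate point'' you flag and removes the need to restrict to an event where the radicand is bounded away from zero.
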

The two terms in~\eqref{eq:consistency-1} converge to 0 by \cref{lm:ulln-2} and \cref{lm:lipschitz}, respectively. The first term in~\eqref{eq:consistency-2} equals 0 since $\hat\beta$ minimizes $\beta \mapsto \LL_n(\beta, \hat\varphi)$. 
Finally, we observe that \begin{align}\label{eqn:ulln-1}
        \sup_{\beta \in \cB} |\LL_n(\beta, \hat\varphi) - \LL(\beta, \varphi_0)| \stackrel{\prob}{\to} 0,
        \end{align}
since we have that
    \begin{align*}
        \sup_{\beta \in \cB} |\LL_n(\beta, \hat\varphi) - \LL(\beta, \varphi_0)|
        \leq &\
        \sup_{\beta \in \cB} |\LL_n(\beta, \hat\varphi) - \LL_n(\beta, \varphi_0)|
        +
        \sup_{\beta \in \cB} |\LL_n(\beta, \varphi_0) - \LL(\beta, \varphi_0)|,
    \end{align*}
    where the first term converges in probability by Lemma~\ref{lm:lipschitz}, and the second term converges in probability by Lemma~\ref{lm:ulln-2}. This implies that the second term in~\eqref{eq:consistency-2} converges to zero. Since $\delta > 0$ was arbitrary, it follows that $\hat{\beta} \stackrel{\prob}{\to}\beta_0$.

\subsection{Proof of auxiliary lemmas}
\label{sec:auxlemmaproofs}
\subsubsection{Proof of \Cref{lm:well-separation}}
By definition,
\begin{align*}
    \LL(\beta, \varphi_0) = \EE[(Y_0 - \beta^\top X_0)^2]
    + \gamma \norm{\PSMpop^{1/2}(\betastarS - \beta)}_2^2
      + \gamma \left(\sqrt{C - \norm{\betastarS}_2^2}  + \norm{\PSperpMpop \beta}_2\right)^2.
\end{align*}
Since $\EE[X_0X_0^\top] \succ 0$, the first term is strongly convex in $\beta$.
Moreover, the second and third terms are convex in $\beta$. Therefore, $\LL(\beta, \varphi_0)$ is strongly convex in $\beta$. Since $\LL(\beta, \varphi_0)$ is also continuous in $\beta$, it follows that there exists a unique global minimum. Let $\beta_0$ denote the global minimizer of $\LL(\beta, \varphi_0)$.
By the fact that $\LL(\beta_0, \varphi_0)$ is a global  minimum, and by definition of strong convexity, there exists a positive constant $m > 0$ such that, for all $\beta \in \cB$,
\begin{align}\label{eq:strong-conv}
    \LL(\beta, \varphi_0) \geq \LL(\beta_0, \varphi_0) + \frac{m}{2} \norm{\beta - \beta_0}_2^2.
\end{align}
Fix $\delta > 0$.  
Then, by~\eqref{eq:strong-conv},  for all $\beta \in \cB$ such that $\norm{\beta-\beta_0}_2 > \delta$ it holds that
\begin{align*}
    \LL(\beta, \varphi_0) \geq \LL(\beta_0, \varphi_0) + \frac{m\delta^2}{2} > \LL(\beta_0, \varphi_0). 
\end{align*}
Since the inequality holds for all $\beta \in \cB$ such that $\norm{\beta-\beta_0}_2 > \delta$, we conclude that
\begin{align*}
    \inf \{\LL(\beta, \varphi_0)  \colon \norm{\beta-\beta_0}_2 > \delta\} > \LL(\beta_0, \varphi_0).
\end{align*}
Since $\delta > 0$ was arbitrary, the claim follows.

\subsubsection{Proof of \Cref{lm:ulln-2}}
    Recall that for any $\beta \in \cB$
  \begin{align*}
      \LL(\beta, \varphi_0) = \EE[g_{\beta, \varphi_0}(X_0, Y_0)],
      \quad
      \LL_n(\beta, \varphi_0) = \frac{1}{n_0}\sum_{i \in \cD_0} g_{\beta, \varphi_0}(X_{0,i}, Y_{0,i}).
  \end{align*}
    To show the result, we must establish that the class of functions $\{g_{\beta, \varphi_0} \colon \beta \in \cB\}$ is Glivenko--Cantelli. From~\cite{van2000asymptotic}, a set of sufficient conditions for being a Glivenko--Cantelli class is that (i) $\cB$ is compact, (ii) $\beta \mapsto g_{\beta, \varphi_0}(x, y)$ is continuous for every $(x, y)$, and (iii) $\beta \mapsto g_{\beta, \varphi_0}$ is dominated by an integrable function.
    By assumption, (i) holds.
    Moreover, by~\eqref{eq:g-func}, it follows that $\beta \mapsto g_{\beta, \varphi_0}$ is continuous for all $(x, y)$ and thus (ii) holds. We now show that (iii) holds. 
    Since $\cB$ is compact we have that $\sup_{\beta\in\cB} \norm{\beta}_2 = C_1 < \infty$.
    For fixed $\gamma > 0$, and all $(x, y)$, we have that
    \begin{align}
    \label{eq:dominates-i}
    \begin{split}
        g_{\beta, \varphi_0}(x, y) 
        \leq &\ \sup_{\beta \in \cB} |g_{\beta, \varphi_0}(x, y)|\\
        \leq &\ \sup_{\beta \in \cB} (y - \beta^\top x)^2
        +
        2\gamma  \norm{\PSMpop^{1/2}}_F^2 \left(\norm{\betastarS}_2^2 + \sup_{\beta \in \cB} \norm{\beta}_2^2\right)\\
        &+   \gamma \left(\sqrt{C - \norm{\betastarS}_2^2} 
        + \norm{\PSperpMpop}_F \sup_{\beta\in\cB}\norm{\beta}_2\right)^2\\
        \leq &\
        2y^2 + 2C_1^2 \norm{x}_2^2 + K \eqqcolon G(x, y),
    \end{split}
    \end{align} 
    where $K < \infty$ is a finite constant not depending on $(x, y)$.
    Furthermore, we have that
    \begin{align}\label{eq:dominates-ii}
        \EE[G(X_0, Y_0)] = 2 \EE[Y_0^2] + 2C_1^2\ \trace (\EE[X_0X_0^\top]) + K < \infty,
    \end{align}
    since $\EE[Y^2] < \infty$ and $\EE[X_0X_0^\top]$has bounded eigenvalues by assumption. From~\eqref{eq:dominates-i} and~\eqref{eq:dominates-ii}, it follows that (iii) holds.

\subsubsection{Proof of \Cref{lm:lipschitz}}
For fixed $\gamma > 0$, we have that
\begin{align}
    \frac{1}{\gamma} & \sup_{\beta \in \cB}|\LL_n(\beta, \hat\varphi) - \LL_n(\beta, \varphi_0)|
    \leq
    \sup_{\beta \in \cB}\left|
    \norm{\PSM^{1/2}(\hat\beta^{\cS} - \beta)}_2^2
    - \norm{\PSMpop^{1/2}(\betastarS - \beta)}_2^2
    \right| \label{eq:lip-step-1}
    \\
    & + 
    \sup_{\beta \in \cB}\left |\left(\sqrt{C - \norm{\hat{\beta}^{\cS}}_2^2}  + \norm{\PSperpM \beta}_2\right)^2 
    - \left(\sqrt{C - \norm{\betastarS}_2^2}  + \norm{\PSperpMpop \beta}_2\right)^2\right|.
    \label{eq:lip-step-2}
\end{align}
We first show that ~\eqref{eq:lip-step-1} converges in probability to 0. 
\begin{align}
    &\sup_{\beta \in \cB}\left|
    \norm{\PSM^{1/2}(\hat{\beta}^{\cS} - \beta)}_2^2
    - \norm{\PSMpop^{1/2}(\betastarS - \beta)}_2^2
    \right| \notag\\
    = &\ 
    \sup_{\beta \in \cB}
    \left|(\hat{\beta}^{\cS} - \beta)^\top \PSM(\hat{\beta}^{\cS} - \beta) 
    - (\betastarS - \beta)^\top \PSMpop (\betastarS - \beta) \right| \notag\\
    = &\
    \sup_{\beta \in \cB}\left|(\hat{\beta}^{\cS} - \beta)^\top \PSM (\hat{\beta}^{\cS} - \betastarS)
    + (\hat{\beta}^{\cS} - \betastarS)^\top \PSM(\betastarS - \beta) \right. \notag \\
    & \left. \quad\quad + (\betastarS - \beta)^\top (\PSM - \PSMpop) (\betastarS - \beta) \right| \notag\\
    \stackrel{\clubsuit}{\leq}&\ 
     \sup_{\beta \in \cB} \norm{\hat{\beta}^{\cS} - \beta}_2\ \norm{\PSM}_F\ \norm{\hat{\beta}^{\cS} - \betastarS}_2
     + \sup_{\beta \in \cB} \norm{\betastarS - \beta}_2\ \norm{\PSM}_F\ \norm{\hat{\beta}^{\cS} - \betastarS}_2 \notag \\
     & \left. \quad\quad +
     \sup_{\beta \in \cB} \norm{\betastarS - \beta}_2^2\ \norm{\PSM - \PSMpop}_F\right. \label{eq:bound-1-1},
\end{align}
where $\clubsuit$ follows from the Cauchy--Schwarz inequality and from the fact that $\norm{A}_2 \leq \norm{A}_F$.
For any $\delta_1, \delta_2 > 0$, define the event
\begin{align*}
    A_n \coloneqq \left\{\norm{\betaShat - \betastarS} \leq \delta_1, \norm{\PSM - \PSMpop}_F \leq \delta_2\right\},
\end{align*}
and note that $\prob(A_n) \to 1$ as $n \to \infty$ from Assumption~\ref{ass:consistency-nuisance}.
On the event $A_n$, it holds
\begin{align*}
    \sup_{\beta \in \cB} \norm{\hat{\beta}^{\cS} - \beta}_2 \leq \norm{\hat{\beta}^{\cS} - \betastarS}_2 + \sup_{\beta \in \cB}  \norm{\betastarS - \beta}_2 \leq \delta_1 + C_1,\\
    \norm{\PSM}_F \leq \norm{\PSM - \PSMpop}_F + \norm{\PSMpop}_F \leq \delta_2 +  C_2,   
\end{align*}
where $C_1 < \infty$ follows from the compactness of~$\cB$ and $C_2$ follows from the fact that  $\PSMpop$ has bounded eigenvalues.
Therefore, on the event $A_n$, we can upper bound~\eqref{eq:bound-1-1} by
\begin{align}
    (\delta_1 + C_1) (\delta_2 + C_2)\ \norm{\hat{\beta}^{\cS} - \betastarS}_2 + (\delta_1 + C_1)\ \norm{\PSM - \PSMpop}_F.\label{eq:bound-1-2} 
\end{align}
From Assumption~\ref{ass:consistency-nuisance}, \eqref{eq:bound-1-2} converges to 0 in probability, and therefore,~\eqref{eq:lip-step-1} converges to 0 in probability as well.

Now, we can upper bound~\eqref{eq:lip-step-2} as follows,
\begin{align}
    \sup_{\beta \in \cB} &\left |\left(\sqrt{C - \norm{\hat{\beta}^{\cS}}_2^2}  + \norm{\PSperpM \beta}_2\right)^2 
    - \left(\sqrt{C - \norm{\betastarS}_2^2}  + \norm{\PSperpMpop \beta}_2\right)^2\right| \notag\\
    = &\ \sup_{\beta \in \cB} \left|
    C - \norm{\hat{\beta}^{\cS}}_2^2
    + \norm{\PSperpM \beta}_2^2 
    + 2 \sqrt{C - \norm{\hat{\beta}^{\cS}}_2^2}\ \norm{\PSperpM \beta}_2\right.\notag\\
    &\phantom{\sup_{\beta \in \cB}\ }\left.
    - C + \norm{\betastarS}_2^2 
    - \norm{\PSperpMpop \beta}_2^2
    - 2 \sqrt{C - \norm{\betastarS}_2^2}\  \norm{\PSperpMpop \beta}_2
    \right| \notag\\
    \leq &\ \left|\norm{\hat{\beta}^{\cS}}_2^2 - \norm{\betastarS}_2^2\right|
    + \sup_{\beta \in \cB} \left|\beta^\top (\PSperpM - \PSperpMpop) \beta\right|\notag\\ 
    &+2 \sup_{\beta \in \cB}\left|\sqrt{C - \norm{\hat{\beta}^{\cS}}_2^2}\ \norm{\PSperpM \beta}_2-\sqrt{C - \norm{\betastarS}_2^2}\ \norm{\PSperpMpop \beta}_2\right|\notag\\
    = &\ (I) + (II) + (III). \notag 
\end{align}
By Assumption~\ref{ass:consistency-nuisance}, $(I)$ converges in probability to zero. 
Regarding $(II)$, we have
\begin{align*}
    \sup_{\beta \in \cB} \left|\beta^\top (\PSperpM - \PSperpMpop) \beta\right| 
    \leq 
     \sup_{\beta \in \cB} 
     \norm{\beta}_2^2\
     \norm{\PSperpM - \PSperpMpop}_F
     \stackrel{\prob}{\to}0,
\end{align*}
where the inequality follows from Cauchy--Schwarz and that $\norm{A}_2 \leq \norm{A}_F$, and the convergence in probability follows from Assumption~\ref{ass:consistency-nuisance} along with the compactness of~$\cB$.
It remains to upper bound $(III)$. We have that
\begin{align}
    \frac{(III)}{2} 
 \leq 
 &\
 \sup_{\beta \in \cB}
 \left|
 \sqrt{C - \norm{\hat{\beta}^{\cS}}_2^2}\ \norm{\PSperpM \beta}_2
  - \sqrt{C - \norm{\betastarS}_2^2}\ \norm{\PSperpM \beta}_2
 \right| \notag\\
 &+
  \sup_{\beta \in \cB}
 \left|
 \sqrt{C - \norm{\betastarS}_2^2}\ \norm{\PSperpM \beta}_2
  - \sqrt{C - \norm{\betastarS}_2^2}\ \norm{\PSperpMpop \beta}_2
 \right| \notag\\
 \leq &\
 \left(\sup_{\beta \in \cB} \norm{\beta}_2\ \norm{\PSperpM}_F\right)\
 \left|
 \sqrt{C - \norm{\hat{\beta}^{\cS}}_2^2} 
 - 
 \sqrt{C - \norm{\betastarS}_2^2}
 \right|
 \notag\\
 &+\sup_{\beta \in \cB}
 \left|
 \sqrt{\beta^\top \PSperpM  \beta}
 -
 \sqrt{\beta^\top \PSperpMpop \beta}
 \right|
 \left(
 \sqrt{C - \norm{\betastarS}_2^2}\right)\notag\\
 \leq &\
 C_3 
 \left| \norm{\betastarS}_2^2 - \norm{\hat{\beta}^{\cS}}_2^2
 \right|^{1/2}
 + 
 \sqrt{C} 
 \sup_{\beta \in \cB} 
  \left|
 \beta^\top ( \PSperpM- \PSperpMpop) \beta
  \right|^{1/2}
  \label{eq:bound-sqrt-1}\\
  \leq &\
   C_3 
 \left| \norm{\betastarS}_2^2 - \norm{\hat{\beta}^{\cS}}_2^2
 \right|^{1/2}
 + 
 \sqrt{C} 
 \left(
    \sup_{\beta \in \cB} \norm{\beta}_2^2\ \norm{ \PSperpM- \PSperpMpop}_F
 \right)^{1/2} \stackrel{\prob}{\to} 0.\label{eq:bound-sqrt-2}
\end{align}
The inequality in~\eqref{eq:bound-sqrt-1} follows from the compactness of $\cB$, the fact that $\PSperpM$ has bounded eigenvalues, and that $|\sqrt{x} - \sqrt{y}| \leq |x - y|^{1/2}$ for all $x, y \geq 0$.
The inequality in~\eqref{eq:bound-sqrt-2} follows from Cauchy--Schwarz and that $\norm{A}_2 \leq \norm{A}_F$.
The convergence in probability follows form Assumption~\ref{ass:consistency-nuisance} and the compactness of~$\cB$.

\section{Details on finite-sample experiments}\label{sec:apx-experiments}

In this section, we provide more details of the data generation for our synthetic finite-sample experiments as well as data processing for the real-world data experiments.
\subsection{Synthetic experiments}\label{sec:apx-synthetic-exps}

For the synthetic experiments, we generate a random SCM which satisfies our assumptions. For $d = 15$, we randomly sample the joint covariance $\Sigmastar$ of $(\eta,\xi)$, fixing its total variance and the eigenvalues. We consider 7 environments including the reference environment, and for each environment except the reference, we randomly generate mean shifts $\mue$ of fixed norm $1$. Since we have $6$ non-zero random Gaussian mean shifts, it holds a.s. that $\dim \cS = 6$. We then randomly generate an "initial guess" for $\betastar \in \R^d$ of fixed norm $C = 10$. Now, with respect to the space $\cS$ of the identifiable directions induced by the mean shifts, we choose the most "adversarial" causal parameter $\betaadv$ which is equal to $\betastar$ on $\cS$, but on $\cSperp$ has the opposite direction of the noise OLS estimator ${\noisecovxxstar}^{-1} \noisecovxystar$. We ensure that $\| \betaadv \|_2 = C$. Note that under the observed shifts, $\betastar$ and $\betaadv$ are observationally equivalent. We complete $\betaadv$ to the set $\thetaadv$ of observationally equivalent model parameters and generate the multi-environment training data according to $\thetaadv$ and the collection of mean shifts. 

For \cref{fig:synthetic-experiments} (left), we define the test shift upper bound as $\Manchor = \gamma \frac{1}{7} \sum_{e} \mu_e \mu_e^\top$. We vary $\gamma$ from $0$ to $10$, and for each $\gamma$, we compute the oracle anchor regression estimator by minimizing the discrete anchor regression loss with the correct $\gamma$. Additionally, we compute the pooled OLS estimator and the worst-case robust predictor $\betarobpi$ as described in \cref{sec:apx-empirical-estimation}. Finally, we generate test data with a Gaussian additive shift $\Atest \sim \cN(0, \Manchor)$. We evaluate the loss of $\betaOLS$, $\betaa$ and $\betarobpi$ on this test environment and include the population lower bound. 

For \cref{fig:synthetic-experiments} (right), we define the test shift upper bound as $\Mnew = \gamma \frac{1}{7} \sum_{e} \mu_e \mu_e^\top + \gammaprime R R^\top$, where $R$ is a 2-dimensional subspace of the space $\cSperp$. We fix the magnitude $\gamma$ of the ''seen'' test shift directions at $\gamma = 40$ and set vary $\gammaprime$ from $0$ to $2$ to showcase the effect of small unseen shifts compared to large identified shifts. We compute the oracle anchor regression estimator by minimizing the discrete anchor regression loss. Additionally, we compute the pooled OLS estimator and the worst-case robust predictor $\betarobpi$ as described in \cref{sec:apx-empirical-estimation}, for which we use the oracle $\gammaprime$, given $\Manchor$ and empirical estimates of the spaces $\cS$, $\cSperp$, $R$. \\
Finally, we generate test data with a Gaussian additive shift $\Atest \sim \cN(0, \Mnew)$. We evaluate the loss of $\betaOLS$, $\betaa$ and $\betarobpi$ on this test environment, plot the resulting test losses for different estimators and include the population lower bound. 

\subsection{Real-world data experiments}\label{sec:apx-real-world}

\begin{figure}[h]
    \centering
    \begin{subfigure}[b]{0.3\textwidth}
        \centering
        \includegraphics[width=\textwidth]{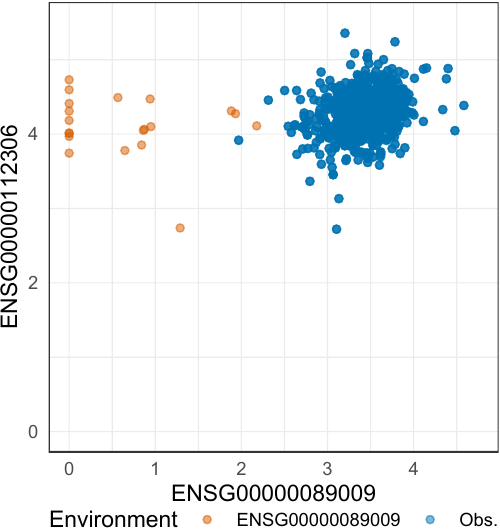}
        \caption{}
        \label{fig:fig1}
    \end{subfigure}
    \hfill
    \begin{subfigure}[b]{0.3\textwidth}
        \centering
        \includegraphics[width=\textwidth]{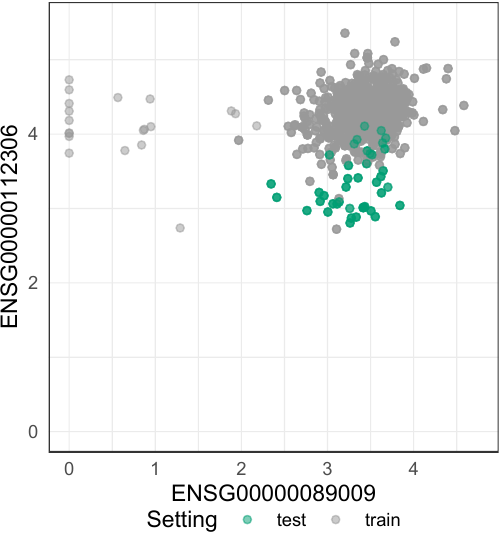}
        \caption{}
        \label{fig:fig2}
    \end{subfigure}
    \hfill
    \begin{subfigure}[b]{0.3\textwidth}
        \centering
        \includegraphics[width=\textwidth]{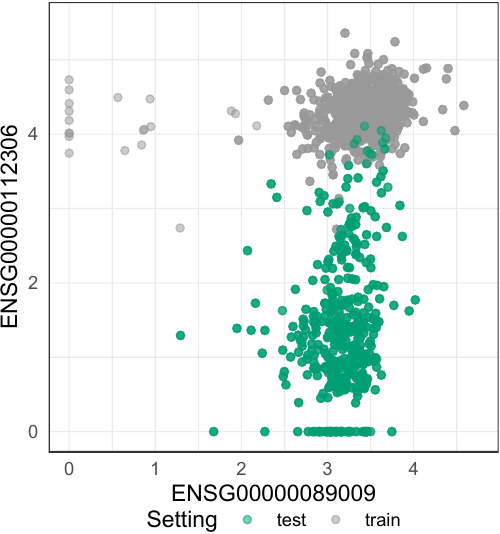}
        \caption{}
        \label{fig:fig3}
    \end{subfigure}
    \caption{The figures illustrate the structure of the (a) training-time shifts and (b-c) test-time shifts for different perturbation strengths on the example of two covariates. Panel (a) shows the training data containing two environments--observational (blue) and shifted (orange) corresponding to the knockout of the gene ENSG00000089009. 
    Panels~(b) and~(c) show the training data in grey and test data from a previously unseen environment (green). 
    Panel~(b) depicts the top $10\%$ test data points closest to the training support (perturbation strength = $0.1$).
    Panel~(c) illustrates the full test data (perturbation strength = 1.0).
    }
    \label{fig:genes}
\end{figure}

We consider the K562 dataset from \cite{replogle2022mapping} and perform the preprocessing as done in \cite{chevalley2022causalbench}.
The resulting dataset consists of $n = 162,751$ single-cell observations over $d = 622$ genes collected from observational and several interventional environments. 
The interventional environments arise by knocking down a single gene at a time using the CRISPR interference method \citep{qi2013repurposing}. Following \citep{schultheiss2024assessing}, we select only always-active genes in the observational setting, resulting in a smaller dataset of 28 genes. For each gene $j = 1, \ldots, 28$, we set $Y:= X_j$ as the target variable and select the three genes $X_{k_1}, \ldots, X_{k_3}$ most strongly correlated with $Y$ (using Lasso), resulting in a prediction problem over $Y, X_{k_1}, \ldots, X_{k_3}$.
Given this prediction problem, we construct the training and test datasets as follows. Let $\Iobs$ denote the 10,691 observations collected from the observational environment, and let $\Iint_{i}$ denote the observations collected from the interventional environment where the gene $k_i$ was knocked down. We will denote by $\Iint_{i,s}$ the $s \times 100$ percent of datapoints in $\Iint_{i}$ that are closest to the mean of gene $k_i$ in the observational environment $\Iobs$. 
For example, $\Iint_{i,0.1}$ consists of the 10\% of datapoints in $\Iint_{i}$ closest to the observational mean of gene $k_i$.
Thus, 
the parameter $s \in [0,1]$ acts as a proxy for the \emph{strength} of the shift. 
Denote by $\Iint_{i, s}^*$ a random sample of $\Iint_{i, s}$ of a certain size.
For each $i \in \{1, 2, 3\}$, we fit the methods on the training data  $\Dtrain_i \coloneqq \Iobs \cup \Iint_{i, 1}^*$, with $|\Iint_{i, 1}^*| = 20$. \cref{fig:genes}(a) illustrates an example of training data  $\Dtrain_i$.
Having fitted the methods on $\Dtrain_i$, we evaluate them on test datasets constructed as follows.
For each 
shift strength $s \in \{0.1, \dots, 0.9\}$ and proportion $\pi \in \{0, .33, .67, 1\}$, define the test dataset $\mathcal{D}_{\pi, s}^{\mathrm{test}}$ consisting of $\pi$ observations from $\cup_{\ell\neq i}\Iint_{\ell, s}$ and $1-\pi$ (out-of-training) observations from $\Iint_{i, s}$.
An example of a test dataset for different shift strengths $s$ and previously unseen directions (i.e., $\pi = 1$) is shown in \cref{fig:genes}(b-c).  
We compare our method Worst-case Rob., defined as the minimizer of the empirical worst-case robust risk \eqref{eqn:rob-loss-ell-sample}, with anchor regression \citep{rothenhausler2021anchor}, invariant causal prediction (ICP) \citep{peters2016causal},  Distributional Robustness via Invariant Gradients (DRIG) \citep{shen2023causalityoriented}, and OLS (corresponding to vanilla ERM).
We use the following parameters for Worst-case Rob.: $\gamma = 50$, $\Cker = 1.0$, and $M = \Id$. For anchor regression and DRIG, we select $\gamma = 50$. For ICP, we set the significance level for the invariance tests to $\alpha = 0.05$.

These numerical experiments are computationally light and can be run in $\approx 5$ minutes on a personal laptop.\footnote{We use a 2020 13-inch MacBook Pro with a 1.4 GHz Quad-Core Intel Core i5 processor, 8 GB of RAM, and Intel Iris Plus Graphics 645 with 1536 MB of graphics memory.}

\section{Proofs}
\label{sec:apx-proofs}
\subsection{Proof of \cref{prop:invariant-set}}\label{sec:apx-proof-invariant-set}
For every environment $e \in \Ecaltrain$, we observe the first moments $\EE(X_e)$ and $\EE(Y_e)$,
and second moments $\EE(X_eX_e^\top)$, $\EE(Y_e^2)$ and $\EE(X_eY_e)$.
Since it holds by assumption that $\mu_0 = 0$ and $\Sigma_0 = 0$, we have that  $\EE(X_0X_0^\top) = \noisecovxxstar$, and so we can identify $\noisecovxxstar$ uniquely. Furthermore, it holds that
\begin{align}
    \EE(X_0Y_0) &= \noisecovxxstar \betastar + \noisecovxystar, \label{eqn:ident-1}\\
    \EE(X_eY_e) &= ( \Sigma_e + \mu_e \mu_e^\top + \noisecovxxstar) \betastar + \noisecovxystar.
    \label{eqn:ident-2}
\end{align}
By taking the difference between \cref{eqn:ident-2} and \cref{eqn:ident-1},
we can identify $(\Sigma_e + \mu_e \mu_e^\top) \betastar$.
Thus, 
the parameter $\betastar$ is identifiable on the subspace $\cS$ defined in \cref{eqn:def-S} and is not identifiable on its orthogonal complement $\cSperp$.
Thus, for any vector $\alpha \in \cSperp$ , the vector $\beta = \betastar + \alpha$ is consistent with the data-generating process. It remains to compute the covariance parameters induced by an arbitrary $\tilde\beta \coloneqq \betastar + \alpha$, for $\alpha \in \cSperp$. For every environment $e \in \Ecaltrain$,  the second mixed moment between $X_e$ and $Y_e$ has to satisfy the following equality
\begin{align*}
    \EE(X_eY_e) = (\Sigma_e + \mu_e\mu_e^\top + \noisecovxxstar)\betastar + \noisecovxystar = (\Sigma_e + \mu_e\mu_e^\top + \noisecovxxstar) \tilde{\beta}+ \tilde{\Sigma}_{\eta, \xi},
\end{align*}
from which it follows that $\tilde{\Sigma}_{\eta, \xi}
 \coloneqq \noisecovxystar - \noisecovxxstar \alpha$. By computing $\EE(Y_e^2)$ and inserting $\tilde{\beta} = \betastar + \alpha$ and $\tilde{\Sigma}_{\eta, \xi}$, we similarly obtain 
\begin{align*}
    \tilde{\sigma}_{\xi}^{2} \coloneqq \noisecovyystar - 2 \alpha^\top \noisecovxystar + \alpha^\top \noisecovxxstar \alpha. 
\end{align*}
Thus, we obtain the following set of observationally equivalent model parameters consistent with $\probtrainarg{\thetastar}$:
\begin{align*}
    \Invset = \{ \betastar + \alpha, \noisecovxxstar, \noisecovxystar - \noisecovxxstar \alpha, \noisecovyystar - 2 \alpha^\top \noisecovxystar + \alpha^\top \noisecovxxstar \alpha \colon \alpha \in \cSperp \}. 
\end{align*}
Since the \idset is identifiable from the training distribution, but model parameters $\betastar$, $\noisecovxystar$, $\noisecovyystar$ are not, it is helpful to re-express the \idset through identifiable quantities. For this, we note that the "identifiable linear predictor" $\betastarS = \betastar - \betastarperp$ induces an observationally equivalent model given by 
\begin{align*}
    \thetastarS := (\betaS, \noisecovxxS, \noisecovxyS, \noisecovyyS) = (\betastarS, \noisecovxxstar, \noisecovxystar + \noisecovxxstar \betastarperp, \noisecovyystar + 2 \langle \noisecovxystar, \betastarperp\rangle + \langle \betastarperp, \noisecovxxstar \betastarperp\rangle).
\end{align*}
From this reparameterization, we infer the final form of the \idset:
\begin{align*}
   \Invset = \{ \betastarS + \alpha, \noisecovxx', \noisecovxyS - \noisecovxx' \alpha, \noisecovyyS - 2 \alpha^\top \noisecovxyS + \alpha^\top \noisecovxx' \alpha \colon \alpha \in \cSperp \}  \ni \thetastar 
\end{align*}
Therefore, \cref{eqn:def-invariant-set} follows.
To find the robust predictor $\betarob$, we write down the robust loss with respect to $\Mtest$ and any $\theta_\alpha$ from the \idset:
\begin{align*}
    \Lossrob(\beta;\theta_\alpha, \Mtest) &= (\betastarS + \alpha - \beta)^\top (\Mtest + \noisecovxxstar) (\betastarS + \alpha - \beta) \\ &+ 2 (\betastarS + \alpha - \beta)^\top (\noisecovxystar - \noisecovxxstar \alpha) + \noisecovyyS - 2 \alpha^\top \noisecovxyS + \alpha^\top \noisecovxxstar \alpha.
\end{align*}
inserting $\alpha \in \cSperp$ and rearranging, \cref{eqn:def-rob-pred-identif} follows.

\subsection{Proof of \cref{thm:pi-loss-lower-bound}}\label{sec:apx-proof-of-main-prop}

We structure the proof as follows: first, we quantify the non-identifiability of the robust risk by explicitly computing its supremum over the \idset of the model parameters (referred to as the \idRR). Second, we derive a lower bound for the \idRRs by considering two cases depending on how a predictor $\betabar$ interacts with the possible test shifts $\Mtest$. 
\paragraph{Computation of the \idRR.} For any model-generating parameter $\theta = (\beta, \Sigma)$ it holds that the robust risk of the model \cref{eqn:SCM} under test shifts $\Mtest \succeq 0$ is given by 
\begin{align*}
    \Lossrob(\betabar;\theta, \Mtest) =  (\beta - \betabar)^\top(\Mtest + \noisecovxxstar)(\beta - \betabar) + 2(\beta - \betabar)^\top \noisecovxy + \noisecovyy. 
\end{align*}
We recall that the \idset of model parameters after observing the multi-environment training data \cref{eqn:SCM} is given by 
\begin{align}\label{eqn:apx-def-invariant-set}
     \Invset = \{ \betastarS + \alpha, \noisecovxxstar, \noisecovxyS - \noisecovxxstar \alpha, \noisecovyyS - 2 \alpha^\top \noisecovxyS + \alpha^\top \noisecovxx \alpha: \alpha \in \cSperp \},
\end{align}
where $\cS$ is the span of identified directions defined in \cref{eqn:def-S}. 
Moreover, we recall that by Assumption~\ref{as:bounded-betastar}, for any causal parameter $\beta$ it should hold that $\| \beta \|_2 = \| \betastarS + \alpha \|_2 \leq C$, which translates into the following constraint for the parameter $\alpha$:
\begin{align*}
    \| \alpha \|_2 \leq \sqrt{C^2 - \| \betastarS \|_2^2} =: \Cker. 
\end{align*}
Inserting \cref{eqn:apx-def-invariant-set} in \cref{eqn:PI-robust-loss}, we obtain
\begin{align*}
    \Lossrobpi(\betabar; \Invset, \Mtest) = \supalpha \Lossrob(\betabar; \theta_{\alpha}, \Mtest),
\end{align*}
where $\theta_\alpha$ is a short notation for $(\betastarS + \alpha, \noisecovxxstar, \noisecovxyS - \noisecovxxstar \alpha, \noisecovyyS - 2 \alpha^\top \noisecovxyS + \alpha^\top \noisecovxxstar \alpha)$. We now compute the supremum explicitly in case $\Mtest$ has the form $\Mtest = \gamma \Mseen + \gammaprime R R^\top$, where $\Mseen$ is a PSD matrix with $\range(M) \subseteq \cS$ and $R$ is a semi-orthogonal matrix with $\range(R) \subseteq \cSperp$. For any $\alpha \in \cSperp$, we write down the robust loss as
\begin{align*}
    \Lossrob(\betabar; \theta_\alpha, \Mtest) &= (\betastarS - \betabar)^\top (\Mtest + \noisecovxxstar) (\betastarS - \betabar) + 2 (\betastarS - \betabar)^\top \noisecovxyS + \noisecovyyS \\
    &+ \alpha^\top \Mtest \alpha + 2 \alpha^\top \Mtest(\betastarS -  \betabar ) \\
    &= \Lossrob(\betabar; \thetastarS, \Mtest) + \alpha^\top \Mtest \alpha + 2 \alpha^\top \Mtest(\betastarS -  \betabar ). 
\end{align*}
The first term is the robust risk of $\betabar$ under test shift $\Mtest$ and the identified model-generating parameter $\thetastarS$, thus it does not depend on $\alpha$. 
By the structure of $\Mtest$, we obtain that 
\begin{align*}
    f(\alpha) := \alpha^\top \Mtest \alpha + 2 \alpha^\top \Mtest(\betastarS -  \betabar )  = \gammaprime \alpha^\top R R^\top \alpha - \gammaprime \alpha^\top R R^\top \betabar. 
\end{align*}
If $\gammaprime = 0$, i.e., the test shifts consist only of the identified directions, we have $f(\alpha) = 0$, independently of $\alpha$, and thus 
\begin{align*}
     \Lossrobpi(\betabar; \Invset, \Mtest) = \Lossrob(\betabar; \thetastarS, \Mtest).
\end{align*}
This implies the first statement of the theorem. 
\par
We now consider the case where $R \neq 0$, i.e., $R R^\top$ is a non-degenerate projection.
Our goal is to maximize $f(\alpha)$ subject to constraints $\alpha \in \cSperp$, $\| \alpha \|_2 \leq \Cker$. Let $\Rtilde$ be an orthonormal extension of $R$ such that $\range\ (R | \Rtilde) = \cSperp$. Then, we can parameterize $\alpha \in \cSperp$ as $\alpha = (R | \Rtilde) (\frac{w} {\wtilde})$ and the corresponding Lagrangian reads
\begin{align*}
    \mathcal{L}(\alpha, \lambda) &= \gammaprime \alpha^\top R R^\top \alpha - \gammaprime \alpha^\top R R^\top \betabar + \lambda(\Cker^2 - \| \alpha \|_2^2) \\ &= \gammaprime \| w \|_2^2 -\gammaprime w^\top R^\top \betabar + \lambda(\Cker^2 - \| (w, \wtilde) \|_2^2). 
\end{align*}
Differentiating with respect to $w, \wtilde$ yields
\begin{align*}
    w &= \frac{\gammaprime}{\gammaprime - \lambda} R^\top \betabar; \\
    \wtilde &= 0. 
\end{align*}
After differentiating w.r.t. $\lambda$, we obtain
$\frac{\gammaprime}{\gammaprime - \lambda} = \pm \frac{\Cker}{\| R^\top \betabar \|_2}$. By inserting in the objective function and comparing, we obtain the \textbf{value of the \idRR}: 
\begin{align}\label{eqn:proofs-id-robust-risk}
    \Lossrobpi(\betabar; \Invset, \Mtest) &= \gammaprime \Cker^2 + 2 \gammaprime \| R^\top \betabar \|_2 + \Lossrob(\betabar; \thetastarS, \Mtest) \\
    &= \gammaprime \Cker^2 + 2 \gammaprime \| R^\top \betabar \|_2 + \betabar^\top R R^\top \betabar + \gamma (\betastarS - \beta)^\top \Mseen (\betastarS - \beta) + \Loss_0 (\betabar,\thetastarS).
\end{align}
Putting together the two cases and simplifying, we obtain
\begin{align}\label{eqn:detailed-id-robust-risk}
\begin{split}
    \Lossrobpi(\betabar; \Invset, \Mtest) &= \gammaprime(\Cker + \| R^\top \betabar \|_2)^2 + \Lossrob(\betabar; \thetastarS, \gamma \Mseen) \\  &= \gammaprime  (\Cker + \| R^\top \betabar \|_2)^2 + \gamma (\betastarS - \betabar)^\top \Mseen (\betastarS-\betabar) + \Loss_0 (\betabar,\thetastarS), 
\end{split}
\end{align}
where $\Lossrob(\betabar; \thetastarS, \gamma \Mseen)$ is the robust risk of the estimator $\betabar$ w.r.t. the "identified" test shift $\gamma M$ and the identified model parameter $\thetastarS$, whereas $\Loss_0 (\betabar,\thetastarS)$ is the risk of $\betabar$ on the reference environment $e = 0$. 
\paragraph{Derivation of the lower bound for the \idRR.} Now that we have explicitly computed the \idRR, we devote ourselves to the computation of the lower bound for its best possible value
\begin{align*}
    \inf_{\betabar \in \R^d} \Lossrobpi(\betabar; \Invset, \Mtest). 
\end{align*}
In this part, we will only consider the case $R \neq 0$, since the case $R = 0$ corresponds to the (discrete) anchor regression-like setting, where both the robust risk and its minimizer are uniquely identifiable, and computable from training data. We will distinguish between two cases.
\paragraph{Case 1: $\| R^\top \betabar \|_2 = 0$.}  In this case, $\betabar$ is fully located in the orthogonal complement of $R$, which consists of $\cS$ and $\Rtilde$ (the orthogonal complement or $R$ in $\cSperp$). We will denote (the basis of) this subspace by $\Stot = \cS \oplus \Rtilde$. Thus, $\Stot$ is the "total" stable subspace consisting of identified directions in $\cS$ and non-identified, but unperturbed directions $\Rtilde$. We will parameterize $\betabar$ as $\betabar = \Stot w$. Thus, we are looking to solve the optimization problem 
\begin{align*}
   \betarobpi =  \argmin_{w} \, (\betastarS - \Stot w)^\top (\gamma \Mseen^\top + \noisecovxxstar) (\betastarS -  \Stot w) + 2 (\betastarS -  \Stot w)^\top \noisecovxyS + \noisecovyyS.
\end{align*}
Setting the gradient to zero yields the \emph{asymptotic worst-case robust estimator} 
\begin{equation}\label{eq:apx-pi-robust-formula}
\begin{aligned}
        \betarobpi &= \betastarS + \Stot [ \Stot^\top (\gamma \Mseen^\top + \noisecovxx) \Stot ]^{-1} \Stot^\top \noisecovxyS,
\end{aligned}
\end{equation}
which corresponds to the loss value of 
\begin{align*}
    \Lossrobpi(\betarobpi; \Invset, \Mtest) = \gammaprime \Cker^2 +  \noisecovyyS - 2 {\noisecovxyS}^\top \Stot [ \Stot^\top (\gamma \Mseen^\top + \noisecovxx) \Stot ]^{-1} \Stot^\top \noisecovxyS.
\end{align*}
As we observe, this quantity grows linearly in $\gammaprime$. However, as $\gamma \to \infty$, the quantity \emph{saturates} and is upper-bounded by $\noisecovyyS$.
\paragraph{Case 2:$\| R^\top \betabar \|_2 \neq 0$.} Since for $\| R^\top \betabar \|_2 \neq 0$, the objective function is differentiable, we compute its gradient to be
\begin{align*}
    \nabla  \Lossrobpi(\beta; \Invset, \Mtest) &= 2 \gammaprime R R^\top \beta / \| R R^\top \beta \| + 2 \gammaprime R R^\top \beta + \nabla \Lossrob(\beta; \thetastarS, \gamma \Mseen) \\ 
    &= 2 \gammaprime R R^\top \beta / \| R R^\top \beta \| + 2 \gammaprime R R^\top \beta  + 2(\noisecovxxstar + \gamma \Mseen) (\beta - \betastarS) - 2 \noisecovxyS. 
\end{align*}
This equation is, in general, not solvable w.r.t. $\beta$ in closed form. Instead, we provide the limit of the optimal value of the function when the strength of the unseen shifts is small, i.e. $\gammaprime \to 0$. We know that for $\gammaprime = 0$, the minimizer of the worst-case robust risk is given by the anchor estimator
\begin{align*}
    \betaa = \betastarS + (\noisecovxxstar + \gamma \Mseen)^{-1} \noisecovxyS. 
\end{align*}
Instead, we lower bound the non-differentiable term $2 \gammaprime \Cker \| R^\top \beta \|$ by the scalar product $2 \gammaprime \Cker \scalar{R^\top \beta}{R^\top \betaa}/ \| \betaa \|$ and expect it to be tight for small $\gammaprime$. After inserting this lower bound in \cref{eqn:proofs-id-robust-risk} we obtain the minimizer of the lower bound of form
\begin{align*}
    \beta_{LB} = \betastarS + (\noisecovxxstar + \gamma M + \gammaprime R R^\top)^{-1}(\noisecovxyS - \gammaprime \Cker R R^\top (\noisecovxxstar + \gamma M)^{-1} \noisecovxyS).
\end{align*}
We can now lower bound $\| R R^\top \beta_{LB} \|$ as 
\begin{equation}\label{eqn:small-gammaprime-lower-bound}
    \| R R^\top \beta_{LB} \| \geq \| R R^\top (\noisecovxxstar + \gamma M)^{-1} \noisecovxystar \| - \gammaprime \cdot \text{const}.
\end{equation}
Thus, the $\gammaprime$-rate of the \idRRs of $\beta_{LB}$ is at least $\gammaprime (\Cker + \| R R^\top (\noisecovxxstar + \gamma M)^{-1} \noisecovxystar \|)^2 + \mathcal{O}(\gammaprime^2)$,
from which the claim for small $\gammaprime$ follows. For \cref{sec:comp-with-finite-robustness-methods}, the lower bound directly implies optimality of the worst-case robust risk of the anchor estimator when the strength of the unseen shifts $\gammaprime$ is small. Additionally. if $\gamma = 0$, i.e. only unseen test shifts occur, we conclude that the OLS and anchor estimators have the same rates. 
\paragraph{Lower bound $\gammath$ for $\gammaprime$.}
Finally, we want to derive a lower bound on the shift strength  $\gammaprime$ such that for all $\gammaprime \geq \gammath$ Case 1 of our proof is valid, i.e. it holds that $\betarobpi$ is given by the closed form "abstaining" estimator \eqref{eq:apx-pi-robust-formula}. For this, we find $\gammath$ such that for all $\gammaprime \geq \gammath$ zero is contained in the subdifferential of$\Lossrobpi(\betarobpi;\Invset,\Mtest)$ at $\betarobpi$. Then the KKT conditions are met, and $\betarobpi$ is the unique minimizer of the worst-case robust risk due to strong convexity of the objective. We compute the subdifferential to be
\begin{align*}
    S = \gammaprime \Cker \{ R R^\top \beta: \| \beta \|_2 \leq 1 \} + \nabla \Lossrob(\betarobpi; \thetastarS, \gamma M).
\end{align*}
Since $\betarobpi$ is the minimizer of $ \Lossrob(\beta; \thetastarS, \gamma \Mseen)$ under the constraint $R^\top \beta = 0$, the gradient is zero in $R^\perp$ and it remains to show that 
\begin{align*}
    \| R R^\top \nabla \Lossrob(\betarobpi; \thetastarS, \gamma \Mseen) \| \leq \gammaprime \Cker,
\end{align*}
or 
\begin{align*}
    \gammaprime \geq \| R R^\top \nabla \Lossrob(\betarobpi; \thetastarS, \gamma \Mseen) \| / \Cker. 
\end{align*}
Via an upper bound on the projected gradient, we derive the stricter condition
\begin{align*}
    \gammaprime \geq \frac{\| R R^\top \noisecovxyS\| (1 + \kappa(\noisecovxxstar)) }{\Cker},
\end{align*}
where $\kappa(\noisecovxxstar)$ is the condition number of the covariance matrix. 

\subsection{Proof of \cref{cor:estimators}}\label{sec:apx-proof-of-corollary}
To obtain a new formulation for the \idRR, we start with \eqref{eqn:detailed-id-robust-risk} and expand 
\begin{align}
\begin{split}
    \Lossrobpi(\betabar; \Invset, \Mtest) &= \gammaprime  (\Cker + \| R^\top \betabar \|_2)^2 + \gamma (\betastarS - \betabar)^\top \Manchor (\betastarS-\betabar) + \Loss_0 (\betabar,\thetastarS) \\
    &= \gammaprime  (\Cker + \| R^\top \betabar \|_2)^2 + \gamma (\betastarS - \betabar)^\top \Manchor (\betastarS-\betabar) \\&+ (\betastarS - \betabar)^\top \noisecovxxstar (\betastarS-\betabar) + 2 (\betastarS - \beta)\noisecovxyS + \noisecovyyS \\ &= \gammaprime  (\Cker + \| R^\top \betabar \|_2)^2 + (\gamma - 1)(\betastarS - \betabar)^\top \Manchor (\betastarS-\betabar) + \Loss(\beta,\ptrain) \\
    &= \Lossrob(\beta, \thetastarS, \gamma \Manchor) + \gammaprime  (\Cker + \| R^\top \betabar \|_2)^2,
\end{split}
\end{align}
where we have used that the pooled second moment of $X$ equals to $\noisecovxxstar + \sum_e w_e (\mu_e \mu_e^\top) = \noisecovxxstar + \gamma \Manchor - (\gamma - 1) \Manchor$.
This reformulation shows that the \idRRs is equal to the anchor population loss (cf. \cite{rothenhausler2021anchor}) with an additional non-identifiability penalty term $\gammaprime  (\Cker + \| R^\top \betabar \|_2)^2$. 

We now want to evaluate the rates of the anchor and OLS estimators in terms of the magnitude $\gammaprime$ of unseen shift directions. We observe that only the non-identifiability term depends on $\gammaprime$, whereas the second term only depends on $\gamma$. First, we compute the closed-form anchor regression estimator, which reads 
\begin{equation}
    \betaa = \argmin_{\beta \in \R^d} \Lossrob(\beta, \thetastarS, \gamma \Manchor) = \betastarS + (\noisecovxxstar + \gamma \Manchor)^{-1} \noisecovxyS.
\end{equation}
Since $\betaOLS$ equals to the anchor estimator with $\gamma = 1$, we obtain 
\begin{equation*}
    \betaOLS = \betastarS + (\noisecovxxstar + \Manchor)^{-1} \noisecovxyS.
\end{equation*}
The claim of the corollary now follows by computing $\| R R^\top \betaa \|$ and $\| R R^\top \betaOLS \|$ and observing that the rest of the terms is constant in $\gammaprime$. Additionally, we observe that $\betaa$ is the minimizer of $\Lossrob(\beta, \thetastarS, \gamma \Manchor)$, and it is known (cf. e.g. \cite{rothenhausler2021anchor}) that $\Lossrob(\betaa, \thetastarS, \gamma \Manchor)$ is asymptotically constant in $\gamma$ and upper bounded by $\noisecovyyS$. On the other hand, the term $\Lossrob(\betaOLS, \thetastarS, \gamma \Manchor)$ is linear in $\gamma$. In total, we obtain 
\begin{equation*}
    \begin{aligned}
        \Lossrobpi(\betaa; \Invset, \Mtest) &= (\Cker + \| R R^\top \betaa \| )^2\gammaprime + c_1(\gamma); \\ 
        \Lossrobpi(\betaOLS; \Invset,\Mtest) &= (\Cker + \| R R^\top \betaOLS \| )^2\gammaprime + c_2(\gamma),
    \end{aligned}
\end{equation*}
where $c_1(\gamma) \leq \noisecovyyS$ and $c_2(\gamma) = \Omega(\gamma)$.
Comparing to the lower bound \eqref{eqn:small-gammaprime-lower-bound} for the minimax quantity for the case of $\gammaprime \to 0$, we observe that the anchor estimator is optimal (achieves the minimax rate) in the limit $\gammaprime \to 0$. Additionally, if $\gamma = 0$ (only new shifts occur during test time), anchor and OLS have identical rates in $\gammaprime$ and, in particular, OLS (corresponding to vanilla empirical risk minimization) is minimax-optimal in the limit of small unseen shifts. 
In the proof of \cref{thm:pi-loss-lower-bound} in \cref{sec:apx-proof-of-main-prop}, we show that for $\gammaprime \geq \gammath$, it holds that $R R^\top \betarobpi$ = 0, and thus the worst-case robust risk of the worst-case robust predictor equals 
\begin{equation*}
    \Lossrobpi(\betarobpi; \Invset, \Mtest) = \gammaprime \Cker^2 +  \noisecovyyS - o(\gamma) = \gammaprime \Cker^2 + c_3(\gamma),
\end{equation*}
where $c_3(\gamma) \leq \noisecovyyS$.
In total, we observe that the worst-case robust risk of \emph{all} considered prediction models grows linearly with the unseen shift strength $\gammaprime$, albeit with different rates. The terms $\| R R^\top \betaa \|$ and $\| R R^\top \betaOLS \|$ can be particularly large, for instance, when there is strong confounding aligned with the unseen shift directions which causes the empirical risk minimizer (OLS) to have a strong signal in these directions. The worst-case robust predictor $\betarobpi$, however, abstains in these directions, thus achieving a smaller rate.

\end{document}